\def\hb{\hbox to 10.7 cm{}}
\newif\iflong
\newtheorem{theorem}{Theorem}
\newtheorem{lemma}[theorem]{Lemma}
\newtheorem{proposition}[theorem]{Proposition}
\theoremstyle{definition}
\newtheorem{definition}{Definition}
\newtheorem{example}{Example}
\tikzstyle{arg}=[draw, thick, circle, fill=gray!15,inner sep=3pt]
\newcommand{\nop}[1]{}
\newcommand{\parents}[1]{\ensuremath{\mathit{par}(#1)}\xspace}
\newcommand{\ileq}{\ensuremath{\leq_i}\xspace}
\newcommand{\lin}{\ensuremath{\mathtt{in}}\xspace}
\newcommand{\lout}{\ensuremath{\mathtt{out}}\xspace}
\newcommand{\lundec}{\ensuremath{\mathtt{undec}}\xspace}
\newcommand{\Lab}{\mathcal{L}\xspace}
\newcommand{\lab}{\mathbb{L}\xspace}
\newcommand{\update}[3]{#1|^{#2}_{#3}}
\newcommand{\SFSADF}{\text{SFSADF}}
\newcommand{\SETAF}{\text{SETAF}}
\newcommand{\SETADF}{\text{SETADF}}
\newcommand{\SFADF}{\text{SFADF}}
\def\ac{\varphi}
\newcommand{\tvt}{\ensuremath{\mathbf{t}}\xspace}
\newcommand{\tvf}{\ensuremath{\mathbf{f}}\hspace*{0.005mm}\xspace}
\newcommand{\tvu}{\ensuremath{\mathbf{u}}\xspace}
\newcommand{\cf}{\textit{cf}}
\newcommand{\adm}{\textit{adm}}
\newcommand{\prf}{\textit{prf}}
\newcommand{\pref}{\prf}
\newcommand{\com}{\textit{com}}
\newcommand{\comp}{\com}
\newcommand{\grd}{\textit{grd}}
\newcommand{\stb}{\textit{stb}}
\newcommand{\model}{\textit{mod}}
\newcommand{\naive}{\textit{nai}}
\newcommand{\Args}[1]{\ensuremath{\textsc{Args}_{#1}}}
\title{Expressiveness of SETAFs and Support-Free~ADFs under\\ 3-valued Semantics}
		\author{Wolfgang Dvo\v{r}\'{a}k $^a$ \and
		Atefeh Keshavarzi Zafarghandi $^b$ \and
		Stefan Woltran $^a$\medskip\\
		$^a$ Institute of Logic and Computation, TU Wien, Austria\\
		$^b$ Department of Artificial Intelligence, Bernoulli Institute,\\ University of Groningen, The Netherlands
		}
\begin{document}
	  \maketitle
	

		\begin{abstract}
			Generalizing the attack structure in 
			argumentation frameworks (AFs)
			has been studied in different ways. 
			Most prominently, the binary
			attack relation of Dung frameworks has been extended to the 
			notion of collective attacks. The resulting 
			formalism is often termed SETAFs. Another approach is 
			provided via abstract dialectical frameworks (ADFs), where
			acceptance conditions specify the relation between
			arguments; restricting these conditions naturally allows for
			so-called support-free ADFs. The aim of the paper
			is to shed light on the relation between these two
			different approaches. To this end,
			we investigate and compare the expressiveness of SETAFs 
			and support-free ADFs	
			under the lens of 3-valued semantics.
			Our results show that it is only the presence of
			unsatisfiable acceptance conditions in support-free ADFs
			that discriminate the two approaches.
		\end{abstract}
		
	
	\section{Introduction}
	Abstract argumentation frameworks (AFs) as introduced by Dung~\cite{Dung95}
	are a core formalism in formal argumentation.
	A popular line of research investigates extensions  of Dung AFs
	that allow for a richer syntax (see, e.g.~\cite{DBLP:journals/expert/BrewkaPW14}).
	In this work we investigate two generalisations of Dung AFs that allow for a more flexible attack structure (but do not consider support between arguments).
	
	The first formalism we consider are SETAFs as introduced by Nielsen and Parsons~\cite{nielsen2006generalization}. SETAFs extend Dung AFs by allowing for 
	collective attacks such that a set of arguments $B$ attacks another argument $a$ but no proper subset of $B$
	attacks $a$. 
	Argumentation frameworks with collective attacks have received increasing interest in the last years. For instance, 
	semi-stable, stage, ideal, and eager semantics have been adapted to SETAFs in~\cite{DvorakFW19,flouris2019comprehensive}; 
	translations between SETAFs and other abstract argumentation formalisms are studied in \cite{Polberg17};
	\cite{YunVC18} observed that for particular instantiations, SETAFs provide
	a more convenient target formalism than Dung AFs.
	The expressiveness of SETAFs with two-valued semantics has been investigated 
	in~\cite{DvorakFW19} in terms of signatures. Signatures 
	have been introduced in \cite{DunneDLW15} for AFs. In general terms, 
	a signature for a formalism and a semantics captures all possible 
	outcomes that can be obtained by the instances of the formalism under the considered semantics.
	Besides that, signatures are recognized as crucial for operators in dynamics of argumentation (cf.\ \cite{BaumannB19}).
	
	The second formalism we consider are support-free abstract dialectical frameworks (SFADFs), a subclass of 
	abstract dialectical frameworks (ADFs)~\cite{
		BrewkaESWW18} which are known as an advanced abstract formalism for 
	argumentation, that is able to cover several generalizations of AFs~\cite{DBLP:journals/expert/BrewkaPW14,Polberg17}. 
	This is accomplished
	by acceptance conditions which specify, for each argument, its relation
	to its neighbour arguments via propositional formulas. These conditions
	determine the links between the arguments which can be, in particular, 
	attacking or supporting.
	SFADFs are ADFs where each link between
	arguments is attacking; they have been introduced
	in a recent study on 
	different sub-classes of ADFs~\cite{DBLP:journals/argcom/DillerZLW20}.
	
	For comparison of the two formalisms, we need to focus on 
	3-valued (labelling) semantics~\cite{verheij1996two,CaminadaG09}, which are integral for ADF semantics~\cite{BrewkaESWW18}.
	In terms of SETAFs, we can rely on the recently introduced labelling
	semantics in~\cite{flouris2019comprehensive}. We first define a new
	class of ADFs (SETADFs) where the acceptance conditions strictly 
	follow the nature of collective attacks in SETAFs and show 
	that SETAFs and SETADFs 
	coincide for the main semantics, i.e.\
	the $\sigma$-labellings of a SETAF are equal to the $\sigma$-interpretations
	of the corresponding SETADF.
	We then provide 
	exact characterisations of the 3-valued signatures for SETAFs 
	(and thus for SETADFs) for most of the semantics under consideration.
	While
	SETADFs are a syntactically defined subclass of ADFs, the second
	formalism we study
	can be understood as semantical subclass
	of ADFs. In fact, for SFADFs it is not the syntactic structure of acceptance conditions
	that is restricted but their semantic behavior, in the sense that
	all links need to be attacking. The second main contribution of the
	paper is to 
	determine the exact difference in expressiveness between SETADFs and SFADFs.
	
	We briefly discuss related work.
	The expressiveness of SETAFs has first been investigated in~\cite{LinsbichlerPS16} where different sub-classes of ADFs,
	i.e.\ AFs, SETAFs and Bipolar ADFs, are related w.r.t.\ their signatures of 3-valued semantics. Moreover, they provide an 
	algorithm to decide realizability in one of the formalisms under 
	different semantics. However, no explicit characterisations of the signatures are given.
	Recently, P\"{u}hrer~\cite{Puhrer20} presented explicit characterisations of the signatures of general ADFs
	(but not for the sub-classes discussed above).
	In contrast, \cite{DvorakFW19} provides explicit characterisations of the two-valued signatures of SETAFs and 
	shows that SETAFs are more expressive than AFs. 
	In both works all arguments are relevant for the signature, while
	in~\cite{flouris2019comprehensive} it is shown that when allowing to add extra arguments to an AF
	which are not relevant for the signature, i.e.\ the extensions/labellings are projected on common arguments, then SETAFs and AFs are of equivalent expressiveness.
	Other recent work
	\cite{Wallner19} already implicitly showed
	that SFADFs with satisfiable acceptance conditions 
	can be equivalently represented as SETAFs.  This provides a sufficient condition for rewriting an ADF as SETAF and raises the question 
	whether it is also a necessary condition. In fact, we will show that a SFADF has an equivalent SETAF if and only if all acceptance conditions are satisfiable.
	Different sub-classes of ADFs (including SFADFs) have been compared in~\cite{DBLP:journals/argcom/DillerZLW20}, but no exact characterisations of signatures 
	as we provide
	here are given in that work.
	
	\noindent To summarize, the main contributions of our paper are as follows:
	
	\begin{itemize}
		\item We embed SETAFs under 3-valued labeling based semantics~\cite{flouris2019comprehensive} in the more general
		framework of ADFs. That is, we show 3-valued labeling based SETAF semantics 
		to be equivalent to the corresponding ADF semantics. 
		As a side result, this also shows the equivalence of the 3-valued SETAF semantics in \cite{LinsbichlerPS16}  and \cite{flouris2019comprehensive}.
		\item We investigate the expressiveness of SETAFs under 3-valued semantics by providing exact 
		characterizations of the signatures for preferred, stable, grounded and conflict-free semantics, 
		thus complementing the investigations on expressiveness of 
		SETAFs~\cite{DvorakFW19} in terms of extension-based semantics.
		\item We study the relations between SETAFs and support-free ADFs (SFADFs). In particular we give the exact difference in expressiveness between SETAFs and SFADFs
		under conflict-free, admissible, preferred, grounded, complete, stable and two-valued model semantics. 
	\end{itemize}
	
	\noindent Some technical details had to be omitted but are available in an appendix.
	
	\section{Background}
	In this section we briefly recall the necessary definitions for SETAFs and ADFs.
	\begin{definition}\label{def: SETAFS}
		A set argumentation framework (SETAF) is an ordered pair  $F= (A, R)$, where $A$ is a finite set of 
		arguments 
		and $R\subseteq (2^A\setminus\{\emptyset\})\times A$ is the attack relation.  
	\end{definition}
	
	\nop{	
		The semantics of SETAFs can now also be defined similarly to AFs via a characteristic operator. 
		With a slight abuse of notation, we thus define first of all also for a SETAF $F=(A,R)$, 
		
		$\Gamma_F(S)=\{a\in A\ |\ a \text{ is defended by } S \text{ in } F\}$; here the notion of ``defense'' clearly being that defined for SETAFs.  For completeness we detail the definitions of all semantics we consider in this work for SETAFs, although the definitions are exactly as those for AFs (modulo the use of the more general notions of attack and the characteristic operator for SETAFs): 
		
		\begin{definition}\label{def:semanticsSETAF}
			Let $F=(A, R)$ be a SETAF.  A set  $S$ which is conflict-free in $F$ is
			\begin{itemize}
				\item \emph{naive} in $F$ iff S is $\subseteq$-maximal among all conflict-free sets;
				\item  \emph{admissible} in $F$ iff $S \subseteq \Gamma_F(S)$;
				\item  \emph{complete} in $F$ iff $S = \Gamma_F(S)$;
				\item  \emph{grounded} in $F$ iff $S$ is the $\subseteq$-least fixed-point of $\Gamma_F$; 
				\item  \emph{preferred} in $F$ iff $S$ is $\subseteq$-maximal admissible (resp.\ complete) in $F$;
				\item  \emph{stable} in $F$ iff 
				for all $a\in A\setminus S$, $S$ attacks $a$.
			\end{itemize}
	\end{definition}}
	
	The semantics of SETAFs are usually defined similarly to AFs, i.e., based on extensions. However, in this work we focus on 3-valued labelling based semantics, cf.~\cite{flouris2019comprehensive}.  
	
	\begin{definition}
		A (3-valued) labelling of a SETAF $F=(A, R)$ is a total function $\lambda: A \mapsto \{\lin, \lout, \lundec\}$.
		For $x \in \{\lin, \lout, \lundec\}$ we write $\lambda_x$ to denote the sets of arguments $a\in A$ with $\lambda(a)=x$.
		We sometimes denote labellings $\lambda$ as triples $(\lambda_\lin,\lambda_\lout,\lambda_\lundec)$.
	\end{definition}
	
	\begin{definition}\label{def:semanticsSETA_labF}
		Let $F=(A, R)$ be a SETAF. 
		A labelling is called conflict-free in $F$ if 
		(i)~for all $(S,a) \in R$ either $\lambda(a) \not= \lin$ or there is a $b \in S$ with $\lambda(b) \not= \lin$, and
		(ii)~for all $a \in A$, if $\lambda(a) = \lout$ then there is an attack $(S,a) \in R$ such that $\lambda(b) = \lin$ for all $b \in S$.
		A labelling  $\lambda$ which is conflict-free in $F$ is
		\begin{itemize}
			\item  \emph{admissible} in $F$ iff 
			for all $a \in A$ if $\lambda(a) = \lin$ then for all  $(S,a) \in R$ 
			there is a $b \in S$ such that $\lambda(b) = \lout$;
			\item  \emph{complete} in $F$ iff for all $a \in A$ 
			(i)  $\lambda(a) = \lin$ iff for all  $(S,a) \in R$ 
			there is a $b \in S$ such that $\lambda(b) = \lout$, and
			(ii)  $\lambda(a) = \lout$ iff there is an attack  $(S,a) \in R$ 
			such that $\lambda(b) = \lin$ for all $b \in S$;			
			\item  \emph{grounded} in $F$ iff it is complete 
			and 
			there is no $\lambda'$ with $\lambda'_{\lin} \subset \lambda_{\lin}$ complete in $F$;
			\item  \emph{preferred} in $F$ iff it is complete 
			and there is no $\lambda'$ with $\lambda'_{\lin}\! \supset\! \lambda_{\lin}$ complete in $F$;
			\item  \emph{stable} in $F$ iff $\lambda_{\lundec}=\emptyset$.
		\end{itemize}
	\end{definition}
	\noindent
	The set of all $\sigma$ labellings for a SETAF $F$ is denoted by $\sigma_\Lab(F)$, where $\sigma\in\{\cf, \adm$, $\comp, \grd, \pref, \stb\}$ abbreviates the different semantics in the obvious manner. 
	
	\nop{
		
		\textbf{Remark:} Differences to the definitions in \cite{LinsbichlerPS16}: \cite{LinsbichlerPS16} defines admissible, complete, preferred, and mod as 3-valued interpretations (not labellings), i.e.\ grounded, and stable are missing. preferred is defined via admissible.
		They already provide equivalence results for their semantics and the corresponding ADF semantics.
		
		The basic
		definitions of ADFs and semantics of ADFs  are derived from those given in~\cite{
			BrewkaESWW13}.
	}
	
	\begin{example}\label{exp: SETAF}
		The SETAF $F=(\{a,b, c\}, \{(\{a, b\}, c), (\{a,c\}, b)\})$ is depicted in Figure~\ref{fig: SETAF}. For instance,  $(\{a, b\}, c)\in R$ says that  there is a joint attack from $a$ and $b$ to $c$.  
		This represents that neither $a$ nor $b$ is strong enough to attack $c$ by themselves. Further,  $\{a\mapsto\lin, b\mapsto\lundec, c\mapsto\lin\}$ is an instance of a  conflict-free labelling, that is  not an admissible labelling (since $ c$ is mapped to $\lin$ but neither $a$ nor $b$ is mapped to $\lout$). The labelling that maps all argument to $\lundec$ is not a complete labelling, however, it is an admissible labelling. Further, $\{a\mapsto\lin, b\mapsto\lundec, c\mapsto\lundec\}$ is an admissible, the unique grounded and a complete labelling, which is not a preferred labelling because $\lambda_{\lin}=\{a\}$ is not $\subseteq$-maximal among all complete labellings. Moreover, $\prf_\Lab(F)= \stb_\Lab(F)= \{\{a\mapsto\lin, b\mapsto\lout, c\mapsto\lin\},  \{a\mapsto\lin, b\mapsto\lin, c\mapsto\lout\}\}$. 
		
		\begin {figure}[t]
		\centering
		\begin {tikzpicture}[-,>=stealth,node distance=0.6cm,
		thick,main node/.style={circle,fill=none,draw,minimum size = 0.4cm,font=\normalsize\bfseries},
		condition/.style={fill=none,draw=none,font=\small\bfseries},scale=0.5]
		\path
		(0,0)node[main node] (A) {$a$}
		++(-1.5,-2.5) node[main node] (B) {$b$}
		++(2.8,0)node[main node] (C) {$c$};
		
		\path (A) edge  (0.30,-1.30);
		\path (C) edge  (0.30,-1.30);
		\path[->] (0.30,-1.30) edge  (B);
		\path (A) edge  (-0.30,-1.30);
		\path (B) edge  (-0.30,-1.30);
		\path[->] (-0.30,-1.30) edge  (C);		
		
	\end{tikzpicture}
	\caption{The SETAF of Example \ref{exp: SETAF}.}
	\label{fig: SETAF}
\end{figure}
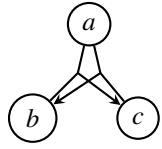  
\end{example}
\nop{
Notice that \emph{Dungs Abstract Argumentation Frameworks (AFs)}~\cite{Dung95} and their semantics 
can be identified with SETAFs whose attacks are restricted a single argument attacking an argument~\cite{nielsen2006generalization}.
That is all attacks are of the form $({b},a)$ for some arguments $a,b$ (in the setting of Dung AFs such attacks are then denoted
by pairs of arguments $(b,a)$).
}

We next turn to abstract dialectical frameworks~\cite{
BrewkaESWW13}.

\begin{definition}
An abstract dialectical framework (ADF) is a tuple $D = (S, L, C)$
where:
\begin{itemize}
	\item $S$ is a finite set of arguments (statements, positions);
	\item $L\subseteq S \times S$ is a set of links among arguments;
	\item $C = \{\varphi_s\}_{s\in S}$ is a collection of propositional formulas over arguments, called acceptance conditions.
\end{itemize}
\end{definition}
\vspace{-0.2cm}
An ADF can be represented by a graph in which nodes indicate arguments and links show the relation among arguments. Each argument $s$ in an ADF is attached by a propositional formula, called acceptance condition,  $\varphi_s$ over $\parents s$ such that, $\parents s=\{ b\ |\ (b, s)\in L\}$.  
Since in ADFs an argument appears in the acceptance condition of an argument $s$ if and only if it belongs to the set $\parents{s}$, the set of links $L$ of an ADF is given implicitly via the acceptance conditions. 
The acceptance condition of each argument clarifies under which condition the argument can be accepted and determines the type of links   
(see Definition~\ref{def: type of links} below).
An \emph{interpretation} $v$ (for $F$) is a function $v : S \mapsto \{\tvt, \tvf , \tvu\}$, that maps
arguments to one of the three truth values
true ($\tvt$), false ($\tvf$), or undecided ($\tvu$). Truth values can be ordered via information ordering relation $<_i$ given by  $\tvu <_i \tvt$ and $\tvu <_i \tvf$ and no other pair of truth values are related by $<_i$. Relation $\leq_i$ is the reflexive and transitive closure of $<_i$.
An interpretation $v$ is \emph{two-valued} if it maps each argument to either $\tvt$ or $\tvf$.
Let $\mathcal{V}$ be the set of all interpretations for an ADF $D$. Then, we call a subset of all interpretations of the ADF, $\mathbb{V}\subseteq \mathcal{V}$, an \textit{interpretation-set}.
Interpretations can be ordered via $\leq_i$ with respect to  their information content,
i.e.\ $w \leq_i v$ if $w(s) \leq_i v(s)$ for each $s \in S$.
\nop{It is said that an interpretation $v$  is an \textit{extension}\ of another interpretation $w$, if $w(s) \leq_i v(s)$ for each $s \in S$, denoted by $w \leq_i v$. Interpretations $v$ and $w$ are incomparable if neither $w\nleq_i v$ nor $v \nleq_i w$, denoted by $w \not\sim v$.}  Further,  we denote the update of an interpretation $v$ with a truth value $x\in\{\tvt, \tvf, \tvu\}$ for an argument $b$ by
${v}|^{b}_{x}$,
i.e.\ ${v}|^{b}_{x}(b) = x$ and ${v}|^{b}_{x}(a) = v(a)$ for $a\neq b$.
Finally, the partial valuation of acceptance condition $\varphi_s$ by $v$, is given by
$\varphi_s^v = v(\ac_s) = \varphi_s[p/\top : v(p)=\tvt][p/\bot : v(p)=\tvf]$,  for $p\in\parents{s}$.

Semantics for ADFs can be defined via a
\emph{characteristic operator} $\Gamma_D$ for an ADF $D$. 
Given an interpretation $v$ (for $D$), the characteristic operator $\Gamma_D$ for $D$ is defined as
\[
\Gamma_D(v) = v' \text{ such that } v'(s)=\begin{cases}
\tvt &\quad\text{if $\varphi_s^v$} \text{ is irrefutable (i.e.,  a tautology)},\\
\tvf &\quad\text{if $\varphi_s^v$}\text{ is unsatisfiable}, \\
\tvu &\quad\text{otherwise.}
\end{cases}
\]

\begin{definition}
\label{def:sem:adfs}
Given an ADF $D=(S, L, C)$, an interpretation $v$ is
\begin{itemize}
	\item \emph{conflict-free} in $D$ iff $v(s)= \tvt$ implies $\varphi_s^v$ is satisfiable and $v(s)=\tvf$ implies $\varphi_s^v$ is unsatisfiable;
	\item  \emph{admissible} in $D$ iff $v \ileq \Gamma_D(v)$;
	\item  \emph{complete} in $D$ iff $v = \Gamma_D(v)$;
	\item  \emph{grounded} in $D$ iff $v$ is the least fixed-point of $\Gamma_D$; 
	\item  \emph{preferred} in $D$ iff $v$ is $\ileq$-maximal admissible  in $D$;
	\item  a \emph{(two-valued) model} of $D$ iff $v$ is two-valued and for all $s \in S$, it holds that $v(s)=v(\varphi_s)$;
	\item  a \emph{stable model} of $D$ if $v$ is a model of $D$ and $v^\tvt=w^\tvt$, where $w$ is
	the grounded interpretation of the $\stb$-reduct $D^v = (S^v, L^v, C^v)$, where
	$S^v=v^\tvt$,
	$L^v=L\cap(S^v \times S^v)$, and
	$\ac_s[p/\bot : v(p)=\tvf]$ for each $s \in S^v$.
\end{itemize}
The set of all $\sigma$ interpretations for an ADF $D$ is denoted by $\sigma(D)$, where $\sigma\in\{\cf, \adm$, $\comp, \grd, \pref,\model, \stb\}$ abbreviates the different semantics in the obvious manner. 
\end{definition}
\nop{

Intuitively, the idea of defining stable models of ADFs follows the idea of stable models of logic programming, that breaks self-justify support cycles.
In fact, in ADF $D$, a model $v$ is a stable model if there exists a constructive proof for all arguments assigned to true in $v$, if all arguments which are assigned to false in $v$ are actually false. 
Since in AFs and SETAFs there is no direct support link, stable models and models are equal. The relation among semantics of  ADF $D$ are as follows: $\stb(D)\subseteq\model(D)\subseteq\prf(D)\subseteq\com(D)\subseteq\adm(D)\subseteq\cf(D)$, further, $\grd(D)\subseteq\com(D)$. The same as AF each ADF contains at least one admissible, preferred, complete, and grounded interpretation, however the existence of stable models, and models respectively, is not guaranteed. 

}

\begin{example}\label{exp: ADF}
An example of an ADF $D=(S,L,C)$ is shown in Figure \ref{fig: ADF}. To each argument a propositional formula is associated, the acceptance condition of the argument.  
For instance, the acceptance condition of $c$, namely $\varphi_c: \neg a\lor \neg b$, states that $c$ can be accepted in an interpretation where either $a$ or $b$ (or both) are rejected. 

In $D$ the interpretation $v=\{a\mapsto\tvu, b\mapsto \tvu, c\mapsto\tvt\}$ is conflict-free.  
However, $v$ is not an admissible interpretation, because $\Gamma_D(v)=\{a\mapsto\tvu, b\mapsto\tvu, c\mapsto\tvu\}$, that is, $v\not\leq_i\Gamma_D(v)$. The interpretation $v_1=\{a\mapsto\tvf, b\mapsto\tvt, c\mapsto\tvu\}$ on the other hand is an admissible interpretation. Since $\Gamma_D(v_1)=\{a\mapsto\tvf, b\mapsto\tvt, c\mapsto\tvt\}$ and $v_1\leq_i\Gamma_D(v_1)$.  Further, $\prf(D) =\model(D)=\{ \{a\mapsto\tvt, b\mapsto\tvf, c\mapsto\tvt\}, \{a\mapsto\tvf, b\mapsto\tvt, c\mapsto\tvt\}\}$, but only the first interpretation in this set is a stable model. This is because for $v=\{a\mapsto\tvt, b\mapsto\tvf, c\mapsto\tvt\}$ the unique grounded interpretation $w$ of $D^v$ is $\{a\mapsto\tvt, c\mapsto\tvt\}$ and $v^\tvt= w^\tvt$.  The interpretation  $v'= \{a\mapsto\tvf, b\mapsto\tvt, c\mapsto\tvt\}$ is not a stable model, since the unique grounded interpretation $w'$ of $D^{v'}$ is $\{b\mapsto \tvu, c\mapsto\tvt\}$ and $v'^\tvt\not=w'^\tvt$. Actually, $v'$ is not a stable model because the truth value of $b$ in $v'$ is since of self-support. Moreover, the unique grounded interpretation of $D$ is $v=\{a\mapsto\tvu, b\mapsto\tvu, c\mapsto\tvu\}$.   
In addition, we have 
$\com(D)=\prf(D)\cup\grd(D)$.

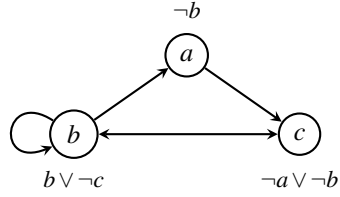
\begin {figure}[t]
\centering
\begin {tikzpicture}[->,>=stealth,node distance=0.6cm,
thick,main node/.style={circle,fill=none,draw,minimum size = 0.4cm,font=\normalsize\bfseries},
condition/.style={fill=none,draw=none,font=\small\bfseries}, yscale=0.7]
\path
(0,0)node[main node] (B) {$b$}
++(1.5,1.5) node[main node] (A) {$a$}
++(1.5,-1.5)node[main node] (C) {$c$};

\path (B) edge  (A);
\path (A) edge  (C);
\path[<->] (B) edge  (C);
\path [loop left,thick, distance=0.9cm, in =210, out=140, ->] (B) edge (B);

\node[condition](ca) [above of= A] {$\neg b$};
\node[condition](cb) [below of= B] {$b\lor \neg c$};
\node[condition](cc) [below of= C] {$ 
	\neg a\lor \neg b$};

\end{tikzpicture}
\caption{The ADF of Example \ref{exp: ADF}.}
\label{fig: ADF}
\end{figure}
\end{example}

In ADFs links between arguments  can be classified into four types, reflecting the relationship of attack and/or support that exists among the arguments.  
In Definition~\ref{def: type of links} we consider two-valued interpretations that are only defined over the parents of $a$, that is,   only give  values to $\parents{a}$.
\begin{definition}\label{def: type of links}
Let $D=(S,L,C)$ be an ADF.
A link $(b,a)\in L$ is called 
\begin{itemize}
\item \emph{supporting} (in $D$)
if for every two-valued interpretation $v$ of $\parents{a}$, $v(\ac_a)=\tvt$ implies $\update{v}{b}{\tvt}(\ac_a)=\tvt$;
\item \emph{attacking} (in $D$)
if for every two-valued interpretation $v$ of $\parents{a}$, $v(\ac_a)=\tvf$ implies $\update{v}{b}{\tvt}(\ac_a)=\tvf$;
\item \emph{redundant} (in $D$) if it is both attacking and supporting;
\item \emph{dependent} (in $D$) if it is neither attacking nor supporting.
\end{itemize} 
\end{definition}

The classification of the types of the links of ADFs is also relevant for classifying ADFs themselves.  One particularly important subclass of ADFs is that of \emph{bipolar} ADFs or BADFs for short. 
In such an ADF each link is either attacking or supporting (or both; thus, the links can also be redundant). 
Another subclass of ADFs, having only attacking links, is defined in~\cite{Atefeh},  called \textit{support free ADFs} (SFADFs) in the current work, defined formally 
as follows.
\begin{definition}\label{def: SFADFs}
An ADF 
is called support-free 
if it has only attacking links.
\end{definition}

For SFADFs, it turns out that the intention of stable semantics, i.e.\ to avoid cyclic support among arguments, becomes immaterial, thus $\model(D)= \stb(D)$ for any ADF $D$; 
the property is called weakly coherent in
\cite{Atefeh}. 

\begin{proposition}\label{prop:mod=stb}
For every SFADF $D$ it holds that   $\model(D)=\stb(D)$.
\end{proposition}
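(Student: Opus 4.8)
The inclusion $\stb(D)\subseteq\model(D)$ holds by definition, so the entire content lies in establishing $\model(D)\subseteq\stb(D)$. The plan is to take an arbitrary model $v$ of $D$ and show that the grounded interpretation $w$ of the $\stb$-reduct $D^v=(S^v,L^v,C^v)$ satisfies $w^\tvt=v^\tvt=S^v$, which is exactly the defining condition for $v$ to be a stable model.

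The property I would exploit is that support-freeness forces every acceptance condition to be \emph{antitone} in each of its parents. Unfolding Definition~\ref{def: type of links}, a link $(b,a)$ being attacking says that if $v(\ac_a)=\tvf$ then $\update{v}{b}{\tvt}(\ac_a)=\tvf$; contrapositively, whenever $a$ is accepted with $b$ true it remains accepted with $b$ false, i.e.\ flipping $b$ from $\tvt$ to $\tvf$ (keeping the other parents fixed) can never turn $\ac_a$ from true to false. Since in a SFADF every link is attacking (Definition~\ref{def: SFADFs}), the value of $\ac_s$ under a two-valued assignment can only be preserved or raised to $\tvt$ when we lower parents from $\tvt$ to $\tvf$.

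First I would analyse the reduct's acceptance conditions. Fix $s\in S^v$, so $v(s)=\tvt$, and since $v$ is a model, $v(\ac_s)=\tvt$. The reduct condition $\ac_s[p/\bot : v(p)=\tvf]$ retains as variables exactly the parents $p$ of $s$ with $v(p)=\tvt$, while the $\tvf$-parents are fixed to $\bot$. Evaluating it at the all-true assignment of the remaining variables reproduces $v(\ac_s)=\tvt$; and by antitonicity, flipping any of these $\tvt$-parents to $\tvf$ preserves the value $\tvt$. As every assignment of the remaining variables is reachable from the all-true one by a sequence of such single $\tvt\to\tvf$ flips, the reduct condition of $s$ is a \emph{tautology}. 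Consequently the grounded computation on $D^v$ collapses: applying $\Gamma_{D^v}$ to the least informative interpretation already maps every $s\in S^v$ to $\tvt$ (a tautology is irrefutable), the resulting all-true interpretation is a fixed point, and by monotonicity of $\Gamma_{D^v}$ it is the least one. Hence $w$ is all-true on $S^v$, giving $w^\tvt=S^v=v^\tvt$ and thus $v\in\stb(D)$.

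The only delicate step is the tautology claim: one must translate the attacking-link condition of Definition~\ref{def: type of links} into the correct monotonicity direction and verify that substituting $\bot$ for the $\tvf$-parents leaves a formula whose only variables are the $\tvt$-parents, on which the value is constantly $\tvt$. Once that is in place, the propagation of irrefutability through the grounded fixed-point iteration is routine.
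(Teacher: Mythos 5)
Your proof is correct and follows essentially the same route as the paper's: the paper isolates the key step as a lemma stating that for an argument $s$ with only attacking parents, $\varphi_s^v$ is irrefutable iff $\varphi_s[p/\bot : v(p)=\tvf]$ is, proved by the same observation that attacking links make truth of $\varphi_s$ stable under lowering parents (the paper phrases it contrapositively, preserving falsity when raising parents to $\top$), and then concludes as you do that the grounded interpretation of the reduct assigns $\tvt$ to all of $S^v$. The only cosmetic difference is that the paper argues by contradiction from a model that is not stable, whereas you argue directly that the reduct conditions are tautologies.
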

\begin{proof}
The result follows from the following observation:
Let $D=(S, L, C)$ be an ADF, let $v$ be a model of $D$ and let $s\in S$ be an argument such that all parents of $s$ are attackers. 
Thus,  $\varphi_{s}^v$ is irrefutable if and only if $\varphi_{s}[p/\bot\ : \ v(p)=\tvf]$ is irrefutable.  
\end{proof}

\section{Embedding SETAFs in ADFs}
As observed by Polberg~\cite{Polberg16} and Linsbichler et.al~\cite{LinsbichlerPS16},
the notion of collective attacks can also be represented in ADFs by using the right acceptance conditions. 
We next introduce the class SETADFs of ADFs for this purpose.
\begin{definition}\label{def: SETADF}
An ADF $D=(S, L, C)$ is called SETAF-like (SETADF) if 
each of the acceptance conditions in $C$ is given by a formula
(with $\mathcal{C}$ a set of non-empty clauses)
\[
\bigwedge_{cl \in \mathcal{C}}\ \bigvee_{a\in cl}\neg a.
\]
\end{definition}
That is, in a SETADF each acceptance condition is either $\top$ (if $\mathcal{C}$ is empty) 
or a proper CNF formula over negative literals.
SETADFs and SETAFs can be embedded in each other as follows.

\begin{definition}\label{def: SETAFS to SADFs}
Let $F=(A, R)$ be a SETAF. The ADF associated to $F$ is a tuple $D_F=(S, L, C)$ in which $S=A$, $L=\{(a,b)\mid (B,b) \in R, a \in B\}$ and  $C=\{\varphi_{a}\}_{a  \in S}$ is the collection of acceptance conditions defined, for each $a \in S$, as 
\[
\varphi_{a}=
\bigwedge_{(B, a)\in R}\bigvee_{a'\in B}\neg a'.
\] 

Let $D=(S, L, C)$ be a SETADF. We construct the SETAF $F_D=(A,R)$ in which, $A=S$, and 
$R$ is constructed as follows.
For each argument $s \in S$ with acceptance formula $\bigwedge_{cl \in \mathcal{C}}\ \bigvee_{a\in cl}\neg a$
we add the attacks $\{(cl,s) \mid cl \in \mathcal{C}\}$ to $R$.
\end{definition}
Clearly the ADF $D_F$ associated to a SETAF $F$ is a SETADF
and $D$ is the ADF associated to the constructed SETAF $F_D$.
We next deal with the fact that SETAF semantics are defined as three-valued labellings 
while semantics for ADFs are defined as three valued interpretations. 
In order to compare these semantics we associate the $in$ label with $t$, the $out$ label with $f$, and the $undec$ label with $u$.

\begin{theorem}\label{thm:setaf_setadf}
For $\sigma \in \{\cf,\adm,\comp,\pref,\grd,\stb\}$, 
a SETAF~$F$ and its associated SET\-ADF~$D$, we have that 
$\sigma_\Lab(F)$ and $\sigma(D)$ are in one-to-one correspondence with 
each labelling $\lab \in \sigma_\Lab(F)$ corresponding to an interpretation $v \in \sigma(D)$ 
such that
$v(s)=\tvt$ iff $\lambda(s)=\lin$,
$v(s)=\tvf$ iff $\lambda(s)=\lout$, and 
$v(s)=\tvu$ iff $\lambda(s)=\lundec$.
\end{theorem}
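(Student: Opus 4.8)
The plan is to observe that the assignment $\lambda \mapsto v$ determined by $\lin \mapsto \tvt$, $\lout \mapsto \tvf$, $\lundec \mapsto \tvu$ is already a bijection between the set of all labellings of $F$ and the set of all interpretations of $D$. Hence the theorem reduces to showing, for each fixed $\sigma$, that $\lambda$ is a $\sigma$-labelling of $F$ if and only if its image $v$ is a $\sigma$-interpretation of $D$; bijectivity on the restricted classes is then automatic. Everything will rest on a single computational lemma describing the partial valuation $\varphi_a^v$ of the SETADF acceptance condition $\varphi_a = \bigwedge_{(B,a)\in R} \bigvee_{b \in B} \neg b$.

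First I would prove that lemma. A single clause $\bigvee_{b\in B}\neg b$ evaluated under $v$ becomes a tautology iff some $b\in B$ has $v(b)=\tvf$ (that is, $\lambda(b)=\lout$); it becomes unsatisfiable iff every $b\in B$ has $v(b)=\tvt$ ($\lambda(b)=\lin$); and it is satisfiable but not valid otherwise. Taking the conjunction over all attacks $(B,a)\in R$ yields: $\varphi_a^v$ is irrefutable iff every attack on $a$ contains an $\lout$-labelled argument; $\varphi_a^v$ is unsatisfiable iff some attack on $a$ has all its members labelled $\lin$; and $\varphi_a^v$ is satisfiable-but-not-valid in every remaining case. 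This pins down $\Gamma_D(v)$ exactly in terms of the attack structure and, as a by-product (each variable occurs only negatively, so each $\varphi_a$ is monotone decreasing in every argument), shows that every link of a SETADF is attacking, i.e.\ $D$ is an SFADF.

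With the lemma in hand the semantics match up by unwinding definitions. For $\cf$, condition (i) of conflict-freeness is equivalent to ``$v(a)=\tvt$ implies $\varphi_a^v$ satisfiable'' and condition (ii) to ``$v(a)=\tvf$ implies $\varphi_a^v$ unsatisfiable'', which are precisely the two ADF conflict-freeness clauses. For $\adm$, reading $v \leq_i \Gamma_D(v)$ off the lemma gives exactly conflict-freeness together with the in-condition of admissibility. For $\comp$, the two iff-conditions defining complete labellings are literally $v=\Gamma_D(v)$. For $\stb$ I would first invoke Proposition~\ref{prop:mod=stb} (legitimate since $D$ is an SFADF) to replace stable models by two-valued models, and then note that a two-valued $v$ with $v(s)=v(\varphi_s)$ for all $s$ corresponds to a conflict-free labelling with $\lambda_\lundec=\emptyset$; here I use the auxiliary fact that for SETAFs conflict-freeness together with $\lambda_\lundec=\emptyset$ already forces completeness.

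The delicate cases are $\grd$ and $\pref$, where the ADF definitions order interpretations by $\leq_i$ while the SETAF definitions order by $\subseteq$ on the in-sets $\lambda_\lin$. I would bridge the gap with a monotonicity observation: for any two complete interpretations $v,v'$ of a SETADF, $v^\tvt \subseteq v'^\tvt$ already implies $v \leq_i v'$. Indeed, if $v(a)=\tvf$ then by the core lemma some attack on $a$ is entirely $\tvt$ under $v$; since $v^\tvt \subseteq v'^\tvt$ that same attack is entirely $\tvt$ under $v'$, so $\varphi_a^{v'}$ is unsatisfiable and completeness forces $v'(a)=\tvf$, giving $v^\tvf \subseteq v'^\tvf$. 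Thus on complete interpretations $\leq_i$ is determined by the $\tvt$-part alone. Combining this with the standard facts that $\Gamma_D$ is $\leq_i$-monotone and that $\leq_i$-maximal admissible interpretations of an ADF are complete, it follows that ADF-$\pref$ coincides with the $\subseteq$-maximal-$\lambda_\lin$ complete labellings (SETAF-$\pref$), while ADF-$\grd$ (the $\leq_i$-least complete interpretation) coincides with the $\subseteq$-minimal-$\lambda_\lin$ complete labelling (SETAF-$\grd$), uniqueness of the grounded interpretation ensuring this minimal labelling is unique. This monotonicity lemma is the one genuinely non-syntactic ingredient, and I expect it to be the main obstacle.
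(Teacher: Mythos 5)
Your proposal is correct and follows essentially the same route as the paper's proof: the same per-semantics case analysis driven by the observation that $\varphi_a^v$ is irrefutable iff every attack on $a$ contains an \lout-labelled argument and unsatisfiable iff some attack on $a$ is entirely \lin-labelled, and the same reduction of stable semantics to two-valued models via Proposition~\ref{prop:mod=stb}. Your explicit monotonicity lemma for preferred and grounded (that on complete interpretations of a SETADF the ordering $\leq_i$ is determined by the $\tvt$-part, so $\subseteq$-maximality/minimality of $\lambda_\lin$ matches $\leq_i$-maximality/minimality) is a welcome addition, as it carefully justifies the step the paper's proof dismisses with ``it is easy to verify.''
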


Notice that by the above theorem we have that the 3-valued SETAF semantics introduced in~\cite{LinsbichlerPS16} coincide with the 3-valued labelling based SETAF semantics of~\cite{flouris2019comprehensive} and the model semantics of~\cite{LinsbichlerPS16} corresponds to the stable semantics of~\cite{flouris2019comprehensive}. 

\section{3-valued Signatures of SETAFs}
We adapt the concept of signatures~\cite{DunneDLW15} towards our needs first.

\begin{definition}\label{def:sig}
The signature of SETAFs under a labelling-based semantics $\sigma_\Lab$
is defined as 
$\Sigma_{SETAF}^{{\sigma_\Lab}}=\{ \sigma_\Lab(F) \mid F \in SETAF\}$.
The signature of an ADF-subclass ${\cal C}$ under a semantics $\sigma$ is defined as
$\Sigma_{{\cal C}}^{{\sigma}}=\{ \sigma(D) \mid D \in {\cal C}\}$.
\end{definition}

By Theorem~\ref{thm:setaf_setadf} we can use labellings of SETAFs and interpretations of the SETADF class of ADFs interchangeably, yielding that $\Sigma_{SETAF}^{\sigma_\Lab}\equiv \Sigma_{SETADF}^\sigma$, i.e.\
the 3-valued signatures of SETAFs and SETADFs only differ in the naming of the labels.
For convenience, we will use the SETAF terminology in this section.

\begin{proposition}\label{prop:sig_stb} The signature $\Sigma_{SETAF}^{\stb_\Lab}$ is given by all sets $\lab$
of labellings such that 
\begin{enumerate} 
\item all $\lambda \in \lab$ have the same domain 
$\Args{\lab}$;
$\lambda(s) \not= \lundec$ for all $\lambda \in \lab$, $s\in\Args{\lab}$. 
\item If $\lambda \in \lab$ assigns one argument to $\lout$ then it also assigns an argument to $\lin$.
\item For arbitrary $\lambda_1, \lambda_2 \in \lab$ with $\lambda_1 \not= \lambda_2$ 
there is an argument $a$ such that $\lambda_1(a)=\lin$ and $\lambda_2(a)=\lout$.
\end{enumerate}	
\end{proposition}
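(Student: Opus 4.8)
The plan is to prove both directions of the characterization: soundness, that every $\stb_\Lab(F)$ satisfies (1)--(3), and realizability, that every $\lab$ satisfying (1)--(3) equals $\stb_\Lab(F)$ for an explicitly constructed SETAF. The guiding observation I would establish first is that, since a stable labelling is by definition a conflict-free labelling with $\lambda_{\lundec}=\emptyset$, it is completely determined by its in-set $\lambda_{\lin}$: conditions (i)--(ii) of conflict-freeness then say precisely that $\lambda_{\lin}$ attacks every argument outside it and attacks no argument inside it, i.e.\ that $\lambda_{\lin}$ is a stable extension in the usual sense. Read through this lens, (1) records two-valuedness and the common domain $A=\Args{\lab}$, (2) says an all-$\lout$ labelling can occur only over the empty domain, and (3) says the family of in-sets forms a $\subseteq$-antichain.

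For soundness, (1) is immediate from the definition of $\stb$. For (2), if $\lambda\in\stb_\Lab(F)$ maps some $a$ to $\lout$, then conflict-freeness~(ii) supplies an attack $(S,a)\in R$ with $\lambda(b)=\lin$ for all $b\in S$; as attack sources are non-empty, some argument is mapped to $\lin$. For (3) I would argue by contradiction: if distinct stable $\lambda_1,\lambda_2$ had no witness $a$ with $\lambda_1(a)=\lin$ and $\lambda_2(a)=\lout$, then $\lambda_{1,\lin}\subseteq\lambda_{2,\lin}$, and properness lets me pick $a\in\lambda_{2,\lin}\setminus\lambda_{1,\lin}$; stability of $\lambda_1$ yields an attack $(B,a)$ with $B\subseteq\lambda_{1,\lin}\subseteq\lambda_{2,\lin}$, which together with $a\in\lambda_{2,\lin}$ violates conflict-freeness~(i) of $\lambda_2$.

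For realizability I would set $A=\Args{\lab}$ and take the canonical SETAF $F=(A,R)$ with $R=\{(\lambda_{\lin},a)\mid \lambda\in\lab,\ a\in A\setminus\lambda_{\lin}\}$; condition (2) guarantees each source $\lambda_{\lin}$ actually used is non-empty, so $R$ is a legal attack relation. Then I would verify $\stb_\Lab(F)=\lab$ by two inclusions. For ``$\supseteq$'', each $\lambda\in\lab$ attacks every $a\in A\setminus\lambda_{\lin}$ by construction, and it is conflict-free because the antichain property~(3) rules out any internal attack $(\lambda'_{\lin},a)$ with $\lambda'_{\lin}\subseteq\lambda_{\lin}$ and $a\in\lambda_{\lin}$. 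For ``$\subseteq$'', given a stable $\lambda$ with in-set $E$, the stability attack on an omitted argument forces $\lambda'_{\lin}\subseteq E$ for some $\lambda'\in\lab$, and conflict-freeness of $\lambda$ then forces $E\subseteq\lambda'_{\lin}$, whence $E=\lambda'_{\lin}$; the degenerate cases $A=\emptyset$ and $\lab=\emptyset$ (the latter realized by a single self-attacking argument, which has no stable labelling) I would dispatch separately.

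I expect the realizability direction, and within it the ``$\subseteq$'' inclusion, to be the main obstacle, since that is where spurious stable labellings must be excluded. The antichain condition~(3) does the decisive work twice: once to keep each intended labelling conflict-free, and once to squeeze any stable in-set between some $\lambda'_{\lin}$ and itself. Getting the interaction between the stability requirement (an attack must exist) and the conflict-freeness requirement (no attack may be internal) exactly right is the delicate point, together with the bookkeeping for the empty-domain and empty-signature corner cases.
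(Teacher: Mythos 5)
Your proposal is correct and follows essentially the same route as the paper: the same soundness arguments for (1)--(3), the same canonical construction $R_\lab=\{(\lambda_{\lin},a)\mid \lambda\in\lab,\ \lambda(a)=\lout\}$ (your $a\in A\setminus\lambda_{\lin}$ coincides with $\lambda(a)=\lout$ by two-valuedness), and the same sandwich argument $\lambda'_{\lin}\subseteq E\subseteq\lambda'_{\lin}$ for the inclusion $\stb_\Lab(F_\lab)\subseteq\lab$. Your explicit dispatch of the corner cases $\lab=\emptyset$ and $\Args{\lab}=\emptyset$ is a small addition that the paper leaves implicit.
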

\begin{proof}
We first show that for each SETAF $F$ the set $\stb_\Lab(F)$ satisfies the conditions of the proposition.
First clearly all $\lambda \in \stb_\Lab(F)$ have the same domain and by the definition of stable 
semantics do not assign $\lundec$ to any argument. That is the first condition is satisfied.
For Condition (2), towards a contradiction assume that the domain is non-empty and $\lambda \in \stb_\Lab(F)$ assigns all arguments to $\lout$.
Consider an arbitrary argument $a$. By definition of stable semantics $a$ is only labeled 
$\lout$ if there is an attack $(B,a)$ such that all arguments in $B$ are labeled in $\lin$,
a contradiction.	 
Thus 
we obtain that there is at least one argument $a$ with $\lambda(a)=\lin$.
For Condition (3), towards a contradiction assume that for all arguments $a$ with $\lambda_1(a)=\lin$ also 
$\lambda_2(a)=\lin$ holds. As $\lambda_1 \not = \lambda_2$  there is an $a$ with 
$\lambda_2(a)=\lin$ and $\lambda_1(a)=\lout$. 
That is, there is an attack $(B,a)$ such that 
$\lambda_1(b)=\lin$ for all $b \in B$. But then also
$\lambda_2(b)=\lin$ for all $b \in B$ and by $\lambda_2(a)=\lin$ we obtain 
that $\lambda_2 \not\in \cf_\Lab(F)$, a contradiction.

Now assume that $\lab$ satisfies all the conditions. We give a SETAF $F_\lab=(A_\lab,R_\lab)$ with  $A_{\lab} = \Args{\lab}$ and $R_\lab   = \{(\lambda_{\lin}, a) \mid \lambda \in \lab, \lambda(a)=\lout\} $.
We show that $\stb_\Lab(F_\lab) = \lab$. 

\nop{
with $\stb_\Lab(F_\lab) = \lab$.
\begin{align*}
A_{\lab} &= \Args{\lab}\\
R_\lab   &= \{(\lambda_{\lin}, a) \mid \lambda \in \lab, \lambda(a)=\lout\}  
\end{align*}}
To this end 
we first show $\stb_\Lab(F_\lab) \supseteq \lab$.
Consider an arbitrary $\lambda \in \lab$: By Condition~(1) there is no $a \in \Args{\mathbb{L}}$
with $\lambda(a)=\lundec$ and it only remains to show $\lambda \in \cf_\Lab(F_\lab)$.
First, if $\lambda(a)=\lout$ for some argument
$a$ then by construction of $R_\lab$ and Condition (2) we have an attack $(\lambda_{\lin}, a)$ 
and thus $a$ is legally labeled $\lout$.
Now towards a contradiction assume there is a conflict $(B,a)$ such that $B \cup \{a\} \subseteq \lambda_{\lin}$.  
Then, by construction of $R_\lab$ there is a $\lambda' \in \lab$  with $\lambda'_{\lin}=B$ 
and $\lambda_{\lin}\not=B$ (as $a \in \lambda_{\lin}$).
That is, $\lambda'_{\lin} \subset \lambda_{\lin}$, a contradiction to Condition~(3).
Thus, $\lambda \in \cf_\Lab(F_\lab)$ and therefore $\lambda \in \stb_\Lab(F_\lab)$.

To show $\stb_\Lab(F_\lab) \subseteq \lab$, 
consider $\lambda \in \stb_\Lab(F_\lab)$. 
If $\lambda$ maps all arguments to $\lin$ then there is no attack in $R_\lab$ which means that
$\lab$ contains only the labelling $\lambda$.
Thus, we  assume that there is $a$ with $\lambda(a)=\lout$  and there is $(B,a) \in R_\lab$
with $B\subseteq\lambda_{\lin}$. 
By construction there is $\lambda' \in \lab$ such that $\lambda'_\lin=B$.
Then by construction we have $(B,c) \in R_\lab$ for all $c \not\in B$ and thus 
$\lambda'_\lin = B = \lambda_\lin$ and moreover $\lambda'_\lout = \lambda_\lout$ and 
thus $\lambda=\lambda'$.
\end{proof}

We now turn to the signature for preferred semantics. Compared to the conditions
for stable semantics, labelling may now assign $\lundec$ to arguments. 
Note that stable is the only semantics allowing for an empty labelling set.

\begin{proposition}\label{prop:sig_pref} The signature $\Sigma_{SETAF}^{\pref_\Lab}$ is given by all non-empty sets $\lab$ of labellings s.t. 
\begin{enumerate}
\item all labellings $\lambda \in \lab$ have the same domain
$\Args{\lab}$.
\item If $\lambda \in \lab$ assigns one argument to $\lout$ then it also assigns an argument to $\lin$.
\item For arbitrary $\lambda_1, \lambda_2 \in \lab$ with $\lambda_1 \not= \lambda_2$ 
there is an argument $a$ such $\lambda_1(a)=\lin$ and $\lambda_2(a)=\lout$.
\end{enumerate}	
\end{proposition}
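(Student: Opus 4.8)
The plan is to mirror the two-part structure of the proof of Proposition~\ref{prop:sig_stb}: first show that $\pref_\Lab(F)$ satisfies the three conditions (and is non-empty) for every SETAF $F$, then realize every qualifying $\lab$ by an explicit SETAF on the arguments $\Args{\lab}$. Throughout I use that a complete labelling is determined by its in-set $\lambda_{\lin}$, that $\lambda_{\lin}$ is then a conflict-free set defending all its members, and that preferred labellings are exactly those whose in-set is $\subseteq$-maximal among such sets. The genuinely new difficulty compared with the stable case is that preferred labellings may assign $\lundec$, so neither the easy argument for Condition~(3) nor the bare construction $R_\lab$ of Proposition~\ref{prop:sig_stb} transfers verbatim.

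For soundness, non-emptiness holds because every SETAF has a complete labelling and, the domain being finite, a complete labelling with $\subseteq$-maximal in-set. Conditions~(1) and~(2) are immediate from Definition~\ref{def:semanticsSETA_labF} exactly as in the stable proof (an argument is $\lout$ only via an attack whose tail is entirely $\lin$). The crux is Condition~(3): given distinct preferred $\lambda_1,\lambda_2$, I must find $a$ with $\lambda_1(a)=\lin$ and $\lambda_2(a)=\lout$. I would argue by contradiction, assuming that no argument of $\lambda_{1,\lin}$ is attacked by $\lambda_{2,\lin}$, and show that $\lambda_{1,\lin}\cup\lambda_{2,\lin}$ is again admissible; since distinct maximal admissible sets are incomparable, this union strictly contains $\lambda_{2,\lin}$, contradicting its maximality. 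As defence is monotone in the attacking set, admissibility reduces to conflict-freeness, and this is where collective attacks must be handled: for a hypothetical attack $(B,c)$ inside the union I split on whether $c\in\lambda_{2,\lin}$ or $c\in\lambda_{1,\lin}$, use admissibility of the respective labelling to locate an attacked $b\in B$ lying in the \emph{other} in-set, and in the second case apply admissibility of $\lambda_2$ once more to $b$ to expose an argument of $\lambda_{1,\lin}$ attacked by $\lambda_{2,\lin}$, contradicting the assumption. This double invocation of admissibility is the step that the binary-attack intuition does not supply directly.

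For completeness I would start from $R_\lab=\{(\lambda_{\lin},a)\mid \lambda\in\lab,\ \lambda(a)=\lout\}$. This already fixes the $\lin$/$\lout$ behaviour of each target $\lambda^\ast$: an in-argument is legally in because by Condition~(3) every attacking set $\mu_{\lin}$ aimed at it meets $\lambda^\ast_{\lout}$; an out-argument is legally out via its own attack $(\lambda^\ast_{\lin},a)$; and no out- or undec-argument can be legally out, since Condition~(3) forbids any foreign $\mu_{\lin}$ from being contained in $\lambda^\ast_{\lin}$. The sole defect is that an argument which is $\lundec$ in some $\nu\in\lab$ but never $\lout$ receives no attacker and is spuriously made $\lin$. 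To repair this \emph{without adding arguments}, for each $\nu\in\lab$ and each $a$ with $\nu(a)=\lundec$ I add one self-referential collective attack $(\{a\}\cup X_{\nu,a},\,a)$, where $X_{\nu,a}\subseteq\nu_{\lin}$ is chosen to meet $\lambda_{\lout}$ for every $\lambda\in\lab$ with $\lambda(a)=\lin$; such $X_{\nu,a}$ exists since Condition~(3) applied to each pair $(\nu,\lambda)$ yields an argument in $\nu_{\lin}\cap\lambda_{\lout}$. Under $\nu$ this attack has no $\lout$ member but contains the $\lundec$ argument $a$, so it keeps $a$ out of $\lin$; under any $\lambda$ with $a\in\lambda_{\lin}$ it is defeated by $X_{\nu,a}$, and, containing the never-$\lin$-here argument $a$, it is never wholly $\lin$, so the previously settled $\lin$/$\lout$ arguments are untouched.

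The main obstacle is the remaining half of completeness: showing that the augmented SETAF has \emph{no} preferred labelling outside $\lab$ and that each $\nu\in\lab$ is genuinely maximal (not merely complete). I expect to handle this by proving that any complete labelling $\lambda$ whose in-set is $\subseteq$-maximal satisfies $\lambda_{\lin}=\nu_{\lin}$ for some $\nu\in\lab$: the attacks $(\mu_{\lin},a)$ constrain $\lambda_{\lin}$ to be attack-closed exactly as in the stable proof, while the added self-referential attacks are designed to block precisely those in-sets that would properly extend a target. Since a complete labelling is determined by its in-set, this yields $\lambda=\nu$. Pinning down this last inclusion in the presence of the new collective attacks is the delicate point, and I would verify it by tracking, for a candidate maximal $\lambda_{\lin}$, which self-referential attacks remain undefeated and forcing a contradiction with maximality unless $\lambda_{\lin}$ already equals some $\nu_{\lin}$.
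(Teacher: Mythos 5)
Your proposal is correct and follows essentially the same route as the paper: the same soundness arguments (in particular the attack-chasing with a double invocation of admissibility for Condition~(3), which the paper phrases via a conflict inside $\lambda_{1,\lin}\cup\lambda_{2,\lin}$) and the same realizing SETAF, whose self-referential attacks $(\nu_\lin\cup\{a\},a)$ are exactly the canonical instance of your $(\{a\}\cup X_{\nu,a},a)$. The inclusion $\pref_\Lab(F_\lab)\subseteq\lab$ that you flag as the delicate point is settled in the paper along the lines you sketch: the tail $B$ of any attack witnessing an $\lout$ label must equal $\mu_\lin$ for some $\mu\in\lab$ (self-referential attacks cannot witness it), and the attacks emanating from $B$ then force $\lambda=\mu$.
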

\begin{proof}[Proof sketch]
We first show that for each SETAF $F$ the set $\pref_\Lab(F)$ satisfies the conditions of the proposition.
The first condition is satisfied as all $\lambda \in \pref_\Lab(F)$ have the same domain. 
The second condition is satisfied by the definition of conflict-free labellings. 
Condition~(3) is by the $\subseteq$-maximality of $\lambda_\lin$ which implies that there is a conflict between each two preferred extensions.
\nop{
Now, assume that $\lambda \in \pref_\Lab(F)$ assigns an argument $a$ to $\lout$.
By the definition of conflict-free labellings there is an attack $(B,a)$ such that all arguments $b\in B$ are labeled $\lin$. Thus condition 2 is satisfied.	 
For condition 3, towards a contradiction assume that for all arguments $a$ with $\lambda_1(a)=\lin$ also 
$\lambda_2(a)=\lin$ holds. 
As for preferred semantics the $\lout$ labels are fully determined by the $\lin$ labels
and $\lambda_1 \not = \lambda_2$  there is an $a$ with 
$\lambda_2(a)=\lin$ and $\lambda_1(a)\not= \lin$. 
This is in contradiction to the $\subseteq$-maximality of $\lambda_\lin$.}

Now assume that $\lab$ satisfies all the conditions. We give a SETAF $F_\lab=(A_\lab,R_\lab)$ with $A_{\lab} = \Args{\lab}$ and $R_\lab   = \{(\lambda_{\lin}, a) \mid \lambda \in \lab, \lambda(a)=\lout\}  \cup 
\{ (\lambda_{\lin} \cup \{a\}, a) \mid \lambda \in \lab, \lambda(a)=\lundec\}$. 
\nop{
with $\pref_\Lab(F_\lab) = \lab$.
\begin{align*}
A_{\lab} &= \Args{\lab}\\
R_\lab   &= \{(\lambda_{\lin}, a) \mid \lambda \in \lab, \lambda(a)=\lout\}  \cup 
\{ (\lambda_{\lin} \cup \{a\}, a) \mid \lambda \in \lab, \lambda(a)=\lundec\}
\end{align*}}
It remains to show that $\pref_\Lab(F_\lab) = \lab$. To show $\pref_\Lab(F_\lab) \supseteq \lab$,  consider an arbitrary $\lambda \in \lab$. $\lambda \in \cf_\Lab(F_\lab)$ can be seen by construction, and $\lambda \in \adm_\Lab(F_\lab)$ since argument labelled out is attacked by $\lambda$; finally  $\lambda \in \prf_\Lab(F_\lab)$ is guaranteed since the arguments $a$ with 
$\lambda(a)=\lundec$ are involved in self-attacks.
To  show $\pref_\Lab(F_\lab) \subseteq \lab$
consider $\lambda \in \pref_\Lab(F_\lab)$. It can be checked that $\lambda$ satisfies all the conditions of the proposition. 
\nop{We first show $\pref_\Lab(F_\lab) \supseteq \lab$. To this end, 
consider an arbitrary $\lambda \in \lab$. To complete the proof  first we show $\lambda \in \cf_\Lab(F_\lab)$, then we show that $\lambda \in \adm_\Lab(F_\lab)$. Finally we show that $\lambda \in \prf_\Lab(F_\lab)$.}
\nop{
We first consider $\lout$ labeled arguments. First, if $\lambda(a)=\lout$ for some argument
$a$ then by construction and condition (2) we have an attack $(\lambda_{\lin}, a)$ 
and thus $a$ is legally labeled $\lout$.
Now towards a contradiction assume there is a conflict $(B,a)$ such that $B \cup \{a\} \subseteq \lambda_{\lin}$.
If $|\lab|=1$, by the construction of $F_\lab$ there is no $(B, a)\in R_\lab$ such that $a \in \lambda_{\lin}$. That is, $a$  is legally labeled $\lin$. 
If $|\lab|>1$, by construction there is a $\lambda' \in \lab$  with $\lambda'_{\lin}=B \setminus \{a\}$, a contradiction to (3).
Thus, $\lambda \in \cf_\Lab(F_\lab)$.
Next we show that $\lambda \in \adm_\Lab(F_\lab)$.
Consider an argument $a$ with $\lambda(a)=\lin$ and an attack $(B,a)$.
Then, by construction there is a $\lambda' \in \lab$  with $\lambda'_{\lin}=B \setminus \{a\}$ and,
by condition (3), an argument $b \in B$ such that $\lambda(b)=\lout$.
Thus, $\lambda \in \adm_\Lab(F_\lab)$.
Finally we show that $\lambda \in \pref_\Lab(F_\lab)$.
Towards a contradiction assume that there is a $\lambda' \in \adm_\Lab(F_\lab)$
with $\lambda_\lin \subset \lambda'_\lin$.
Let $a$ be an argument such that $\lambda'(a)=\lin$ and $\lambda(a)\in\{\lout,\lundec\}$.
By construction there is either an attack $(\lambda_\lin,a)$ or an attack $(\lambda_\lin \cup \{a\},a)$.
In both cases $\lambda' \not\in \adm_\Lab(F_\lab)$ a contradiction.
Hence, $\lambda \in \pref_\Lab(F_\lab)$.
To complete the proof it remains to show that  $\pref_\Lab(F_\lab) \subseteq \lab$.}
\nop{
We complete the proof by showing $\pref_\Lab(F_\lab) \subseteq \lab$:
Consider $\lambda \in \pref_\Lab(F_\lab)$: 
If $\lambda$ maps all arguments to $\lin$ then there is no attack in $R_\lab$ which means that
$\lab$ contains only the labelling $\lambda$.
Thus we can assume that $\lambda(a)=\lout$ for some argument $a$ and there is $(B,a) \in R_\lab$
with  $\lambda(b)=\lin$ for all $b \in B$.
By construction there is $\lambda' \in \lab$ such that $\lambda'_\lin=B$.
Then by construction we have $(B,c) \in R_\lab$ for all $c$ with $\lambda'(c)= \lout$
and $(B \cup \{c\},c) \in R_\lab$ for all $c$ with $\lambda'(c)= \lundec$.
We obtain that $\lambda'_\lin = B = \lambda_\lin$ and thus $\lambda=\lambda'$.	 }
\end{proof}

\begin{proposition}\label{prop:sig_cf}\label{prop: cf} The signature $\Sigma_{SETAF}^{\cf_\Lab}$ is given by all non-empty sets $\lab$
of labellings s.t.
\begin{enumerate}
\item all $\lambda \in \lab$ have the same domain $\Args{\lab}$.
\item If $\lambda \in \lab$ assigns one argument to $\lout$ then it also assigns an argument to $\lin$.
\item For $\lambda \in \lab$ and $C \subseteq \lambda_\lin$ 
also $(C,\emptyset, \Args{\lab} \setminus C) \in \lab$.
\item For $\lambda \in \lab$ and $C \subseteq \lambda_\lout$ 
also $(\lambda_\lin, \lambda_\lout \setminus C, \lambda_\lundec \cup C) \in \lab$.
\item For $\lambda, \lambda' \in \lab$ with $\lambda_\lin \subseteq \lambda'_\lin$ 
also $(\lambda'_\lin,\lambda_\lout \cup \lambda'_\lout, \lambda_\lundec \cap \lambda'_\lundec) \in \lab$.
\item For $\lambda, \lambda' \in \lab$ and $C \subseteq \lambda_\lout$ (s.t. $C\not =\emptyset$)
we have $\lambda_\lin \cup C \not\subseteq \lambda'_\lin$.
\end{enumerate}	
\end{proposition}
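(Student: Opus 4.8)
The plan is the standard two–part signature argument. First (\emph{soundness}) I would show that $\cf_\Lab(F)$ satisfies Conditions~(1)--(6) for every SETAF $F=(A,R)$; then (\emph{realizability}) that every $\lab$ meeting the conditions is of the form $\cf_\Lab(F_\lab)$ for a SETAF I construct. The engine for both halves is the following structural reformulation of Definition~\ref{def:semanticsSETA_labF}: a labelling $\lambda$ is conflict-free in $F$ iff (a)~$\lambda_\lin$ is conflict-free \emph{as a set}, i.e.\ no $(S,a)\in R$ has $S\cup\{a\}\subseteq\lambda_\lin$, and (b)~$\lambda_\lout$ is contained in the set $\mathit{att}_F(\lambda_\lin):=\{a\mid \exists (S,a)\in R,\ S\subseteq\lambda_\lin\}$ of arguments attacked by $\lambda_\lin$. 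In particular the $\lin$-part ranges over an arbitrary conflict-free set, and the $\lout$-part over an arbitrary subset of what that set attacks.

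For soundness I would check each condition against this reformulation. Condition~(1) is immediate (common domain $A$; non-emptiness because the all-$\lundec$ labelling is always conflict-free). Condition~(2) follows since a $\lout$-argument has a fully-$\lin$ attacker whose source is non-empty. Conditions~(3) and~(4) are the closure properties of (a) and (b): any subset of a conflict-free set is conflict-free (take $\lout=\emptyset$), and any subset of $\lambda_\lout$ again lies in $\mathit{att}_F(\lambda_\lin)$. For Condition~(5), given $\lambda_\lin\subseteq\lambda'_\lin$, I would first argue $\lambda_\lout\cap\lambda'_\lin=\emptyset$: if $a\in\lambda_\lout\cap\lambda'_\lin$ then the fully-$\lambda_\lin$ attacker of $a$ is also fully-$\lambda'_\lin$, violating clause~(i) for $\lambda'$; this makes the triple in~(5) a genuine labelling, and clauses~(i),(ii) transfer. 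Condition~(6) is the same contradiction stated contrapositively. None of these is harder than routine set bookkeeping.

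The realizability direction is where the real work lies. Writing $\mathcal I=\{\lambda_\lin\mid\lambda\in\lab\}$ (downward closed by Condition~(3)), the tempting move is to mimic Propositions~\ref{prop:sig_stb}/\ref{prop:sig_pref} and take $R_\lab=\{(\lambda_\lin,a)\mid\lambda\in\lab,\ \lambda(a)=\lout\}$. \textbf{This fails}, and pinpointing why is the main obstacle: the conflict-free sets of a SETAF form a \emph{downward-closed} family, but $\mathcal I$ need not be realized exactly, because a union of small conflict-free sets can spuriously become conflict-free. Concretely, for $\lab=\{(\emptyset,\emptyset,\{x,y\}),(\{x\},\emptyset,\{y\}),(\{y\},\emptyset,\{x\})\}$ this recipe produces the empty attack relation, whose conflict-free sets include $\{x,y\}\notin\mathcal I$. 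The fix exploits genuinely collective attacks: I add blocking self-attacks for every set outside $\mathcal I$, setting
\[
R_\lab = \{(\lambda_\lin, a) \mid \lambda \in \lab,\ \lambda(a)=\lout\} \ \cup\ \{(J,a)\mid J\subseteq \Args{\lab},\ J\notin\mathcal I,\ a\in J\},
\]
where $\Args{\lab}=A_\lab$ and Condition~(2) guarantees every source in the first block is non-empty. A self-attack $(J,a)$ with $a\in J$ renders a set non-conflict-free exactly when $J$ is a subset of it, so the second block blocks precisely the supersets of non-members, i.e.\ all of $\Args{\lab}\setminus$-complement of $\mathcal I$, while leaving every $I\in\mathcal I$ untouched (no $J\notin\mathcal I$ satisfies $J\subseteq I$, by downward closure).

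It remains to verify $\cf_\Lab(F_\lab)=\lab$ via the reformulation. I would show that the conflict-free \emph{sets} of $F_\lab$ are exactly $\mathcal I$: the blocking attacks kill every set outside $\mathcal I$, and the first block kills nothing inside $\mathcal I$ — here Condition~(6) is exactly what forbids an attack $(\mu_\lin,a)$ with $a\in\mu_\lout$ from having $\mu_\lin\cup\{a\}\subseteq\nu_\lin$ for any $\nu\in\lab$. Next, for each $I\in\mathcal I$ I would show $\mathit{att}_{F_\lab}(I)=M_I:=\bigcup\{\lambda_\lout\mid\lambda\in\lab,\ \lambda_\lin=I\}$: the blocking attacks contribute nothing on $I$, and Condition~(5) lets me upgrade any $\mu$ with $\mu_\lin\subseteq I$ and $a\in\mu_\lout$ to a labelling with $\lin$-part exactly $I$ still putting $a$ in $\lout$. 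Finally, Conditions~(3),(4),(5) together force $\{O\mid(I,O,\cdot)\in\lab\}$ to be a non-empty, subset-closed, union-closed family, hence $=2^{M_I}$; comparing with the reformulation, the conflict-free labellings with $\lin$-part $I$ are precisely those of $\lab$ with $\lin$-part $I$, and taking the union over $I\in\mathcal I$ yields the claim. The delicate point throughout is confirming the two attack blocks do not interfere — which is exactly the role played by Conditions~(5) and~(6).
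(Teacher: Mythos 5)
Your proposal is correct and follows essentially the same route as the paper: the soundness checks use the conditions in the same way, and your realizability construction $R_\lab = \{(\lambda_\lin,a)\mid \lambda\in\lab,\ \lambda(a)=\lout\}\cup\{(J,a)\mid a\in J,\ J\notin\mathcal I\}$ is exactly the paper's attack relation (its second block $\{(B,b)\mid b\in B,\ \nexists\lambda\in\lab:\lambda_\lin=B\}$ is your blocking self-attacks), with Conditions (3) and (6) securing $\supseteq$ and Conditions (4) and (5) securing $\subseteq$ just as in the paper. Your explicit reformulation of conflict-freeness into a set condition plus an out-condition, and the counterexample showing why the one-block construction fails, are presentational refinements rather than a different argument.
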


\begin{proof}[Proof sketch]
Let $F$ be an arbitrary SETAF we show that $\cf_\Lab(F)$ satisfies the conditions of the proposition. The first two conditions are clearly satisfied by the definition of conflict-free labelling. 
For Condition (3),  towards a contradiction assume that $(C,\emptyset, \Args{\lab} \setminus C)$
is not conflict-free. Then there is an attack $(B,a)$ such that $B \cup \{a\} \subseteq C\subseteq \lambda_{\lin}$,  and thus $\lambda \not\in \cf_\Lab(F)$,
a contradiction. Condition (4) is satisfied as in the definition of conflict-free labellings 
there are no conditions for labeling an argument $\lundec$. Further, the conditions that allow to label an argument $\lout$ solely depend on the $\lin$ labeled arguments. For Condition (5), consider $\lambda, \lambda' \in \cf_\Lab(F)$ with $\lambda_\lin \subseteq \lambda'_\lin$ 
and $\lambda^*=(\lambda'_\lin,\lambda_\lout \cup \lambda'_\lout, \lambda_\lundec \cap \lambda'_\lundec)$. Since $\lambda, \lambda'\in \lab$, it is easy to check that $\lambda^*$ is a well-founded labelling and $\lambda^* \in \cf_\Lab(F)$. For Condition (6), consider $\lambda, \lambda' \in \cf_\Lab(F)$ and
a set $C\subseteq\lambda_{\lout}$ containing an argument $a$ such that $\lambda(a)=\lout$.
That is, there is an attack $(B,a)$ with $B \subseteq \lambda_\lin$
and thus $\lambda_\lin \cup C \not\subseteq \lambda'_\lin$. That is, Condition (6) is satisfied.

Now assume that $\lab$ satisfies all the conditions. We give a SETAF $F_\lab=(A_\lab,R_\lab)$ with $	A_{\lab} = \Args{\lab}$ and $	R_\lab   = \{(\lambda_{\lin}, a) \mid \lambda \in \lab, \lambda(a)=\lout\}  \cup 
\{ (B, b) \mid b\in B, \nexists \lambda\in \lab: \lambda_\lin=B \}$.
To complete the proof it remains to show that $\cf_\Lab(F_\lab) =\lab$.
\nop{	with $\cf_\Lab(F_\lab) = \lab$.
\begin{align*}
A_{\lab} &= \Args{\lab}\\
R_\lab   &= \{(\lambda_{\lin}, a) \mid \lambda \in \lab, \lambda(a)=\lout\}  \cup 
\{ (B, b) \mid b\in B, \nexists \lambda\in \lab: \lambda_\lin=B \}
\end{align*}
One can show that $\cf_\Lab(F_\lab) \supseteq \lab$ and  $\cf_\Lab(F_\lab) \subseteq \lab$. }
\nop{	
We first show that for each SETAF $F$ the set $\cf_\Lab(F)$ satisfies the conditions of the proposition.
The first condition is satisfied as clearly all $\lambda \in \cf_\Lab(F)$ have the same domain.
Now, assume that $\lambda \in \cf_\Lab(F)$ assigns an argument $a$ to $\lout$.
By the definition of conflict-free labellings there is an attack $(B,a)$
such that all arguments $b\in B$ are labeled $\lin$.
Thus condition 2 is satisfied.	 
For condition 3,  towards a contradiction assume that $(C,\emptyset, \Args{\lab} \setminus C)$
is not conflict-free. Then there is an attack $(B,a)$ such that $B \cup \{a\} \subseteq C$.
But then also $B \cup \{a\} \subseteq \lambda_\lin$ and thus $\lambda \not\in \cf_\Lab(F)$,
a contradiction.
Condition 4, is satisfied as in the definition of conflict-free labellings 
there are no conditions for label an argument $\lundec$. Further, the conditions that allow to label an argument $\lout$ solely depend on the $\lin$ labeled arguments. Since $\lambda_\lout\setminus C\subseteq \lambda_{\lout}$, the condition for arguments labeled $\lout$ is satisfied.   
For condition 5 consider $\lambda, \lambda' \in \cf_\Lab(F)$ with $\lambda_\lin \subseteq \lambda'_\lin$ 
and $\lambda^*=(\lambda'_\lin,\lambda_\lout \cup \lambda'_\lout, \lambda_\lundec \cap \lambda'_\lundec)$.
First there cannot be an attack $(B,a)$ such that $B \cup \{a\} \subseteq \lambda^*_\lin$
as $\lambda' \in \cf_\Lab(F)$.
Hence, $\lambda'_\lin \cap \lambda_\lout =\emptyset$ and thus $\lambda^*$ is a well-defined labelling.
Moreover, for each $a$ with $\lambda^*(a)=\lout$ there
is an attack $(B,a)$ with $B \subseteq \lambda^*_\lin$ as either $\lambda(a)=\lout$
or $\lambda'(a)=\lout$.
Thus, $\lambda^*\in \cf_\Lab(F)$ and therefore condition 5 holds.
For condition 6 consider $\lambda, \lambda' \in \cf_\Lab(F)$ and
a set $C\subseteq\lambda_{\lout}$ containing an argument $a$ such that $\lambda(a)=\lout$.
That is, there is an attack $(B,a)$ with $B \subseteq \lambda_\lin$
and thus $\lambda_\lin \cup C \not\subseteq \lambda'$.
That is, condition 6 is satisfied.\smallskip

Now assume that $\lab$ satisfies all the conditions. We give a SETAF $F_\lab=(A_\lab,R_\lab)$
with $\cf_\Lab(F_\lab) = \lab$.
\begin{align*}
A_{\lab} &= \Args{\lab}\\
R_\lab   &= \{(\lambda_{\lin}, a) \mid \lambda \in \lab, \lambda(a)=\lout\}  \cup 
\{ (B, b) \mid b\in B, \nexists \lambda\in \lab: \lambda_\lin=B \}
\end{align*}
We first show $\cf_\Lab(F_\lab) \supseteq \lab$:
Consider an arbitrary $\lambda \in \lab$: 
First, if $\lambda(a)=\lout$ for some argument
$a$ then by construction and condition (2) we have an attack $(\lambda_{\lin}, a)$ 
and thus $a$ is legally labeled $\lout$.
Now towards a contradiction assume there is a conflict $(B,a)$ such that $B \cup \{a\} \subseteq \lambda_{\lin}$.
By condition (3) it cannot be the case that $a \in B$.
Thus, by construction there is a $\lambda' \in \lab$  with $\lambda'_{\lin}=B$,
a contradiction to condition (6).
Thus, $\lambda \in \cf_\Lab(F_\lab)$.

We complete the proof by showing $\cf_\Lab(F_\lab) \subseteq \lab$:
Consider $\lambda \in \cf_\Lab(F_\lab)$: 
If $\lambda$ maps all arguments to $\lin$ then there is no attack in $R_\lab$ which means that
$\lab$ contains only the labelling $\lambda$.
Thus we can assume that $\lambda(a)\in\{\lout,\lundec\}$ for some argument $a$.		
If $\lambda_\lin \not= \lambda'_\lin$ for all $\lambda' \in \lab$ then by construction of the second part of $R_\lab$ there would be attacks $(\lambda_\lin, b)$ for all $b \in \lambda_\lin$, which 
is in contradiction to $\lambda \in \cf_\Lab(F_\lab)$.
Thus, there is $\lambda' \in \lab$ such that $\lambda'_\lin=\lambda_\lin$.
For arguments $a$ with $\lambda(a)=\lout$ there is an attack $(B,a)$ with $B \subseteq \lambda_\lin$
and, by construction, a $\lambda^* \in \lab$ such that $\lambda^*_{\lin}=B$ and $\lambda^*(a)=\lout$.
By the existence of $\lambda' \in \lab$ and condition (5) we have that there exists
$\lambda'' \in \lab$ such that $\lambda_\lin=\lambda''_\lin$,
$\lambda'_\lout \subseteq \lambda''_\lout$ and $a \in \lambda''_\lout$.
By iteratively applying this argument for each argument a with $\lambda(a)=\lout$
we obtain that there is a labelling $\hat{\lambda} \in \lab$ such that 
$\lambda_\lin=\hat{\lambda}_\lin$ and $\lambda_\lout \subset \hat{\lambda}_\lout$.
By condition (4) we obtain that $\lambda \in\lab$.	}
\end{proof}

\nop{
\textbf{Remark:} For extension-based semantics we have that cf sets fully determine naive extensions and vice versa. For labelling-based semantics only the former is true.

\begin{proposition}\label{prop:sig_naive} The signature $\Sigma_{SETAF}^{\naive_\Lab}$ is given by all sets $\lab$
of labellings such that 
\begin{enumerate}
\item all $\lambda \in \lab$ have the same domain $\Args{\lab}$.
\item if $\lambda \in \lab$ assigns one argument to $\lout$ then it also assigns an argument to $\lin$.
\item for $\lambda \in \lab$ and $C \subseteq \lambda_\lout$ 
also $(\lambda_\lin, \lambda_\lout \setminus C, \lambda_\lundec \cup C) \in \lab$
\item for $\lambda, \lambda' \in \lab$ with $\lambda_\lin = \lambda'_\lin$ 
also $(\lambda_\lin,\lambda_\lout \cup \lambda'_\lout, \lambda_\lundec \cap \lambda'_\lundec) \in \lab$.
\item for arbitrary $\lambda, \lambda' \in \lab$ we have $\lambda_{\lin} \not\subset \lambda'_{\lin}$.
\end{enumerate}	
\end{proposition}
\begin{proof}
Let $F$ be an arbitrary SETAF. First we show that $\naive_\Lab(F)$ satisfies the conditions of the proposition. Conditions (1)-(3) are by the fact that $\naive_\Lab(F)\subseteq\cf_\Lab(F)$.
For Condition 4,  consider $\lambda, \lambda'\in\naive_\Lab(F)$ with $\lambda_\lin = \lambda'_\lin$. We know that 
for each $a \in \lambda_\lout \cup \lambda'_\lout$ there is an attack $(B,a)$ with 
$B \subseteq \lambda_\lin$. 
Thus also $(\lambda_\lin,\lambda_\lout \cup \lambda'_\lout, \lambda_\lundec \cap \lambda'_\lundec) \in \naive_\Lab(F)$. 
Finally condition 5 is satisfied by the maximality of $\lambda_\lin$ in naive labelings.

Now assume that $\lab$ satisfies all the conditions. We give a SETAF $F_\lab=(A_\lab,R_\lab)$ with $A_{\lab} = \Args{\lab}$ and $R_\lab   = \{(\lambda_{\lin}, a) \mid \lambda \in \lab, \lambda(a)=\lout\}  \cup 
\{ (\lambda_{\lin} \cup \{a\}, a) \mid \lambda \in \lab, \lambda(a)=\lundec\}$.
To complete the proof one needs to  show  that $\naive_\Lab(F_\lab) = \lab$.
\nop{
with $\cf_\Lab(F_\lab) = \lab$.
\begin{align*}
A_{\lab} &= \Args{\lab}\\
R_\lab   &= \{(\lambda_{\lin}, a) \mid \lambda \in \lab, \lambda(a)=\lout\}  \cup 
\{ (\lambda_{\lin} \cup \{a\}, a) \mid \lambda \in \lab, \lambda(a)=\lundec\}
\end{align*}
To complete the proof one needs to  show  that $\naive_\Lab(F_\lab) \supseteq \lab$ and $\naive_\Lab(F_\lab) \subseteq \lab$. }

\nop{
First we show that for each SETAF $F$ the set $\naive_\Lab(F)$ satisfies the conditions of the proposition. 
Since each naive labelling (in $F$) is a conflict-free labelling (in $F$), the first two conditions are satisfied  by Proposition \ref{prop: cf}. For Condition 3, notice that the definition of naive labellings
does not require any arguments to be labeled $\lout$. 
Thus, whenever there is a naive labelling $\lambda$ that labels some arguments $\lout$ there is also another
naive labelling $\lambda' $that labels these arguments $\lundec$ and coincide with $\lambda$ on the other arguments.	 
Given two naive labelling $\lambda, \lambda'$ with $\lambda_\lin = \lambda'_\lin$ we know that 
for each $a \in \lambda_\lout \cup \lambda'_\lout$ there is an attack $(B,a)$ with 
$B \subseteq \lambda_\lin$. 
Thus also $(\lambda_\lin,\lambda_\lout \cup \lambda'_\lout, \lambda_\lundec \cap \lambda'_\lundec) \in \naive_\Lab(F)$ and condition 4 is satisfied.	 
Finally condition 5 is by the maximality of $\lambda_\lin$ in naive labelings.

Now assume that $\lab$ satisfies all the conditions. We give a SETAF $F_\lab=(A_\lab,R_\lab)$
with $\cf_\Lab(F_\lab) = \lab$.
\begin{align*}
A_{\lab} &= \Args{\lab}\\
R_\lab   &= \{(\lambda_{\lin}, a) \mid \lambda \in \lab, \lambda(a)=\lout\}  \cup 
\{ (\lambda_{\lin} \cup \{a\}, a) \mid \lambda \in \lab, \lambda(a)=\lundec\}
\end{align*}
We first show $\naive_\Lab(F_\lab) \supseteq \lab$:
Consider an arbitrary $\lambda \in \lab$: We first show $\lambda \in \cf_\Lab(F_\lab)$.
First, if $\lambda(a)=\lout$ for some argument
$a$ then by construction and condition (2) we have an attack $(\lambda_{\lin}, a)$ 
and thus $a$ is legally labeled $\lout$.
Now towards a contradiction assume there is a conflict $(B,a)$ such that $B \cup \{a\} \subseteq \lambda_{\lin}$. 
If $|\lab|> 1$, then, by construction there is a $\lambda' \in \lab$  with $\lambda'_{\lin}=B \setminus \{a\}$, a contradiction to (5).
Thus, $\lambda \in \cf_\Lab(F_\lab)$.
Finally we show that $\lambda \in \naive_\Lab(F_\lab)$.
Towards a contradiction assume that there is a $\lambda' \in \cf_\Lab(F_\lab)$
with $\lambda_\lin \subset \lambda'_\lin$.
Let $a$ be an argument such that $\lambda'(a)=\lin$ and $\lambda(a)\in\{\lout,\lundec\}$.
By construction there is either an attack $(\lambda_\lin,a)$ or an attack $(\lambda_\lin \cup \{a\},a)$.
In both cases $\lambda' \not\in \cf_\Lab(F_\lab)$ a contradiction.
Hence, $\lambda \in \naive_\Lab(F_\lab)$.	 

We complete the proof by showing $\naive_\Lab(F_\lab) \subseteq \lab$:
Consider $\lambda \in \naive_\Lab(F_\lab)$: 
If $\lambda$ maps all arguments to $\lin$ then there is no attack in $R_\lab$ which means that
$\lab$ contains only the labelling $\lambda$.
Thus we can assume that $\lambda(a)\in\{\lout,\lundec\}$ for some argument $a$ and there is $(B,a) \in R_\lab$
with $B  \subseteq \lambda_{\lin} \cup \{a\}$.
By construction there is $\lambda' \in \lab$ such that $\lambda'_\lin=B \setminus \{a\}$.
By the above $\lambda'\in \naive_\Lab(F_\lab)$ and thus $\lambda=\lambda'_\lin$ (cf. condition 5).
Moreover, for each argument $b$ with $\lambda(b)= \lout$, by construction,
we have a $\lambda^b \in \lab$ with $\lambda^b_\lin=\lambda_\lin$ and 
$\lambda^b(b)=\lout$.
Let us next define the labelling
$$\lambda^*=(\lambda'_\lin, \lambda'_\lout \cup \bigcup_{b \in \lambda_\lout} \lambda^b_\lout, 
\lambda'_\lundec \cap \bigcap_{b \in \lambda_\lout} \lambda^b_\lundec).$$ 
By condition 4 we have that $\lambda^* \in \lab$.
By the construction of $\lambda^*$ we have $\lambda_\lout \subseteq \lambda^*_\lout$ and 
$\lambda_\lin = \lambda^*_\lin$.
Thus, by condition 3, $\lambda \in \lab$.	 }
\end{proof}
}

Finally, we give an exact characterisation of the signature of grounded semantics.

\begin{proposition}\label{prop:sig_grd}
The signature $\Sigma_{\SETAF}^{\grd_\Lab}$ is given by sets $\lab$ of labellings such that 
$|\lab|=1$, and
if $\lambda\in \lab$ assigns one argument to $\lout$ then $\lambda_{\lin}\not=\emptyset$.
\end{proposition}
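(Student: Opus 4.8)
The plan is to prove the two inclusions of the characterisation separately. For the soundness direction, fix an arbitrary SETAF $F$. To see $|\grd_\Lab(F)|=1$, I would invoke Theorem~\ref{thm:setaf_setadf}: the grounded labellings of $F$ are in one-to-one correspondence with the grounded interpretations of the associated SETADF $D$, and by Definition~\ref{def:sem:adfs} the latter is the \emph{least} fixed-point of the ($\leq_i$-monotone) operator $\Gamma_D$, which is unique; hence $F$ has exactly one grounded labelling $\lambda$. For the second condition, suppose $\lambda$ assigns some argument $a$ to $\lout$. Since every grounded labelling is in particular conflict-free, condition~(ii) of Definition~\ref{def:semanticsSETA_labF} yields an attack $(S,a)\in R$ with $\lambda(b)=\lin$ for all $b\in S$; as $S\neq\emptyset$ by Definition~\ref{def: SETAFS}, any such $b$ witnesses $\lambda_\lin\neq\emptyset$.

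For the completeness direction, let $\lab=\{\lambda\}$ satisfy the two conditions. I would realise it by the SETAF $F_\lab=(A_\lab,R_\lab)$ with $A_\lab=\Args{\lab}$ and
\[
R_\lab=\{(\lambda_\lin,a)\mid a\in\lambda_\lout\}\cup\{(\{a\},a)\mid a\in\lambda_\lundec\}.
\]
The design idea is that arguments in $\lambda_\lin$ receive no attack, arguments in $\lambda_\lout$ are attacked by the whole set $\lambda_\lin$, and arguments in $\lambda_\lundec$ carry a self-attack. Crucially, the attacks of the first kind are well-formed (they have a non-empty source) precisely because the second condition of the proposition guarantees $\lambda_\lin\neq\emptyset$ whenever $\lambda_\lout\neq\emptyset$.

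It then remains to verify $\grd_\Lab(F_\lab)=\{\lambda\}$. I would first check $\lambda\in\com_\Lab(F_\lab)$: each $a\in\lambda_\lin$ is unattacked and hence correctly $\lin$; each $a\in\lambda_\lout$ has the all-$\lin$ attacker $\lambda_\lin$ and is thus $\lout$; and each $a\in\lambda_\lundec$ has only the self-attack $(\{a\},a)$, which forbids $\lin$ (its sole attacker is not $\lout$) and forbids $\lout$ (its sole attacker is not $\lin$), forcing $\lundec$; conflict-freeness is immediate from the same case analysis. To upgrade completeness to groundedness, I would observe that every $a\in\lambda_\lin$ is unattacked in $F_\lab$ and therefore labelled $\lin$ in \emph{every} complete labelling; thus $\lambda_\lin$ is contained in the in-set of each complete labelling, so $\lambda$ has the $\subseteq$-least in-set and is grounded. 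Uniqueness then follows because any complete labelling whose in-set equals $\lambda_\lin$ has its out-set fixed by condition~(ii) of completeness and hence coincides with $\lambda$.

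The main obstacle is this last step: arguing that the constructed $\lambda$ is the grounded (least-in-set) labelling rather than merely some complete one, and in particular that the self-attacked arguments remain $\lundec$ instead of being dragged to $\lout$ or $\lin$. Both points are settled by the two structural observations above — that the $\lambda_\lin$-arguments are unattacked, which forces their presence in the in-set of every complete labelling, and that an argument whose only attacker is itself can be neither legally $\lin$ nor legally $\lout$.
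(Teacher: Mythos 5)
Your proof is correct and follows essentially the same strategy as the paper's: the same conflict-freeness argument for the second condition, and the same realizing construction (the paper attacks each undecided argument $a$ with $\lambda_\lin\cup\{a\}$ rather than your pure self-attack $\{a\}$, but for grounded semantics both choices force $a$ to remain $\lundec$, and both verifications proceed by noting that $\lambda_\lin$ is unattacked and hence contained in the in-set of every complete labelling). Your detour through Theorem~\ref{thm:setaf_setadf} and the least-fixed-point characterization to obtain $|\grd_\Lab(F)|=1$ is a slightly cleaner justification of uniqueness than the paper's direct minimality argument, but it does not constitute a different proof.
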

\nop{
\begin{proof}
We first show that for each SETAF $F$ the set $\grd_\Lab(F)$ satisfies the conditions of the proposition. Towards a contradiction assume that there are  $\lambda, \lambda'\in\grd_\Lab$ with $\lambda\not=\lambda'$. By the definition of grounded labelling $\lambda_\lin$ $\lambda_{\lin}$ are  $\subseteq$-minimal among all complete labelling, thus, $\lambda_{\lin}=\lambda_{\lin}'$. Assume that $\lambda_{\lout}\subset\lambda_{\lout}'$. Since each grounded labelling is a conflict-free,  for each $a$ with $a\in\lambda_{\lout}'$ there is $(B, a)$ such that $B\subseteq\lambda_{\lin}'$. Since $\lambda_{\lin}=\lambda_{\lin}'$, $a\in\lambda_{\lout}$. Therefore, $\lambda=\lambda'$. Now, assume that $\lambda\in\grd_\Lab(F)$ assigns an argument $a$ to $\lout$. By the definition of conflict-free labeling there is an attack $(B, a)$ such that $B\subseteq\lambda_{\lin}$. 

Now assume that $\lab$ satisfies all the conditions. We give a SETAF $F_\lab=(A_\lab,R_\lab)$
with $\grd_\Lab(F_\lab) = \lab$.
\begin{align*}
A_{\lab} &= \Args{\lab}\\
R_\lab   &= \{(\lambda_{\lin}, a) \mid \lambda \in \lab, \lambda(a)=\lout\}  \cup 
\{ (\lambda_{\lin} \cup \{a\}, a) \mid \lambda \in \lab, \lambda(a)=\lundec\}
\end{align*}
Consider the unique $\lambda\in \lab$ 
and the unique $\lambda^G \in \grd_\Lab(F_\lab$.
For each argument $a \in \lambda_\lin$ we have that $a$ is not attacked in  $F_\lab$
and thus $a \in \lambda^G_\lin$.
For each argument $a \in \lambda_\lout$ there is an attack  $(\lambda_{\lin}, a)$ in $F_\lab$
and as $\lambda_\lin \subseteq \lambda^G_\lin$ by the definition of complete labellings 
we have $a \in \lambda^G_\lout$.
Finally for each argument $a \in \lambda_\lundec$ the attack $(\lambda_{\lin} \cup \{a\}, a)$ is the only attack towards $a$ in $F_\lab$. 
Thus, by the definition of complete labellings, we have that $a$ is neither labelled $\lin$ nor $\lout$ in $F_\lab$ and therefore $a \in \lambda^G_\lundec$.
We obtain that $\lambda^G = \lambda$ and thus $\grd_\Lab(F_\lab) = \lab$.
\end{proof}}

Notice that Proposition~\ref{prop:sig_grd} basically exploits that grounded semantics is a unique status semantics 
based on admissibility. The result thus immediately extends to other semantics satisfying these two properties, e.g.\ to ideal or eager semantics~\cite{flouris2019comprehensive}.

So far, we have provided characterisations for the signatures
$\Sigma_{\SETAF}^{\stb_\Lab}$,
$\Sigma_{\SETAF}^{\pref_\Lab}$,
$\Sigma_{\SETAF}^{\cf_\Lab}$,
$\Sigma_{\SETAF}^{\grd_\Lab}$.
By Theorem~\ref{thm:setaf_setadf} we get analogous characterizations of $\Sigma_{\SETADF}^{\sigma}$ for the corresponding ADF semantics.

We have not yet touched admissible and complete semantics. Here, the exact
characterisations seem to be more cumbersome and are left for future work. 
However, for admissible semantics 
the following proposition provides necessary conditions for an labelling-set to be $\adm$-realizable,
but it remains open whether they are also sufficient.
\begin{proposition}\label{prop:sig_adm} For each $\lab \in \Sigma_{SETAF}^{\adm_\Lab}$ we have:
\begin{enumerate}
\item all $\lambda \in \lab$ have the same domain
$\Args{\lab}$.
\item If $\lambda \in \lab$ assigns one argument to $\lout$ then it also assigns an argument to $\lin$.
\item For $\lambda, \lambda' \in \lab$ and $C \subseteq \lambda_\lout$ (s.t. $C\not=\emptyset$) we have $\lambda_\lin \cup C \not\subseteq \lambda'_\lin$.
\item For arbitrary $\lambda, \lambda' \in \lab$ either
(a) $(\lambda_\lin \cup \lambda'_\lin, \lambda_\lout \cup \lambda'_\lout, \lambda_\lundec \cap \lambda'_\lundec) \in \lab$ or 
(b) there is an argument $a$ such $\lambda(a)=\lin$ and $\lambda'(a)=\lout$.
\item

For $\lambda, \lambda'\! \in\! \lab$ 
with $\lambda_\lout \subseteq \lambda'_\lout$, 
and $C \subseteq \lambda_\lin \setminus \bigcup_{\lambda^* \in \lab:\ \lambda^*_{\lin}=\lambda'_{\lin}} \lambda^*_{\lout}$ 
we have $(\lambda'_\lin \cup C,\lambda'_\lout, \lambda'_\lundec \setminus C) \in \lab$.

\item For $\lambda, \lambda' \in \lab$ 
with $\lambda_\lin \subseteq \lambda'_\lin$, 
and $C \subseteq \lambda_\lout$ 
we have $(\lambda'_\lin,\lambda'_\lout  \cup C, \lambda'_\lundec \setminus C) \in \lab$.
\item For $\lambda, \lambda' \in \lab$ 
with $\lambda_\lin \subseteq \lambda'_\lin$ and
$\lambda_\lout \supseteq \lambda'_\lout$
we have $(\lambda_\lin,\lambda'_\lout, \Args{\lab} \setminus (\lambda_\lin \cup \lambda'_\lout)) \in \lab$.    
\item $(\emptyset, \emptyset, \Args{\lab}) \in \lab$.
\end{enumerate}	
\end{proposition}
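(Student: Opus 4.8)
The plan is to prove necessity directly: fix an arbitrary SETAF $F=(A,R)$, set $\lab = \adm_\Lab(F)$, and verify each of the eight items in turn. Throughout I would lean on the two defining features of an admissible labelling $\lambda$: every $a$ with $\lambda(a)=\lin$ is \emph{defended}, i.e.\ every attack $(S,a)\in R$ has some $b\in S$ with $\lambda(b)=\lout$; and every $a$ with $\lambda(a)=\lout$ is \emph{justified}, i.e.\ some attack $(B,a)\in R$ has $B\subseteq\lambda_\lin$. Items (1), (2) and (8) are immediate: all admissible labellings of $F$ share the domain $A$; a $\lout$-argument is justified by a (necessarily non-empty) attacker set contained in $\lambda_\lin$, forcing a $\lin$-argument; and the all-$\lundec$ labelling is vacuously admissible, so it lies in $\lab$. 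Item (3) follows from conflict-freeness: if $a\in C\subseteq\lambda_\lout$ then a justifying attack $(B,a)$ has $B\subseteq\lambda_\lin$, so $\lambda_\lin\cup C\subseteq\lambda'_\lin$ would place $B\cup\{a\}$ inside $\lambda'_\lin$, contradicting that $\lambda'$ is conflict-free.

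The combination items (5)--(7) all follow the same recipe: build the candidate labelling, check it is well-defined (each affected argument receives a single label, using conflict-freeness to exclude clashes), then verify admissibility by supplying each in-argument a defender and each out-argument a justifying attack, \emph{drawing these from whichever of $\lambda,\lambda'$ the subset relations make available}. For (6), where $\lambda_\lin\subseteq\lambda'_\lin$ and $C\subseteq\lambda_\lout$ is added to $\lambda'_\lout$, the in-set $\lambda'_\lin$ is defended by $\lambda'$, while each $a\in C$ is justified by an attack $(B,a)$ with $B\subseteq\lambda_\lin\subseteq\lambda'_\lin$; well-definedness ($C\cap\lambda'_\lin=\emptyset$) again uses conflict-freeness of $\lambda'$. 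For (7), where $\lambda_\lin\subseteq\lambda'_\lin$ and $\lambda_\lout\supseteq\lambda'_\lout$, the crucial point is to split the two roles across the two labellings: the in-set $\lambda_\lin\subseteq\lambda'_\lin$ is defended by $\lambda'$ (against an attack $(S,a)$ on $a\in\lambda_\lin$, admissibility of $\lambda'$ supplies $b\in S$ with $\lambda'(b)=\lout$, i.e.\ $b$ in the new $\lout$-set $\lambda'_\lout$), whereas each $a\in\lambda'_\lout\subseteq\lambda_\lout$ is justified by $\lambda$ via an attack from inside $\lambda_\lin$, the new $\lin$-set. Item (5) is analogous, promoting $C\subseteq\lambda_\lin$ to $\lin$ on top of $\lambda'$; here I would use $\lambda_\lout\subseteq\lambda'_\lout$ to ensure the defenders of the newly promoted arguments (taken from $\lambda$) still lie in the target $\lout$-set $\lambda'_\lout$, and the hypothesis that $C$ avoids $\lambda'_\lout$ (a special case of avoiding the stated union, since $\lambda'$ is among the $\lambda^*$ with $\lambda^*_\lin=\lambda'_\lin$) to guarantee well-definedness.

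The main obstacle is item (4). After observing that if alternative (b) fails there is no argument with $\lambda(a)=\lin$ and $\lambda'(a)=\lout$, one must additionally rule out the symmetric clash $\lambda(a)=\lout,\ \lambda'(a)=\lin$ before the union labelling $\lambda^*=(\lambda_\lin\cup\lambda'_\lin,\,\lambda_\lout\cup\lambda'_\lout,\,\lambda_\lundec\cap\lambda'_\lundec)$ is even well-defined. I would derive this from admissibility: if $\lambda(a)=\lout$ then some attack $(B,a)$ has $B\subseteq\lambda_\lin$, and if $\lambda'(a)=\lin$ then $\lambda'$ defends $a$, yielding $b\in B$ with $\lambda'(b)=\lout$; but then $b\in\lambda_\lin$ with $\lambda'(b)=\lout$ is exactly the forbidden pattern (b), a contradiction. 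With well-definedness secured, admissibility of $\lambda^*$ is routine: any in-argument is defended through $\lambda$ or $\lambda'$ (its defender is $\lout$ in the respective labelling and, by the absence of clashes, also $\lout$ in $\lambda^*$), and any out-argument is justified through the labelling that already had it $\lout$. I expect the bookkeeping in (4), together with the correct role-assignment of $\lambda$ versus $\lambda'$ in (5)--(7), to be where the argument must be written most carefully; the remaining items are short.
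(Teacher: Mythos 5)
Your proposal is correct and follows essentially the same route as the paper's proof: items (1)--(3) and (8) from conflict-freeness and the vacuity of the all-$\lundec$ labelling, the union construction for (4) with the clash $\lambda(a)=\lout,\lambda'(a)=\lin$ ruled out via the defence of $a$ by $\lambda'$, and the relabelling constructions for (5)--(7) with defenders and justifying attacks drawn from whichever of $\lambda,\lambda'$ the subset hypotheses make available. You in fact spell out the well-definedness step in (4) more explicitly than the paper, which only asserts it parenthetically.
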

\begin{proof}
We show that for each SETAF $F$ the set $\adm_\Lab(F)$ satisfies the conditions of the proposition.
Conditions (1)--(3) are by the fact that $\adm_\Lab(F) \subseteq \cf_\Lab(F)$.
For Condition~(4), let $\lambda, \lambda'\in \adm_\Lab(F)$ with  $\lambda_\lin\cap\lambda'_{out}= \{\}$ (since each admissible labelling defends itself, $\lambda'_\lin\cap\lambda_{out}= \{\}$). 
Thus, $\lambda^*=(\lambda_\lin \cup \lambda'_\lin, \lambda_\lout \cup \lambda'_\lout, \lambda_\lundec \cap \lambda'_\lundec) $ is a well-defined labelling. Further, since $\lambda, \lambda'\in \adm_\Lab(F)$ it is easy to check that $\lambda^* \in\adm_\Lab(F)$. 
For Condition (5), let $\lambda^* = (\lambda'_\lin \cup C,\lambda'_\lout, \lambda'_\lundec \setminus C) $.   First, $\lambda^*$ is a well-defined labelling. 
Notice that the set $C$ contains arguments defended by $\lambda$ and not attacked by $\lambda'_\lin$.
Now, it is easy to check that $\lambda^*$ meets the condition for being an admissible labelling.
For Condition (6), let $\lambda^* = (\lambda'_\lin,\lambda'_\lout  \cup C, \lambda'_\lundec \setminus C) $. 
Notice that the set $C$ contains only arguments attacked by $\lambda_\lin$ and thus are also attacked by
$\lambda'_\lin$. Thus, starting from the admissible labelling $\lambda'$ we can relabel arguments in $C$ to $\lout$ and obtain that $\lambda^*$ is also an admissible labelling.
For Condition (7), let $\lambda^* = (\lambda_\lin,\lambda'_\lout, \Args{\lab} \setminus (\lambda_\lin \cup \lambda'_\lout)) $. First, $\lambda^* $ is a well-defined labelling. 
We have that setting $\lambda'_\lout$ to $\lout$ is sufficient to make all the $\lin$ labels for arguments in $\lambda'_\lin$ valid and thus are also sufficient to make 
the $\lin$ labels for arguments $\lambda_\lin \subseteq \lambda'_\lin$ valid.
Moreover, as $\lambda_\lout \supseteq \lambda'_\lout$ also labelling arguments $\lambda_\lin$ with $\lin$ is sufficient to make the $\lout$ labels for $\lambda'_\lout$ valid.
Hence, $\lambda^*$ is admissible.
For Condition (8), 
the conditions of admissible labelling for arguments labelled $\lin$ or $\lout$ in 
$(\emptyset, \emptyset, \Args{\lab})$
are clearly met, since there are no such arguments. 
\nop{
We show that for each SETAF $F$ the set $\adm_\Lab(F)$ satisfies the conditions of the proposition.
Conditions (1)-(3) are by the fact that $\adm_\Lab(F) \subseteq \cf_\Lab(F)$.
For condition $4$, let $\lambda, \lambda'$ be admissible labellings such that  $\lambda_\lin\cap\lambda'_{out}= \{\}$ (since each admissible labelling defends itself, $\lambda'_\lin\cap\lambda_{out}= \{\}$). 
Thus, $\lambda^*=(\lambda_\lin \cup \lambda'_\lin, \lambda_\lout \cup \lambda'_\lout, \lambda_\lundec \cap \lambda'_\lundec) $ is a well-defined labelling.
Consider that 
$\lambda^* (a)=\lin$, 
that is,  either $\lambda(a)=\lin$ or $\lambda'(a)=\lin$. 
Since $\lambda, \lambda'$ are admissible labellings, for each conflict $(B, a)$ there exists $b\in B$ s.t. $\lambda(b)=\lout$ in the former case and $\lambda'(b)=\lout$ in the latter case. Thus, for each conflict $(B, a)$ there exists $b\in B$ s.t.  $\lambda^*(b)=\lout$. Moreover, if $\lambda^* (a)=\lout$ there is an attack $(B, a)$ with $B\subseteq\lambda_\lin$ or $B\subseteq\lambda'_\lin$, that is, there exists a conflict $(B, a)$  such that $B\subseteq \lambda^*_\lin$. 
On the other hand, assume that $\lambda_\lin\cap\lambda'_{out}\not= \{\}$, for instance, $a\in \lambda_\lin\cap\lambda'_{out}$. Therefore, $a\in\lambda_\lin^*$ and $a\in\lambda_\lout^*$. That is, $\lambda^* $ is not a well-defined labelling. 

For condition $5$, let $\lambda^* = (\lambda'_\lin \cup C,\lambda'_\lout, \lambda'_\lundec \setminus C) $.   By the definition of $C$, it is easy to check that  $\lambda^*_\lin\cap\lambda^*_\lout=\{\}$, $\lambda^*_\lin\cap\lambda^*_\lundec=\{\}$, and $\lambda^*_\lout\cap\lambda^*_\lundec=\{\}$ hold. Thus, $\lambda^*$ is a well-defined labelling.  In the definition of admissible labelling there is no condition for label an argument $\lundec$.  Further,  $\lambda^*_\lout=\lambda'_\lout$, $\lambda_\lin'\subseteq\lambda_\lin^*$ and $\lambda'$ is an admissible labelling, therefore,  the condition for arguments which are labelled $\lout$ in $\lambda^* $ are also satisfied. For argument $a$ with $\lambda^*(a)=\lin$ either $a\in\lambda'_\lin$ or $a\mapsto\lin\in C\subseteq\lambda_\lin$. Each of them implies that for each conflict $(B, a)$ there exists $b\in B$ s.t. $\lambda^* (b)=\lout$,  since $\lambda, \lambda'$ are admissible labelling and $\lambda_\lout\subseteq\lambda_\lout'$. Thus, $\lambda^*$ is an admissible labelling. 

For Condition (6), first we show that $\lambda'_\lin\cap(\lambda'_\lout  \cup C)=\{\}$. To this end, let $a\in C$ we show that $a\not\in\lambda'_{\lin}$. Since $C\subseteq\lambda_{\lout}$, there exists $(B, a)\in R$ such that   $\lambda(b)=\lin$ for all $b\in B$. By the assumption of this condition, namely  $\lambda_{\lin}\subseteq\lambda'_{\lin}$, the relation $B\subseteq \lambda'_\lin$ holds. Thus, $\lambda'(a)\not = \lin$. 
Since $\lambda'\in \adm_\Lab(F)$,  to show that $\lambda^*\in \adm_\Lab(F)$ it is enough to show that each $a\in C$ is actually labelled $\lout$ in $\lambda^*$. This condition is trivially satisfied, because $C\subseteq\lambda_\lout$, $\lambda_{\lin}\subseteq\lambda_{\lin}'$ and $\lambda'\in \adm_\Lab(F)$.

For condition $7$, it is enough to show that $\lambda_{\lin}\cap\lambda'_{out}=\{\}$, $\lambda_{\lin}\cap (\Args{\lab} \setminus (\lambda_\lin \cup \lambda'_\lout))=\{\}$, and $\lambda_{\lout}\cap (\Args{\lab} \setminus (\lambda_\lin \cup \lambda'_\lout))=\{\}$. Let $\lambda^*= (\lambda_\lin,\lambda'_\lout, \Args{\lab} \setminus (\lambda_\lin \cup \lambda'_\lout))$. For $a$ with $\lambda^*(a)=\lin$ ($a\in\lambda_{\lin}$) it holds that     $a\not\in\lambda_{out}$, because $\lambda\in\adm_\Lab(F)$.  Further, since $\lambda_{\lout}'\subseteq\lambda_{\lout}$, $a\not\in\lambda_{out}'$, that is, $a\not\in\lambda^*_\lout$. If $a\in\lambda_{\lout}^*$ ($a\in\lambda_{\lout}'$), since   $\lambda_{\lout}'\subseteq\lambda_{\lout}$, $a\in\lambda_{\lout}$. Therefore, $a\not\in\lambda_\lin$ as $\lambda\in\adm_\Lab(F)$. Thus, $a\not\in\lambda^*_\lin$. Moreover, $a$ is  included either in $\lambda^*_{\lin}$ or   $\lambda^*_{\lout}$ if and only if $a\not\in(\Args{\lab} \setminus (\lambda_\lin \cup \lambda'_\lout))$.  On the other hand, condition of admissible labelling for arguments labelled  $\lout$ in $\lambda^*$ are trivially  satisfied as $\lambda^*_\lin=\lambda_{\lin}$ and $\lambda_{\lout}^*\subseteq\lambda_{\lout}$. Towards a contradiction, assume that $\lambda^*(a)=\lin$ and there exists conflict $(B, a)$ s.t. for each $b\in B$, $\lambda^*(b)\not=\lout$, that is, $\lambda^*(b)=\lin/\lundec$. If $\lambda^* (b)=\lin$, then $\lambda (b)=\lin$ and if $\lambda^* (b)=\lundec$, then $b\not\in\lambda_{\lout}'\subseteq\lambda_{\lout}$. That is, $\lambda (b)\not=\lout$ for each $b\in B$.    
This is a contradiction with the assumption that $\lambda\in \adm_\Lab(F)$. 

For condition $8$ let $\lambda=(\emptyset, \emptyset, \Args{\lab})$. The conditions of admissible labelling for arguments labelled with $\lin$ or $\lout$ in $\lambda$ are satisfied, there is no such an argument,  and there is no condition for arguments labelled with $\lundec$ in the conditions of admissible labelling. Thus, $\lambda\in\adm_\Lab(F)$.}
\end{proof}

\section{On the Relation between SETAFs and Support-Free ADFs}\label{sec: expressiveness of SFADF}
In order to compare SETAFs with SFADFs, we can rely on SETADFs (recall Theorem~\ref{thm:setaf_setadf}). In particular, we will compare the signatures $\Sigma_{SETADF}^{{\sigma}}$ and
$\Sigma_{SFADF}^{{\sigma}}$, cf.\ Definition~\ref{def:sig}.
We start with the observation that each SETADF can be rewritten as an equivalent SETADF  that is also a SFADF.\footnote{
As discussed in~\cite{Polberg17}, in general, SETAFs translate to bipolar ADFs
that contain attacking and redundant links. 
However, when we first remove redundant attacks from the SETAF we obtain a SFADF.}

\begin{lemma}\label{lem: SETADF is a SFADF}
For each SETADF  $D=(S, L, C)$  there is an equivalent SETADF $D'=(S, L', C')$
that is also a SFADF, i.e.\
for each $s \in S$, $\varphi_s \in C$, $\varphi'_s \in C'$ we have 
$\varphi_s \equiv \varphi'_s$. 
\end{lemma}
\begin{proof}
Given a SETADF $D$, by Definition~\ref{def: SETADF}, each acceptance condition is a CNF over 
negative literals and thus does not have any support link which is not redundant. 
We can thus obtain $L'$ by removing the redundant links from $L$
and $C'$ by, in each acceptance condition, deleting the clauses that are super-sets of other clauses.
\end{proof}	

By the above we have that $\Sigma_{\SETADF}^{\sigma}\subseteq \Sigma_{\SFADF}^{\sigma}$. 
Now consider the interpretation $v=\{a\mapsto\tvf\}$. 
We have that for all considered semantics $\sigma$, $v$ is a $\sigma$-interpretation of the SFADF $D=(\{a\}, \{\varphi_{a}=\bot\})$
but there is no SETADF with $v$ being a $\sigma$-interpretation. We thus obtain $\Sigma_{\SETADF}^{\sigma}\subsetneq \Sigma_{SFADF}^{\sigma}$.

\begin{theorem}\label{thm: real SETAF and SFADF}
$\Sigma_{\SETADF}^{\sigma}\subsetneq \Sigma_{SFADF}^{\sigma}$, for $\sigma\in\{\cf, \adm, \stb, \model, \com, \pref, \grd\}$.
\end{theorem}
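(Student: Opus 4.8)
The plan is to prove the statement in two parts: the inclusion $\Sigma_{\SETADF}^{\sigma}\subseteq \Sigma_{\SFADF}^{\sigma}$, and then strictness by producing a single witness interpretation-set that lies in the right-hand side but in no SETADF signature. The inclusion I would obtain directly from Lemma~\ref{lem: SETADF is a SFADF}: given any SETADF $D$, the lemma yields a SETADF $D'$ that is also a SFADF and whose acceptance conditions are argument-wise logically equivalent to those of $D$. Since the characteristic operator $\Gamma_D$, and hence every semantics in Definition~\ref{def:sem:adfs}, depends on the acceptance conditions only up to logical equivalence, we get $\sigma(D)=\sigma(D')$ for each $\sigma$ under consideration. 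As $D'$ is a SFADF, this gives $\sigma(D)\in\Sigma_{\SFADF}^{\sigma}$, establishing the inclusion uniformly for all seven semantics.

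For strictness I would fix the SFADF $D=(\{a\},\{\varphi_a=\bot\})$ together with $v=\{a\mapsto\tvf\}$. Because $\varphi_a^{w}=\bot$ for every interpretation $w$, the operator $\Gamma_D$ is constant with value $v$; hence $v$ is conflict-free and is the unique complete, grounded and preferred interpretation as well as the unique two-valued (and stable) model, so $v\in\sigma(D)$ and $\sigma(D)\in\Sigma_{\SFADF}^{\sigma}$ for every $\sigma$. It then remains to show $\sigma(D)\notin\Sigma_{\SETADF}^{\sigma}$. I would argue by contradiction: suppose some SETADF $D''$ satisfies $\sigma(D'')=\sigma(D)$. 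Since interpretations are total functions on the argument set and every interpretation in $\sigma(D)$ has domain $\{a\}$ with $v\in\sigma(D)$, the argument set of $D''$ must be $\{a\}$ and $v\in\sigma(D'')$. Now by Definition~\ref{def: SETADF} a single-argument SETADF has $\varphi_a$ a conjunction of clauses each equal to $\neg a$, so $\varphi_a\in\{\top,\neg a\}$ up to equivalence, and in either case $\varphi_a^{v}=\varphi_a[a/\bot]$ is a tautology, hence satisfiable. But conflict-freeness requires $\varphi_a^{v}$ to be unsatisfiable whenever $v(a)=\tvf$, so $v\notin\cf(D'')$; since all seven semantics refine conflict-freeness ($\sigma(D'')\subseteq\cf(D'')$), this contradicts $v\in\sigma(D'')$. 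Thus no SETADF realises $\sigma(D)$, and the inclusion is strict.

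The equivalence-invariance of the semantics and the verification that $v\in\sigma(D)$ are routine. The one point needing genuine care — and the actual content of the separation — is the claim that a SETADF can never legitimately assign $\tvf$ to a lone argument. This rests on the structural fact that SETADF acceptance conditions are CNFs over negative literals and are therefore satisfied by the all-false valuation, so the conflict-free requirement of \emph{un}satisfiability for a $\tvf$-label can never be met when $a$ is the only argument. This is precisely the behaviour that an unsatisfiable condition such as $\bot$ supplies to a SFADF and that SETADFs structurally lack, which is why it is exactly the unsatisfiable acceptance conditions that discriminate the two classes.
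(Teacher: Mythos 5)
Your proof is correct, and it takes a mildly but genuinely different route from the paper's. The inclusion $\Sigma_{\SETADF}^{\sigma}\subseteq\Sigma_{\SFADF}^{\sigma}$ via Lemma~\ref{lem: SETADF is a SFADF} is exactly the paper's first step, and your witness $D=(\{a\},\{\varphi_a=\bot\})$ is also the paper's witness for most semantics. The difference is in how non-realizability in SETADFs is established. The paper argues case by case: it invokes the signature characterisations (Propositions~\ref{prop:sig_stb}--\ref{prop:sig_grd} and~\ref{prop:sig_adm}), handles $\model$ via Proposition~\ref{prop:mod=stb} and $\com$ via a coincidence-with-grounded argument, and for $\adm$ it even switches to a separate two-argument witness, since $\adm(D)$ of the one-argument ADF contains the trivial interpretation alongside $\{a\mapsto\tvf\}$. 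You instead take $\sigma(D)$ itself as the witness set and give a single first-principles argument: any SETADF realizing it must have argument set $\{a\}$, its acceptance condition is $\top$ or $\neg a$ up to equivalence, either is satisfied once $a$ is set to false, so $\{a\mapsto\tvf\}$ violates the conflict-freeness requirement of an unsatisfiable partially evaluated condition, and every semantics under consideration refines conflict-freeness. This buys uniformity across all seven semantics (including $\cf$ and $\adm$, which the paper treats separately) and independence from the signature propositions; the paper's version buys brevity given that those propositions are already in place. Two small points you lean on implicitly and should state: the chain $\stb(D)\subseteq\model(D)\subseteq\pref(D)\subseteq\com(D)\subseteq\adm(D)\subseteq\cf(D)$ together with $\grd(D)\subseteq\com(D)$ holds for every ADF (standard, but it is the load-bearing step), and equality of the interpretation-sets forces equality of the underlying argument sets only because $\sigma(D)\ni v$ is non-empty. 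Neither is a gap, merely a line each to add.
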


In the remainder of this section we aim to characterise the difference between $\Sigma_{\SETADF}^{\sigma}$
and $\Sigma_{SFADF}^{\sigma}$. 
To this end we first recall a characterisation of the acceptance conditions of SFADF that can be rewritten as collective attacks.

\begin{lemma}\cite{Wallner19}\label{lemma: SFADF in CNF}
Let $D=(S, L, C)$ be a SFADF. If $s\in S$ has at least one incoming link
then the acceptance condition 
$\varphi_{s}$ can be written in 
CNF containing only negative literals.  
\end{lemma}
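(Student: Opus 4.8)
The plan is to convert the syntactic target (a negative CNF) into a semantic monotonicity property and then read off the CNF from the structure of the models. First I would unfold what it means for $D$ to be support-free: by Definition~\ref{def: type of links} every link $(b,s)\in L$ is attacking, i.e.\ for every two-valued interpretation $v$ of \parents{s} we have that $v(\varphi_s)=\tvf$ implies $\update{v}{b}{\tvt}(\varphi_s)=\tvf$. Taking the contrapositive and quantifying over all parents, this says precisely that $\varphi_s$, viewed as a Boolean function on \parents{s}, is \emph{antitone}: enlarging the set of arguments assigned $\tvt$ can never flip its value from $\tvf$ to $\tvt$. Ordering two-valued interpretations by true-set inclusion, antitonicity is equivalent to the models of $\varphi_s$ forming a downward-closed family, and hence the non-models forming an upward-closed family; this is the real content of the attacking-link hypothesis and the engine of the whole argument.

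Second, I would construct the CNF directly from the minimal falsifying assignments. Let $M_1,\dots,M_k\subseteq\parents{s}$ be the $\subseteq$-minimal true-sets among the non-models of $\varphi_s$ (finitely many, since \parents{s} is finite), and consider $\bigwedge_{j=1}^{k}\bigvee_{b\in M_j}\neg b$. A clause $\bigvee_{b\in M_j}\neg b$ is falsified by exactly those interpretations whose true-set contains $M_j$. Because the non-models are upward closed, each non-model contains some minimal $M_j$ and therefore falsifies the CNF. Conversely, if a model $v$ had $M_j\subseteq v^{\tvt}$ for some $j$, then by downward closure the interpretation with true-set $M_j$ would itself be a model, contradicting that $M_j$ is a non-model; so every model satisfies the CNF. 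Thus $\varphi_s$ and the CNF have the same models, giving $\varphi_s\equiv\bigwedge_{j}\bigvee_{b\in M_j}\neg b$, a CNF over negative literals only.

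Finally I would use the hypothesis that $s$ has at least one incoming link to rule out the degenerate cases. Since in this framework an argument is a parent of $s$ exactly when $\varphi_s$ depends on it, an incoming link forces $\varphi_s$ to be non-constant, so it has both a model and a non-model. As $\varphi_s$ is antitone, the all-false interpretation realises its largest value and is therefore a model; hence $\emptyset$ is not among the $M_j$, every clause produced above is non-empty, and the conjunction itself is non-empty. I expect this final bookkeeping to be the only genuinely delicate point: had $s$ no incoming link, $\varphi_s$ could be the constant $\bot$, which cannot be expressed as a conjunction of non-empty negative clauses (the all-false interpretation satisfies any such conjunction), so the hypothesis is exactly what is needed. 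The core equivalence, by contrast, is a routine consequence of antitonicity via the down-set/up-set duality, with the standard representation of monotone/antitone functions doing the heavy lifting.
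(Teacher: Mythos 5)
Your proof is correct, but it takes a genuinely different route from the one the paper (following \cite{Wallner19}) uses. The paper's argument is syntactic and local: it starts from an \emph{arbitrary} CNF for $\varphi_s$ and eliminates positive literals one at a time, arguing that a positive occurrence of a parent $t$ can be deleted from every clause because, by the attacking property, no model can satisfy a clause only via $t$ (otherwise flipping $t$ to $\tvf$ would produce a non-model below a model). Your argument instead converts the attacking hypothesis into the global semantic statement that the models of $\varphi_s$ form a down-set under true-set inclusion, and then builds the \emph{canonical} negative CNF from the $\subseteq$-minimal falsifying true-sets $M_1,\dots,M_k$ --- the antitone analogue of the minimal-implicant representation of monotone functions. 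What your approach buys is an irredundant normal form whose clauses form an antichain and correspond exactly to the collective attacks of the resulting SETADF, which is conceptually tidier and makes the later translation in Theorem~\ref{thm: delta= sfadf setadf} transparent; the paper's approach avoids having to enumerate minimal non-models but leaves more verification implicit. One caveat on your last paragraph: the paper defines \parents{s} by \emph{syntactic occurrence} in $\varphi_s$, not by semantic dependence, so an incoming link does not by itself force $\varphi_s$ to be non-constant (e.g.\ $\varphi_s = a \land \neg a$ is support-free, has a parent, and is equivalent to $\bot$, which your construction renders as a single empty clause). This degenerate case is glossed over in the paper's own treatment as well --- the lemma is only ever applied to arguments with satisfiable acceptance conditions --- but if non-empty clauses are intended you should either assume satisfiability of $\varphi_s$ explicitly or note that the all-$\tvf$ assignment being a model (which you correctly derive from antitonicity plus the existence of some model) is the precise condition guaranteeing every $M_j$ is non-empty.
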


It remains to consider those arguments in an SFADF with no incoming links. Such
arguments allow for only two acceptance conditions $\top$ and $\bot$. 
While condition $\top$ is unproblematic (it refers to an initial argument in a 
SETAF), an argument with unsatisfiable acceptance condition cannot be modeled in a SETADF.
In fact,
the different expressiveness of SETADFs and SFADFs is solely rooted in the capability of
SFADFs to set an argument to $\tvf$ via a $\bot$ acceptance condition.

We next give a generic characterisations of the difference between $\Sigma_{\SETADF}^{\sigma}$
and $\Sigma_{SFADF}^{\sigma}$.  

\begin{theorem}\label{thm: delta= sfadf setadf}
For $\sigma\in \{\cf, \adm, \stb, \model, \com, \pref, \grd\}$, 
we have
$\Delta_\sigma = \Sigma_{\SFADF}^{\sigma} \setminus \Sigma_{\SETADF}^{\sigma}$  with
$$
\Delta_\sigma =
\{ \mathbb{V} \in \Sigma_{\SFADF}^{\sigma} \mid \exists v\in \mathbb{V} \text{ s.t. } \forall a: v(a)\in\{ \tvf,\tvu\} \land \exists a: v(a)= \tvf\}.
$$
\end{theorem}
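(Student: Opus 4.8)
The plan is to prove the two inclusions $\Delta_\sigma \subseteq \Sigma_{\SFADF}^{\sigma}\setminus\Sigma_{\SETADF}^{\sigma}$ and $\Sigma_{\SFADF}^{\sigma}\setminus\Sigma_{\SETADF}^{\sigma} \subseteq \Delta_\sigma$ separately, working with SETADFs rather than SETAFs throughout (justified by Theorem~\ref{thm:setaf_setadf}). Since every $\mathbb{V}\in\Delta_\sigma$ is by definition in $\Sigma_{\SFADF}^{\sigma}$, both inclusions reduce to the single equivalence: for $\mathbb{V}\in\Sigma_{\SFADF}^{\sigma}$ we have $\mathbb{V}\in\Sigma_{\SETADF}^{\sigma}$ iff $\mathbb{V}$ contains no interpretation $v$ with $v(a)\in\{\tvf,\tvu\}$ for all $a$ and $v(a)=\tvf$ for some $a$; I will call such a $v$ a \emph{witness}.

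For the easy direction (SETADF-realizable implies no witness) I would use that for every $\sigma$ under consideration each $\sigma$-interpretation is conflict-free, and that in a SETADF every acceptance condition is $\top$ or a CNF over negative literals (Definition~\ref{def: SETADF}). Hence if $v$ is conflict-free in a SETADF $D$ and $v(s)=\tvf$, then $\varphi_s^v$ is unsatisfiable; as $\varphi_s$ is $\top$ or $\bigwedge_{cl}\bigvee_{b\in cl}\neg b$, unsatisfiability forces some clause $cl$ to have $v(b)=\tvt$ for all $b\in cl$, so $v$ assigns $\tvt$ to at least one argument and is not a witness. Thus no $\sigma(D)$ for a SETADF $D$ contains a witness, which yields $\Delta_\sigma \subseteq \Sigma_{\SFADF}^{\sigma}\setminus\Sigma_{\SETADF}^{\sigma}$.

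For the hard direction I would argue the contrapositive: given a SFADF $D=(S,L,C)$ with $\sigma(D)=\mathbb{V}$ containing no witness, I construct a SETADF $D'$ with $\sigma(D')=\mathbb{V}$. By Lemma~\ref{lemma: SFADF in CNF} every argument with an incoming link already carries a CNF-over-negatives condition, and an argument with no incoming link carries only $\top$ or $\bot$; so the sole obstruction to $D$ being (up to logical equivalence of its acceptance conditions) a SETADF are the arguments $a_0$ with $\varphi_{a_0}=\bot$. If there are none we are done. Otherwise the no-witness assumption is exactly what guarantees that whenever such an $a_0$ is set to $\tvf$ in some $v\in\mathbb{V}$ there is an argument assigned $\tvt$ by $v$; the idea is to replace each $\bot$ by a CNF over negative literals, i.e.\ to reroute the forced falsity of $a_0$ through collective attacks issued by these ``justifying'' true arguments, so that $a_0$ still evaluates to $\tvf$ on every $v\in\mathbb{V}$ while the CNF-over-negatives shape is preserved.

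The hard part will be showing that this replacement introduces no spurious $\sigma$-interpretations. A naive choice such as $\varphi'_{a_0}=\bigwedge_{b\neq a_0}\neg b$ fails: if $D$ has no initial ($\top$) argument, the all-undecided interpretation --- which is not complete in $D$ because $\bot$ forces $a_0$ to $\tvf$ --- becomes complete (indeed grounded) in $D'$, since a SETADF can derive $\tvf$ only from a fully-true attacking set and therefore cannot force any argument false starting from the all-undecided interpretation. The construction must hence also reflect, in $D'$, which arguments act as the initial justifiers of $a_0$'s falsity (in the minimal running example this amounts to promoting the argument that $D$ makes true to an initial $\top$-argument), and one must verify --- uniformly over $\sigma\in\{\cf,\adm,\stb,\model,\com,\pref,\grd\}$ --- that the resulting $D'$ realises exactly $\mathbb{V}$. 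For the semantics with explicit signature characterisations (Propositions~\ref{prop:sig_stb},~\ref{prop:sig_pref},~\ref{prop:sig_cf},~\ref{prop:sig_grd}) this can alternatively be obtained by checking that the no-witness assumption supplies precisely the condition ``assigning $\lout$ implies assigning $\lin$'', the only SETADF signature condition that an SFADF signature may violate, while all remaining conditions already hold for SFADF-realizable sets; for $\stb$ one additionally invokes $\model(D)=\stb(D)$ from Proposition~\ref{prop:mod=stb}.
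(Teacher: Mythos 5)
Your decomposition is the same as the paper's, and your easy direction is complete and correct; it is in fact slightly more robust than the paper's own route, which goes through an auxiliary lemma (that the $\tvf$-arguments of a witness must carry $\bot$ acceptance conditions in any realizing \SFADF) stated only for the admissibility-based semantics, whereas your conflict-freeness argument covers $\cf$ uniformly. Your diagnosis of the hard direction is also well-founded: the paper's appendix literally replaces each $\bot$ condition by $\bigwedge_{s\in S}\neg s$ and asserts the semantics are unchanged, and the failure you describe is real --- for $D$ with $\varphi_a=\bot$, $\varphi_b=\neg a$ we have $\grd(D)=\com(D)=\{\{a\mapsto\tvf,\,b\mapsto\tvt\}\}$ (no witness), yet after the replacement the all-undecided interpretation becomes grounded and complete. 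So on this point you are not merely cautious; you have located a step in the intended argument that does not go through as written.

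The gap is that you stop at the diagnosis and do not close the hard direction for all seven semantics. Your signature-based fallback genuinely works for $\stb$, $\pref$, $\cf$ and $\grd$ (and $\model$ via Proposition~\ref{prop:mod=stb}), provided you actually verify the claim that the remaining signature conditions in Propositions~\ref{prop:sig_stb}--\ref{prop:sig_grd} are automatic for \SFADF-realizable sets. For $\adm$ you overlook a shortcut that makes the construction unnecessary: if some $\varphi_a=\bot$ then the interpretation sending $a$ to $\tvf$ and everything else to $\tvu$ is always admissible and is a witness, so ``no witness'' already forces the absence of $\bot$ conditions and Lemma~\ref{lemma: SFADF in CNF} finishes immediately. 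The genuinely open case is $\com$, for which no signature characterization exists in the paper and the ``promote the justifiers to initial $\top$-arguments'' surgery you gesture at must be made precise (which arguments become initial, and why rewriting their --- possibly non-$\top$ --- acceptance conditions neither creates nor destroys fixed points of the characteristic operator). Until that verification is supplied, your proof of $\Sigma_{\SFADF}^{\com}\setminus\Delta_\com\subseteq\Sigma_{\SETADF}^{\com}$ is incomplete.
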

\begin{proof}[Proof sketch]
First for $\mathbb{V} \in \Delta_\sigma$ the interpretation  $v$ cannot be realized in a SETADF as we cannot have 
$v(a)\in \tvf$ without $v(b)\in \tvt$ for some other argument $b$.
On the other hand one can show that when $\mathbb{V}\in\Sigma_{\SFADF}^{\sigma}$ is such that each $v \in \mathbb{V}$ assigns some argument to $\tvt$ one can construct a SETADF $D$ with $\sigma(D)=\mathbb{V}$. This is by the fact that we can rewrite acceptance conditions via Lemma~\ref{lemma: SFADF in CNF} and replace 
$\bot$ acceptance conditions by collective attacks, i.e.\ for each interpretation we add collective
attacks from the arguments set to $\tvt$ to all argument with $\bot$ acceptance condition.
\end{proof}

Next, we provide stronger characterisations of $\Delta_\sigma$ for preferred and stable semantics.

\begin{proposition}\label{prop: V in  K, for prf, stb, mod, V=1}
For $\mathbb{V} \in \Delta_\sigma$ and $\sigma\in\{\stb, \model, \prf\}$  
we have	$|\mathbb{V}|=1$.
For $\sigma\in\{\stb, \model\}$ the unique $v \in \mathbb{V}$ assigns all arguments to $\tvf$.
\end{proposition}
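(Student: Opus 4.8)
The plan is to exploit the defining feature of SFADFs: since every link is attacking, each acceptance condition $\varphi_s$ is antitone (order-reversing) in each of its parents --- equivalently, by Lemma~\ref{lemma: SFADF in CNF}, it is a CNF over negative literals (or $\top$/$\bot$, which are antitone as well). Consequently the all-false two-valued interpretation $v_{\tvf}$ is, for every $s$, a maximiser of $\varphi_s$: if $\varphi_s$ evaluates to $\tvf$ under $v_{\tvf}$, then $\varphi_s$ is false under every interpretation, i.e.\ $\varphi_s\equiv\bot$. I use this fact repeatedly. Throughout I fix a SFADF $D$ with $\sigma(D)=\mathbb{V}$ together with the witnessing $v\in\mathbb{V}$ satisfying $v(a)\in\{\tvf,\tvu\}$ for all $a$ and $v(a)=\tvf$ for some $a$; in particular the domain is non-empty.

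For $\sigma\in\{\model,\stb\}$ the argument is short. A model is two-valued, so the witness $v$ has no $\tvu$-values and hence $v=v_{\tvf}$ assigns $\tvf$ to every argument. Being a model, $v(s)=v(\varphi_s)=\tvf$ for all $s$, so $\varphi_s$ is false under $v_{\tvf}$ and therefore $\varphi_s\equiv\bot$ for every $s$. Then any two-valued $w$ satisfies $w(\varphi_s)=\tvf$, so $w$ is a model only if $w(s)=\tvf$ for all $s$, i.e.\ $w=v_{\tvf}$; thus $\model(D)=\{v_{\tvf}\}$. Since $\stb(D)\subseteq\model(D)$ and $v\in\stb(D)$ in the stable case, we also get $\stb(D)=\{v_{\tvf}\}$. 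This yields $|\mathbb{V}|=1$ with the unique interpretation assigning all arguments to $\tvf$.

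For $\sigma=\prf$ the witness $v$ may contain $\tvu$, so I argue instead that $v\ileq w$ for every preferred $w$, whence $\ileq$-maximality of $v$ forces $w=v$ and $|\mathbb{V}|=1$. First, for each $s\in v^{\tvf}$ conflict-freeness gives that $\varphi_s^v$ is unsatisfiable; completing $v$ by the all-false assignment (legitimate since $v^{\tvt}=\emptyset$) shows $\varphi_s(v_{\tvf})=\tvf$, hence $\varphi_s\equiv\bot$ by antitonicity. Thus for any admissible $w$ we have $\Gamma_D(w)(s)=\tvf$, and $w\ileq\Gamma_D(w)$ forces $w(s)\in\{\tvu,\tvf\}$ for all $s\in v^{\tvf}$. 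Now define $w'$ by setting every $s\in v^{\tvf}$ to $\tvf$ and leaving $w$ unchanged elsewhere; then $w\ileq w'$, and using the standard $\ileq$-monotonicity of $\Gamma_D$ together with $\varphi_s\equiv\bot$ on $v^{\tvf}$ one checks $w'\ileq\Gamma_D(w')$, so $w'$ is admissible. Maximality of the preferred $w$ gives $w=w'$, i.e.\ $w(s)=\tvf$ for all $s\in v^{\tvf}$. Since $v(s)=\tvu$ for every $s\notin v^{\tvf}$, it follows that $v\ileq w$, as required.

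The main obstacle is the preferred case, and within it the step asserting that the relabelling $w'$ remains admissible: this is exactly where the antitone structure of SFADFs must be combined with the $\ileq$-monotonicity of the characteristic operator, since monotonicity guarantees that pushing the $\bot$-arguments in $v^{\tvf}$ to $\tvf$ cannot destroy any $\tvt$-justification elsewhere. The remaining verifications --- that unsatisfiability of a partial valuation transfers to $v_{\tvf}$, and that antitone conditions attain their maximum at $v_{\tvf}$ --- are routine consequences of Lemma~\ref{lemma: SFADF in CNF} and Definition~\ref{def: type of links}.
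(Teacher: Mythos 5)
Your proof is correct and follows essentially the same route as the paper's: both first establish (via the attacking-link/antitonicity property, which is exactly the content of the paper's Lemma~\ref{lemma: K is not adm-rea in SETAF}) that every argument labelled $\tvf$ by the witness $v$ must have acceptance condition $\bot$, then conclude immediately for $\model$ and $\stb$ since models are two-valued, and for $\prf$ show that $v \ileq w$ for every preferred $w$ so that $\ileq$-maximality forces $|\mathbb{V}|=1$. Your treatment of the preferred case is in fact slightly more careful than the paper's terse version: where the paper simply asserts that every $v'\in\mathbb{V}$ assigns $\tvf$ to the $\bot$-arguments, you justify this explicitly by constructing the admissible relabelling $w'$ and invoking maximality of $w$, which closes a small gap rather than introducing a different method.
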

\begin{proof}[Proof sketch]
If a SFADF has a $\sigma$-interpretation $v$ that assigns some arguments to $\tvf$ without assigning an argument to $\tvt$ 
then we have that the arguments assigned to $\tvf$ are exactly the arguments with acceptance condition $\bot$.
For $\stb$ and $\model$ semantics this means all arguments have acceptance condition $\bot$ and the result follows.
Each preferred interpretation assigns arguments with acceptance condition $\bot$ to $\tvf$ and thus
the existence of another preferred interpretation would violate the $\ileq$-maximality of $v$.
\end{proof}
In other words each interpretation-set which is $\sigma$-realizable in SFADFs and contains at least 
two interpretations can be realized in SETADFs, for $\sigma\in\{\stb, \prf, \model\}$.
We close this section with an example illustrating that the above characterisation thus not hold
for $\cf$, $\adm$, and $\comp$.

\begin{example}
Let $D=(\{a, b, c\}, \{\varphi_a= \bot, \varphi_b= \neg c, \varphi_c=\neg b \})$.
We have $\com(D)=\{\{a\mapsto \tvf, b\mapsto \tvu, c\mapsto \tvu\}, \{a\mapsto \tvf, b\mapsto \tvt, c\mapsto \tvf\}, \{a\mapsto \tvf, b\mapsto \tvf, c\mapsto \tvt\}\}$.
By Theorem~\ref{thm: delta= sfadf setadf}, $\com(D)$ cannot be realized as SETADF.
Moreover, as $\com(D) \subseteq \adm(D) \subseteq \cf(D)$ for every ADF $D$, we have
that, despite all three contain more than one interpretation, none of them can be realized via a SETADF.
\end{example}

\nop{

\subsection{SETADFs vs. Symmetric SFADFs}
In this section we consider the special subclass of SFADF in which the attack link relation is symmetric in the sense of~\cite{DillerZLW18}. 
\begin{definition}
An ADF $D=(S, L, C)$ is \emph{symmetric} if
$L$ is irreflexive and symmetric and
$L$ does not contain any redundant links.
\end{definition}

\begin{definition}\label{def: SFSADF}
A support free ADF $D=(S, L, C)$ is a \textit{support free symmetric ADF}(SFSADF for short) if it is symmetric.
\end{definition}

In Lemma~\ref{pro: ASADF, SETAF}, the sufficient  condition under which a SFSADF can be written as a SETADF is investigated. Notice that in symmetric ADFs, due to the lack of redundant links, arguments with
unsatisfiable acceptance condition are always isolated arguments.

\begin{lemma}\label{pro: ASADF, SETAF}
Given  a SFSADF $D$ which does not contain any isolated argument with unsatisfiable acceptance condition. The SFSADF $D$ can be written as a SETADF.
\end{lemma}
\begin{proof}
Assume that $D=(S, L, C)$ is a SFSADF in which  there is no isolated argument  $s\in S$ such that $\varphi_{s}=\bot$.
Since each SFSADF is a SFADF, $D$ is a SFADF and by the assumption of the lemma, $D$ does not contain any argument with unsatisfiable acceptance condition. Via Lemma~\ref{lemma: SFADF in CNF}, $D$ can be rewritten as a SETADF.
\end{proof}
\begin{lemma}\label{lemma: grd is trv}
Let $D$ be an SFSADF with no isolated argument. The unique grounded interpretation of $D$ is the trivial interpretation, $v_\tvu$. 
\end{lemma}
\begin{proof}
We show that for any SFSADF $D=(S, L, C)$ with no isolated argument, $\Gamma_D(v_\tvu)=v_\tvu$. Let $s$ be an argument. Let $v_1$ be an interpretation in which all parents of $s$ are assigned to $\tvt$ and let $v_2$ be an interpretation in which all $\parents{s}$ are assigned to $\tvf$. Since $D$ is an SFSADF, the former interpretation shows that $\varphi_{s}^{v_\tvu}$ is not irrefutable and the latter interpretation says that $\varphi_{s}^{v_\tvu}$ is not unsatisfiable. Therefore, for each argument $s$, $\Gamma_D(v_\tvu)(s)=\tvu$.   
\end{proof}

We next combine the two above Lemmas to obtain necessary and 
sufficient conditions for realizability of $\mathbb{V}\in \Sigma_{\SFSADF}^\sigma$ in SETADFs for $\sigma\in \{\adm, \com, \grd\}$. 

\begin{proposition}
Given  a SFSADF $D$ we have that
\begin{enumerate}
\item $\adm(D) \in \Sigma_{\SETADF}^\adm$ iff $D$ does not contain any argument with unsatisfiable acceptance condition; and
\item $\sigma(D)\in \Sigma_{\SETADF}^\sigma$ for $\sigma \in \{\grd, \comp\}$ iff either (a) $D$ contains an isolated argument with acceptance condition $\top$ or (b) $D$ does not contain any argument with unsatisfiable acceptance condition.
\end{enumerate}
\end{proposition}
\begin{proof}
1) The ``if'' direction is immediate by Lemma~\ref{pro: ASADF, SETAF}.
For the ``only if'' direction assume that $D$ contains an argument $a$ with unsatisfiable acceptance conditions.
Then there is a admissible interpretation that assigns a to $\tvf$ and all the other arguments to $\tvu$.
By Theorem~\ref{thm: exp of SFADF and SETAF} such a $\sigma(D)$ is not in $\Sigma_{\SETADF}^\sigma$.

2) 
For ``if'' direction first assume (a) holds, i.e.\ there is an argument $a$ with $c_s=\top$.
Then each complete interpretation assigns $a$ to $\tvt$ and thus, by Theorem~\ref{thm: exp of SFADF and SETAF},
$\sigma(D)\in \Sigma_{\SETADF}^\sigma$.
Otherwise, (b) holds and $\sigma(D)\in \Sigma_{\SETADF}^\sigma$ is immediate by Lemma~\ref{pro: ASADF, SETAF}.
For the ``only if'' direction assume that $D$ contains arguments with unsatisfiable acceptance conditions but no
isolated arguments with acceptance condition $\top$.
By Lemma~\ref{lemma: grd is trv} we then have an interpretation $\lambda \in \mathbb{V}$ that assigns some arguments to $\tvf$ and all the other arguments to $\tvu$. By Theorem~\ref{thm: exp of SFADF and SETAF} such a $\sigma(D)$
is not in $\Sigma_{\SETADF}^\sigma$.
\end{proof}

On the other hand, the conditions in the above proposition are not necessary for $\sigma\in\{\prf, \stb, \model\}$ as indicated in Example~\ref{exp: SFS unsat & SET }.     
\begin{example}\label{exp: SFS unsat & SET }
Let $\mathbb{V}=\{\{a\mapsto\tvf, b\mapsto\tvt, c\mapsto\tvf\}, \{a\mapsto\tvf, b\mapsto\tvf, c\mapsto\tvt\}\}$
be an interpretation-set. 
A witness of $\sigma$-realizability of $\mathbb{V}$ in SFSADFs for $\sigma\in\{ \stb, \model, \prf \}$, is  $D=(\{a, b, c\}, \{\varphi_{a}=\bot, \varphi_b=\neg c, \varphi_c=\neg b\})$.
$D$ is a SFADF that  contains an argument $a$ such that $\varphi_{a}=\bot$, however,  $\mathbb{V}\not\in \Delta_\sigma$. Thus, by Proposition~\ref{prop: real in SETAF by unsat}, $\mathbb{V}$ can also be realized by a SETADF, a witness of which is 
$D'=(\{a, b, c\}, \{\varphi_{a}= \neg a\land \neg b\land\neg c,  \varphi_b=\neg c\land\neg a, \varphi_c=\neg b\land \neg a \})$.
\end{example}

In~\cite{DBLP:journals/argcom/DillerZLW20}  it is proven that $\Sigma_{\SFSADF}^\sigma=\Sigma_{\SFADF}^\sigma$, for $\sigma\in \{\stb, \model\}$, and $\Sigma_{\SFSADF}^\sigma\subsetneq\Sigma_{\SFADF}^\sigma$, for $\sigma\in \{\adm, \grd, \com, \prf\}$. On the other hand, $\Delta_\sigma\not\subseteq \Sigma_{\SFSADF}$ for $\sigma\in \{\adm, \grd, \com, \prf\}$. For instance, let $\mathbb{V} =\{a\mapsto\tvu, b\mapsto\tvf\}$. It is clear that  $\mathbb{V}\in \Delta_\sigma$ and  $\mathbb{V}\not\in\Sigma_{\SFSADF}^\sigma$, for $\sigma\in \{\adm, \grd, \com, \prf\}$. Let $\Delta_\sigma'$ be a subset of $\Delta_\sigma$ that is realizable in $\Sigma_{\SFSADF}^\sigma$, for $\sigma\in \{\adm, \grd, \com, \prf\}$.
Theorem~\ref{thm: exp of SETAF, ASADF, SFADF} clarifies  the expressiveness  of SFSADFs and  SETADFs. 

\begin{theorem}\label{thm: exp of SETAF, ASADF, SFADF}  The following properties hold:
\begin{enumerate}
\item 
${(\Sigma_{\SFSADF}^\sigma\setminus \Delta_\sigma')}\subsetneq \Sigma_{\SETADF}^\sigma$, for $\sigma\in\{\prf, \adm, \com, \grd\}$,

\item $\Sigma_{\SETADF}^\sigma=(\Sigma_{\SFSADF}^\sigma\setminus \Delta_\sigma)$, for $\sigma\in \{\stb, \model\}$.
\end{enumerate}

\end{theorem}
\begin{proof}
We show the two statements separately.

1)  By Theorem~\ref{thm: exp of SFADF and SETAF}  we have that 
$\Sigma_{\SETADF}^{\sigma} = (\Sigma_{\SFADF}^{\sigma} \setminus \Delta_\sigma)$, 
and by the definition of SFSADFs and $\Delta_\sigma'$ we have  $\Sigma_{\SFSADF}^\sigma\setminus\Delta_\sigma'\subseteq\Sigma_{\SFADF}^\sigma\setminus\Delta_\sigma$.
Combining these two statements we obtain
$(\Sigma_{\SFSADF}^\sigma\setminus \Delta_\sigma')\subseteq \Sigma_{\SETADF}^\sigma$, for $\sigma\in\{\prf,  \adm, \com, \grd\}$. 
To complete the proof, let $\mathbb{V}=\{\{a\mapsto\tvu\}\}$. The interpretation-set $\mathbb{V}$ is $\sigma$-realizable in SETADFs. However, $\mathbb{V}\not\in\Sigma_{\SFSADF}^\sigma$. Thus, $\Sigma_{\SETADF}^\sigma\not\subseteq  (\Sigma_{\SFSADF}^\sigma\setminus \Delta_\sigma')$.

2) By Theorem~\ref{thm: exp of SFADF and SETAF} we have  $\Sigma_{\SETADF}^\sigma = (\Sigma_{\SFADF}^\sigma\setminus \Delta_\sigma)$ and 
by~\cite{DBLP:journals/argcom/DillerZLW20}
we have $\Sigma_{\SFSADF}^\sigma =\Sigma_{\SFADF}^\sigma$ for $\sigma\in\{\stb, \model\}$.   
Combining these two results we obtain $\Sigma_{\SETADF}^\sigma = \Sigma_{\SFSADF}^\sigma\setminus \Delta_\sigma$ for $\sigma\in \{\stb, \model\}$.  
\end{proof}
The results of 
comparison of expressiveness of SETADFs, SFSADFs and SFADFs, for $\sigma\in \{\adm, \prf, \stb, \model, \grd, \com \}$, are  depicted in Figure~\ref{fig:venn3}. In both figures it is shown that the set $\Sigma_{\SETADF}^\sigma$, depicted by vertical lines,   is equal to the set $\bar{ \Delta}_\sigma$,  for all semantics. In addition, the expressiveness of SFSADFs is equal to SFADFs, for $\sigma\in\{\stb, \model\}$. However, SFADFs are more expressive than SFSADFs,  for $\sigma\in \{\adm, \prf, $ $ \com, \grd \}$. Further, some of the interpretation-sets of $\Delta_\sigma$ are not realizable in SFSADFs, for  $\sigma\in \{\adm, \prf,  \com, \grd \}$.

\begin{figure}[t]
\centering

\begin{tikzpicture}[scale=0.8]

\fill[green!35] (-9,1) ellipse (4 and 2);
\fill[pattern color=black!45, pattern= vertical lines] (-9,1) ellipse (4 and 2);
\draw (-5.7,3.2) node (A) {$\Sigma_{\SETADF}^\sigma$};
\draw[->] (A)-- (-6.1, 1.95);
\draw (-9,1) ellipse (4 and 2);
\draw  (-10.95,1) ellipse (1.1 and 0.8);
\fill[green!35]  (-10.95,1) ellipse (1.1 and 0.8);
\draw (-7.7, 1) node {$\Sigma_{\SFSADF}^\sigma=\Sigma_{\SFADF}^\sigma $};
\draw (-10.9,1) node {$\Delta_\sigma$};
\draw (-9.1,-1.5) node { for $\sigma\in\{\stb, \model\}$};

\fill[green!20] (0,1) ellipse (4 and 2);
\fill[green!35] (-1,1) ellipse (3 and 1.5);
\fill[pattern color=black!45, pattern= vertical lines] (0,1) ellipse (4 and 2);
\draw (-0.25,-1.5) node { for $\sigma\in\{\adm, \grd, \prf, \com\}$};
\draw (3.3,3.2) node (A) {$\Sigma_{\SETADF}^\sigma$};
\draw[->] (A)-- (2.9, 1.95);
\draw (0,1) ellipse (4 and 2);
\draw (-1,1) ellipse (3 and 1.5);
\draw (1,-0.1) ellipse (1.1 and 0.8);
\fill[green!20]  (1,-0.1) ellipse (1.1 and 0.8);

\draw (3.3,1) node {$\Sigma_{\SFADF}^\sigma$};
\draw (1.0,1) node {$\Sigma_{\SFSADF}^\sigma$};
\draw (1,-0.1) node {$\Delta_\sigma$};

\end{tikzpicture}
\caption{Expressiveness SETADFs, SFSADFs and SFADFs 
for $\sigma\in\{\adm, \prf, \model, \stb, \grd,  \com\}$}
\label{fig:venn3} 
\end{figure}

}

\section{Discussion}
In this paper, we have characterised the expressiveness 
of SETAFs under 3-valued signatures. 
The more fine-grained notion of 3-valued signatures 
reveals subtle 
differences of the expressiveness of stable and preferred semantics which 
are not present in the 2-valued setting \cite{DvorakFW19}
and enabled us to compare the expressive power
of SETAFs and SFADFs, a subclass of ADFs that allows only for 
attacking links.
In particular, we have exactly characterized the difference 
for conflict-free, admissible, complete, stable, preferred, and grounded 
semantics; this difference is rooted
in the capability of SFADFs to set an initial argument to false.
Together with our exact characterisations on signatures of SETAFs for 
stable, preferred, grounded, and conflict-free semantics, this also yields
the corresponding results for SFADFs. Exact characterisations for admissible
and complete semantics are subject of future work.
Another aspect to be investigated is to which extent our insights 
on labelling-based semantics for SETAFs and SFADFs can help to improve
the performance of reasoning systems.

\paragraph{Acknowledgments}
This research has been supported by
FWF through projects I2854, P30168. 
The second researcher is currently embedded in the Center of Data Science
$\&$ Systems Complexity (DSSC) Doctoral Programme, at the University of Groningen.

\newpage
\appendix
\section{Full Proofs}

\subsection*{Proof of Proposition~\ref{prop:mod=stb}}

We first show the following result.

\begin{lemma}\label{lemma: irrefutable}
Let $D=(S, L, C)$ be an ADF, let $v$ be a model of $D$ and let $s\in S$ be an argument such that all parents of $s$ are attackers. 
Thus,  $\varphi_{s}^v$ is irrefutable if and only if $\varphi_{s}[p/\bot\ : \ v(p)=\tvf]$ is irrefutable.  
\end{lemma}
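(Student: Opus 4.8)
The plan is to prove the two implications separately, noting that the hypothesis on attacking links is needed only in one direction. First I would fix notation: since $v$ is a model it is two-valued, so the parents of $s$ split into $P_\tvt = \{p \in \parents{s} : v(p)=\tvt\}$ and $P_\tvf = \{p \in \parents{s} : v(p)=\tvf\}$. Write $\psi = \varphi_s[p/\bot : v(p)=\tvf]$; its only variables are those in $P_\tvt$, and $\varphi_s^v$ is exactly $\psi$ with every variable of $P_\tvt$ additionally set to $\top$. In particular, because $v$ is two-valued, $\varphi_s^v$ is a closed formula, so being irrefutable just means that it evaluates to true.

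The easy direction is that irrefutability of $\psi$ entails irrefutability of $\varphi_s^v$: the latter is a substitution instance of $\psi$, and a substitution instance of a tautology is again a tautology. This step uses nothing about the links.

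For the converse I would argue by contraposition, and this is where the attacking hypothesis enters. Suppose $\psi$ is not a tautology, witnessed by a two-valued assignment $w$ of $P_\tvt$ with $w(\psi)=\tvf$. Extend $w$ to a two-valued interpretation $v'$ of $\parents{s}$ by setting every argument of $P_\tvf$ to $\tvf$; then $v'(\varphi_s)=w(\psi)=\tvf$ by the definition of $\psi$. Now each $p \in P_\tvt$ is an attacker, so the defining property of attacking links (Definition~\ref{def: type of links}) guarantees that whenever a two-valued interpretation falsifies $\varphi_s$, the updated interpretation $\update{v'}{p}{\tvt}$ still falsifies it. I would then flip, one argument at a time, each member of $P_\tvt$ that $w$ maps to $\tvf$ up to $\tvt$; each single flip keeps the interpretation two-valued and preserves the value $\tvf$ of $\varphi_s$. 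The interpretation reached at the end assigns $\tvt$ to all of $P_\tvt$ and $\tvf$ to all of $P_\tvf$, so it agrees with $v$ on $\parents{s}$; hence $\varphi_s^v=\tvf$, i.e.\ $\varphi_s^v$ is not irrefutable, as required.

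The hard part will be the iterated application of the attacking condition in the converse: I must be careful that the definition only licenses flipping a single attacker from $\tvf$ to $\tvt$ while preserving falsity, and that chaining these flips across all of $P_\tvt$ stays within two-valued interpretations and terminates exactly at the restriction of $v$ to $\parents{s}$. By contrast, the full model equation $v(s)=v(\varphi_s)$ is never used; the only feature of being a model that I rely on is that $v$ is two-valued, which is what makes the substitutions well-defined and $\varphi_s^v$ closed.
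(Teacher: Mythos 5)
Your proof is correct and follows essentially the same route as the paper's: the easy direction by substitution into a tautology, and the converse by taking a falsifying assignment of $\varphi_s[p/\bot : v(p)=\tvf]$ and using the attacking property to raise the remaining parents to $\tvt$ without restoring truth. The only difference is presentational: you spell out the one-attacker-at-a-time flipping that the paper compresses into the single parenthetical ``as all the links are attacking,'' and you correctly observe that only two-valuedness of $v$, not the full model equation, is used.
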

\begin{proof}
Assume that $D=(S, L, C)$ is an ADF and $v$ is a model of $D$. 
Further, assume $s\in S$ such that $\forall p\in \parents{s}$, $(p, s)$ is an attacking link in $D$. 
Clearly if $\varphi_{s}[p/\bot\ : \ v(p)=\tvf]$ is irrefutable then also $\varphi_{s}^v=
\varphi_s[p/\top : v(p)=\tvt][p/\bot : v(p)=\tvf]$ is irrefutable.
It remains to show that if $\varphi_{s}^v$ is irrefutable then also $\varphi_{s}[p/\bot\ : \ v(p)=\tvf]$ is irrefutable. 
Let $\varphi_{s}'=\varphi_{s}[p/\bot\ : \ v(p)=\tvf]$. 
Towards a contradiction, assume that $\varphi_{s}^v$ is irrefutable and  $\varphi_{s}'$ is not irrefutable.
That is, either $\varphi_{s}'$ is unsatisfiable or it is undecided. 
In both cases, $\varphi_{s}'[p/\top\ :\ v(p)=\tvt]$ is unsatisfiable (as all the links are attacking).
Thus, $\varphi_{s}^v = \varphi_{s}'[p/\top\ :\ v(p)=\tvt]$  is unsatisfiable as well. This is a contradiction with the assumption that $\varphi_{s}^v$ is irrefutable. 
\end{proof}

\begin{proof}[Proof of Proposition~\ref{prop:mod=stb}]
Let $D=(S, L, C)$ be a SFADF. 
Since  $\stb(D)\subseteq\model(D)$ for each ADF $D$, 
it remains to show that each model of $D$ is also a stable model of $D$.  
Towards a contradiction assume that $\model(D)\not \subseteq \stb (D)$. 
Thus, there exists a model $v$ of $D$ which is not a stable model. 
Let $D^v$ be a $\stb$-reduct of $D$ and let $w$ be the unique grounded interpretation of $D^v$. 
Since it is assumed that $v$ is not a stable model,  $v^\tvt\not= w^\tvt$. 
That is, there exists $s\in S$ such that $v(s)=\tvt$ and $w(s)\not=\tvt$. 
Thus, $\varphi_{s}[p/\bot\ : \ v(p)=\tvf] $ is not irrefutable. 
Since, $D$ is a SFADF, all parents of $s$ are attackers. 
Hence,  By Lemma~\ref{lemma: irrefutable}, $\varphi_{s}^v$ is not irrefutable, that is, $v(s)\not=\tvt$. 
This is a contradiction by the assumption that $v(s)=\tvt$. 
Thus, the assumption that $D$ consists of a model which is not a stable model is incorrect.  
\end{proof}

\subsection*{Proof of Theorem \ref{thm:setaf_setadf} }

We first introduce some notation.

\begin{definition}
The function $Lab2Int(\cdot)$ maps three-valued labellings to three-valued interpretations such that
\begin{itemize}
\item 
(a) $Lab2Int(\lambda)(s)=\tvt$ iff $\lambda(s)=\lin$,
\item 
(b) $Lab2Int(\lambda)(s)=\tvf$ iff $\lambda(s)=\lout$, and
\item
(c) $Lab2Int(\lambda)(s)=\tvu$ iff $\lambda(s)=\lundec$.
\end{itemize}	 	  
For a labelling $\lambda$ and an interpretation $I$ we write $\lambda \equiv I$ iff 
$Lab2Int(\lambda) = I$.
For a set $\Lab$ of labellings and a set $\mathbb{V}$ of interpretations we write $\Lab \equiv \mathbb{V}$ iff $\{Lab2Int(\lambda) \mid \lambda \in \Lab \} = \mathbb{V}$.
\end{definition}

With the above notation we can restate Theorem \ref{thm:setaf_setadf} as follows:
For a SETAF $F$ and its associated SETADF $D$ we have
$\sigma_\Lab(F) \equiv \sigma(D)$ for $\sigma \in \{\cf,\adm,\comp,\pref,\grd,\stb\}$.

\begin{proof}[Proof of Theorem \ref{thm:setaf_setadf}]
Let $F=(A, R )$  be a SETAF and $D=(S, L, C)$ be its corresponding SETADF. 
We show that $\{Lab2Int(\lambda) \mid \lambda \in \sigma_\Lab(F)\}=\sigma(D)$. Let $\lambda$ be an arbitrary three-valued labelling and let $v= Lab2Int(\lambda)$. We investigate that $\lambda\in \sigma_\Lab(F)$ if and only if $v\in\sigma(D)$. 

\begin{itemize}
\item Let $\sigma = \adm$.
We first assume that $\lambda\in \adm_\Lab(F)$ and show that $v\in\adm(D)$. 
Consider $s \in S$ and the acceptance condition $\varphi_{s}=\bigwedge_{(B, s)\in R}\bigvee_{a\in B}\neg a$.
If $v(s)=\tvt$ we have that $\lambda(s)=\lin$ and thus that for all $(B, s)\in R$ there exists $b\in B$ s.t. $\lambda(b)=\lout$. 
The latter holds iff for all $(B, s)\in R$ there exists $b\in B$ s.t. $v(b)=\tvf$ iff partial evaluation of $\varphi_{s}$ under $v$ is irrefutable iff $\Gamma_D(v)(s)=\tvt$.
If $v(s)=\tvf$ we have that $\lambda(s)=\lout$ and thus that there exists $ (B, s)\in R$ s.t.\ for all $b\in B$:  $\lambda(b)=\lin$.
The latter holds iff there exists $(B, s)\in R$ s.t.\ for all $b\in B$:  $v(b)=\tvt$ iff $\varphi_{s}^v$ is unsatisfiable iff $\Gamma_D(v)(s)=\tvf$.  
We thus obtain that $v \ileq \Gamma_D(v)$ and therefore $v\in\adm(D)$. 

Now we assume $v\in\adm(D)$ and show that $\lambda\in \adm_\Lab(F)$. 
That is for each $s$ with $\lambda(s)=\lin$  we have $\Gamma_D(v)(s)=\tvt$ and, as argued above,
that for all $(B, s)\in R$ there exists $b\in B$ s.t. $\lambda(b)= out$. 
Moreover for each $s$ with $\lambda(s)=\lout$  we have $\Gamma_D(v)(s)=\tvf$ and, as argued above, that there exists $ (B, s)\in R$ s.t.\ for all $b\in B$:  $\lambda(b)=\lin$.
We obtain $\lambda\in \adm_\Lab(F)$.

\item Let $\sigma\in \{\comp,\pref,\grd\}$.
Let $\lambda\in\comp_\Lab(F)$ and let $\varphi_{s}=\bigwedge_{(B, s)\in R}\bigvee_{a\in B}\neg a$ be the acceptance condition of  $s\in S$ in $D$. 
For complete semantics it is enough to show that $\lambda(s)= \lin$ iff $\Gamma_D(v)(s)=\tvt$ and $\lambda(s)=\lout$ iff $\Gamma_D(v)(s)=\tvf$. 
\begin{itemize}
\item It holds that $\lambda(s)=\lin$ (i.e.\ $v(s)=\tvt$) iff for all $(B, s)\in R$ there exists $b\in B$ s.t. $\lambda(b)=\lout$ iff for all $(B, s)\in R$ there exists $b\in B$ s.t. $v(b)=\tvf$ iff partial evaluation of $\varphi_{s}$ under $v$ is irrefutable iff $\Gamma_D(v)(s)=\tvt$.
\item  On the other hand, $\lambda(s)= \lout$ (i.e.\ $v(s)=\tvf$) iff   there exists $ (B, s)\in R$ s.t.\ for all $b\in B$:  $\lambda(b)=\lin$ iff  there exists $(B, s)\in R$ s.t.\ for all $b\in B$:  $v(b)=\tvt$ iff $\varphi_{s}^v$ is unsatisfiable iff $\Gamma_D(v)(s)=\tvf$.  

\end{itemize}
Now as complete semantics coincide it is easy to verify that also the maximal, i.e.\ the preferred, extensions and the minimal, i.e.\ the grounded, extension coincide. 

\item Let $\sigma = \stb$.
Recall that, by Proposition~\ref{prop:mod=stb}, on SETADFs we have that stable and models semantics coincide.
We will show that $\lambda\in \stb_\Lab(F)$ iff $v\in\model(D)$. 
That is we show that 
for each $s \in S$ we have 
(i) $\lambda(s)= \lin$ iff $v(\varphi_s)=\tvt$ and 
(ii) $\lambda(s)=\lout$ iff $v(\varphi_s)=\tvf$. 
To this end let $\varphi_{s}=\bigwedge_{(B, s)\in R}\bigvee_{a\in B}\neg a$
be the acceptance condition of $s$.
\begin{itemize}
\item It holds that $\lambda(s)=\lin$ (i.e.\ $v(s)=\tvt$) iff for all $(B, s)\in R$ there exists $b\in B$ s.t. $\lambda(b)= out$ iff for all $(B, s)\in R$ there exists $b\in B$ s.t. $v(b)=\tvf$ iff $v(\varphi_s)=\tvt$.
\item  On the other hand, $\lambda(s)= out$ (i.e.\ $v(s)=\tvf$) iff   there exists $ (B, s)\in R$ s.t.\ for all $b\in B$:  $\lambda(b)=\lin$ iff  there exists $(B, s)\in R$ s.t.\ for all $b\in B$:  $v(b)=\tvt$ iff $v(\varphi_s)=\tvf$.  
\end{itemize}

\item Finally let $\sigma = \cf$.
We first assume that $\lambda\in \cf_\Lab(F)$ and show that $v\in\cf(D)$. 
Consider $s \in S$ and the acceptance condition $\varphi_{s}=\bigwedge_{(B, s)\in R}\bigvee_{a\in B}\neg a$.
If $v(s)=\tvt$ we have that $\lambda(s)=\lin$ and thus that for all $(B, s)\in R$ there exists $b\in B$ s.t. $\lambda(b) \not= \lin$.  
The latter holds iff for all $(B,s)\in R$ there exists $b\in B$ s.t. $v(b)\not=\tvt$ iff $\varphi^v_{s}$ is satisfiable.
If $v(s)=\tvf$ we have that $\lambda(s)=\lout$ and thus that there exists $(B, s)\in R$ s.t.\ for all $b\in B$:  $\lambda(b)=\lin$.
The latter holds iff there exists $(B,s)\in R$ s.t.\ for all $b\in B$:  $v(b)=\tvt$ iff $\varphi_{s}^v$ is unsatisfiable.  
We thus obtain that $v\in\cf(D)$. 

Now we assume $v\in\cf(D)$ and show that $\lambda\in \cf_\Lab(F)$. 
That is for each $s$ with $\lambda(s)=\lin$  we have $\varphi^v_{s}$ is satisfiable and, as argued above, that for all  $(B,s)\in R$ there exists $b\in B$ s.t. $\lambda(b) \not= \lin$. 
Moreover for each $s$ with $\lambda(s)=\lout$  we have $\varphi^v_{s}$ is unsatisfiable and, as argued above, that there exists $(B,s)\in R$ s.t.\ for all $b\in B$:  $\lambda(b)=\lin$.
We obtain $\lambda\in \cf_\Lab(F)$.\qedhere

\end{itemize}
\end{proof}

\subsection*{Proof of Proposition \ref{prop:sig_pref}}

We first show that for each SETAF $F$ the set $\pref_\Lab(F)$ satisfies the conditions of the proposition.
The first condition is satisfied as clearly all $\lambda \in \pref_\Lab(F)$ have the same domain.
Now, assume that $\lambda \in \pref_\Lab(F)$ assigns an argument $a$ to $\lout$.
By the definition of conflict-free labellings there is an attack $(B,a)$
such that all arguments $b\in B$ are labeled $\lin$.
Thus Condition (2) is satisfied.	 
For Condition (3), consider $\lambda, \lambda' \in \pref_\Lab(F)$. Notice that there must be a conflict $(S,a)$
with $S \cup \{a\} \subseteq \lambda_\lin \cup \lambda'_\lin$ as otherwise
$(\lambda_\lin \cup \lambda'_\lin, \lambda_\lout \cup \lambda'_\lout,\lambda_\lundec \cap \lambda'_\lundec)$ would be a larger admissible labelling.
If $a \in \lambda'_\lin$ then, by the definition of admissible labellings, there is an attack $(B,b)$ with 
$B \subseteq \lambda'_\lin$ and $b\in S \cap \lambda_\lin$.
Thus $b$ is an argument with $\lambda(b)=\lin$ and $\lambda'(b)=\lout$.
Otherwise if $a \in \lambda_\lin$ then, by the definition of admissible labellings, there is an attack $(B,b)$ with 
$B \subseteq \lambda_\lin$ and $b\in S \cap \lambda'_\lin$.
Then, again by the definition of admissible labellings, there is an attack $(C,c)$ with 
$C \subseteq \lambda'_\lin$ and $c\in B \subseteq \lambda_\lin$.
Thus $c$ is an argument with $\lambda(c)=\lin$ and $\lambda'(c)=\lout$.

Now assume that $\lab$ satisfies all the conditions. We give a SETAF $F_\lab=(A_\lab,R_\lab)$
with $\pref_\Lab(F_\lab) = \lab$. We use
\begin{align*}
A_{\lab} &= \Args{\lab}\\
R_\lab   &= \{(\lambda_{\lin}, a) \mid \lambda \in \lab, \lambda(a)=\lout\}  \cup 
\{ (\lambda_{\lin} \cup \{a\}, a) \mid \lambda \in \lab, \lambda(a)=\lundec\}
\end{align*}

We first show $\pref_\Lab(F_\lab) \supseteq \lab$:
Consider an arbitrary $\lambda \in \lab$: We first show $\lambda \in \cf_\Lab(F_\lab)$.
We first consider $\lout$ labeled arguments. First, if $\lambda(a)=\lout$ for some argument
$a$ then by construction and Condition (2) we have an attack $(\lambda_{\lin}, a)$ 
and thus $a$ is legally labeled $\lout$.
Now towards a contradiction assume there is a conflict $(B,a)$ such that $B \cup \{a\} \subseteq \lambda_{\lin}$.

If $|\lab|=1$, by the construction of $F_\lab$ there is no $(B, a)\in R_\lab$ such that $a \in \lambda_{\lin}$. That is, $a$  is legally labeled $\lin$. 
If $|\lab|>1$, by construction there is a $\lambda' \in \lab$  with $\lambda'_{\lin}=B \setminus \{a\}$, a contradiction to Condition (3).
Thus, $\lambda \in \cf_\Lab(F_\lab)$.
Next we show that $\lambda \in \adm_\Lab(F_\lab)$.
Consider an argument $a$ with $\lambda(a)=\lin$ and an attack $(B,a)$.
Then, by construction there is a $\lambda' \in \lab$  with $\lambda'_{\lin}=B \setminus \{a\}$ and,
by Condition (3), an argument $b \in B$ such that $\lambda(b)=\lout$.
Thus, $\lambda \in \adm_\Lab(F_\lab)$.
Finally we show that $\lambda \in \pref_\Lab(F_\lab)$.
Towards a contradiction assume that there is a $\lambda' \in \adm_\Lab(F_\lab)$
with $\lambda_\lin \subset \lambda'_\lin$.
Let $a$ be an argument such that $\lambda'(a)=\lin$ and $\lambda(a)\in\{\lout,\lundec\}$.
By construction there is either an attack $(\lambda_\lin,a)$ or an attack $(\lambda_\lin \cup \{a\},a)$.
In both cases $\lambda' \not\in \adm_\Lab(F_\lab)$, a contradiction.
Hence, $\lambda \in \pref_\Lab(F_\lab)$.

We complete the proof by showing $\pref_\Lab(F_\lab) \subseteq \lab$:
Consider $\lambda \in \pref_\Lab(F_\lab)$: 
If $\lambda$ maps all arguments to $\lin$ then there is no attack in $R_\lab$ which means that
$\lab$ contains only the labelling $\lambda$.
Thus we can assume that $\lambda(a)=\lout$ for some argument $a$ and there is $(B,a) \in R_\lab$
with  $\lambda(b)=\lin$ for all $b \in B$.
By construction there is $\lambda' \in \lab$ such that $\lambda'_\lin=B$.
Then by construction we have $(B,c) \in R_\lab$ for all $c$ with $\lambda'(c)= \lout$
and $(B \cup \{c\},c) \in R_\lab$ for all $c$ with $\lambda'(c)= \lundec$.
We obtain that $\lambda'_\lin = B = \lambda_\lin$ and thus $\lambda=\lambda'$.	 
\subsection*{Proof of Proposition \ref{prop: cf}}

We first show that for each SETAF $F$ the set $\cf_\Lab(F)$ satisfies the conditions of the proposition.
The first condition is satisfied as clearly all $\lambda \in \cf_\Lab(F)$ have the same domain.
Now, assume that $\lambda \in \cf_\Lab(F)$ assigns an argument $a$ to $\lout$.
By the definition of conflict-free labellings there is an attack $(B,a)$
such that all arguments $b\in B$ are labeled $\lin$.
Thus Condition (2) is satisfied.	 
For Condition (3),  towards a contradiction assume that $(C,\emptyset, \Args{\lab} \setminus C)$
is not conflict-free. Then there is an attack $(B,a)$ such that $B \cup \{a\} \subseteq C$.
But then also $B \cup \{a\} \subseteq \lambda_\lin$ and thus $\lambda \not\in \cf_\Lab(F)$,
a contradiction.
Condition (4) is satisfied as in the definition of conflict-free labellings 
there are no conditions for label an argument $\lundec$. Further, the conditions that allow to label an argument $\lout$ solely depend on the $\lin$ labeled arguments. Since $\lambda_\lout\setminus C\subseteq \lambda_{\lout}$, the condition for arguments labeled $\lout$ is satisfied.   
For Condition (5) consider $\lambda, \lambda' \in \cf_\Lab(F)$ with $\lambda_\lin \subseteq \lambda'_\lin$ 
and $\lambda^*=(\lambda'_\lin,\lambda_\lout \cup \lambda'_\lout, \lambda_\lundec \cap \lambda'_\lundec)$.
First there cannot be an attack $(B,a)$ such that $B \cup \{a\} \subseteq \lambda^*_\lin$
as $\lambda' \in \cf_\Lab(F)$.
Hence, $\lambda'_\lin \cap \lambda_\lout =\emptyset$ and thus $\lambda^*$ is a well-defined labelling.
Moreover, for each $a$ with $\lambda^*(a)=\lout$ there
is an attack $(B,a)$ with $B \subseteq \lambda^*_\lin$ as either $\lambda(a)=\lout$
or $\lambda'(a)=\lout$.
Thus, $\lambda^*\in \cf_\Lab(F)$ and therefore the condition  holds.
For Condition (6) consider $\lambda, \lambda' \in \cf_\Lab(F)$ and
a set $C\subseteq\lambda_{\lout}$ containing an argument $a$ such that $\lambda(a)=\lout$.
That is, there is an attack $(B,a)$ with $B \subseteq \lambda_\lin$
and thus $\lambda_\lin \cup C \not\subseteq \lambda'$.
That is, Condition (6) is satisfied.\smallskip

Now assume that $\lab$ satisfies all the conditions. We give a SETAF $F_\lab=(A_\lab,R_\lab)$
satisfying $\cf_\Lab(F_\lab) = \lab$, where
\begin{align*}
A_{\lab} &= \Args{\lab}\\
R_\lab   &= \{(\lambda_{\lin}, a) \mid \lambda \in \lab, \lambda(a)=\lout\}  \cup 
\{ (B, b) \mid b\in B, \nexists \lambda\in \lab: \lambda_\lin=B \}
\end{align*}
We first show $\cf_\Lab(F_\lab) \supseteq \lab$:
Consider an arbitrary $\lambda \in \lab$: 
First, if $\lambda(a)=\lout$ for some argument
$a$ then by construction and Condition (2) we have an attack $(\lambda_{\lin}, a)$ 
and thus $a$ is legally labeled $\lout$.
Now towards a contradiction assume there is a conflict $(B,a)$ such that $B \cup \{a\} \subseteq \lambda_{\lin}$.
By Condition (3) it cannot be the case that $a \in B$.
Thus, by construction there is a $\lambda' \in \lab$  with $\lambda'_{\lin}=B$,
a contradiction to Condition (6).
Thus, $\lambda \in \cf_\Lab(F_\lab)$.

We complete the proof by showing $\cf_\Lab(F_\lab) \subseteq \lab$:
Consider $\lambda \in \cf_\Lab(F_\lab)$: 
If $\lambda$ maps all arguments to $\lin$ then there is no attack in $R_\lab$ which means that
$\lab$ contains only the labelling $\lambda$.
Thus we can assume that $\lambda(a)\in\{\lout,\lundec\}$ for some argument $a$.		
If $\lambda_\lin \not= \lambda'_\lin$ for all $\lambda' \in \lab$ then by construction of the second part of $R_\lab$ there would be attacks $(\lambda_\lin, b)$ for all $b \in \lambda_\lin$, which 
is in contradiction to $\lambda \in \cf_\Lab(F_\lab)$.
Thus, there is $\lambda' \in \lab$ such that $\lambda'_\lin=\lambda_\lin$.
For arguments $a$ with $\lambda(a)=\lout$ there is an attack $(B,a)$ with $B \subseteq \lambda_\lin$
and, by construction, a $\lambda^* \in \lab$ such that $\lambda^*_{\lin}=B$ and $\lambda^*(a)=\lout$.
By the existence of $\lambda' \in \lab$ and Condition (5) we have that there exists
$\lambda'' \in \lab$ such that $\lambda_\lin=\lambda''_\lin$,
$\lambda'_\lout \subseteq \lambda''_\lout$ and $a \in \lambda''_\lout$.
By iteratively applying this argument for each argument a with $\lambda(a)=\lout$
we obtain that there is a labelling $\hat{\lambda} \in \lab$ such that 
$\lambda_\lin=\hat{\lambda}_\lin$ and $\lambda_\lout \subset \hat{\lambda}_\lout$.
By Condition (4) we obtain that $\lambda \in\lab$.	

\subsection*{Proof of Proposition~\ref{prop:sig_grd}}

We first show that for each SETAF $F$ the set $\grd_\Lab(F)$ satisfies the conditions of the proposition. Towards a contradiction assume that there are  $\lambda, \lambda'\in\grd_\Lab$ with $\lambda\not=\lambda'$. By the definition of grounded labelling $\lambda_\lin$ $\lambda_{\lin}$ are  $\subseteq$-minimal among all complete labellings, thus, $\lambda_{\lin}=\lambda_{\lin}'$. Assume that $\lambda_{\lout}\subset\lambda_{\lout}'$. Since each grounded labelling is conflict-free,  for each $a$ with $a\in\lambda_{\lout}'$ there is $(B, a)$ such that $B\subseteq\lambda_{\lin}'$. Since $\lambda_{\lin}=\lambda_{\lin}'$, $a\in\lambda_{\lout}$. Therefore, $\lambda=\lambda'$. Now, assume that $\lambda\in\grd_\Lab(F)$ assigns an argument $a$ to $\lout$. By the definition of conflict-free labeling there is an attack $(B, a)$ such that $B\subseteq\lambda_{\lin}$. 

Now assume that $\lab$ satisfies all the conditions. We give a SETAF $F_\lab=(A_\lab,R_\lab)$
with $\grd_\Lab(F_\lab) = \lab$. We set
\begin{align*}
A_{\lab} &= \Args{\lab}\\
R_\lab   &= \{(\lambda_{\lin}, a) \mid \lambda \in \lab, \lambda(a)=\lout\}  \cup 
\{ (\lambda_{\lin} \cup \{a\}, a) \mid \lambda \in \lab, \lambda(a)=\lundec\}
\end{align*}
Consider the unique $\lambda\in \lab$ 
and the unique $\lambda^G \in \grd_\Lab(F_\lab)$.
For each argument $a \in \lambda_\lin$ we have that $a$ is not attacked in  $F_\lab$
and thus $a \in \lambda^G_\lin$.
For each argument $a \in \lambda_\lout$ there is an attack  $(\lambda_{\lin}, a)$ in $F_\lab$
and as $\lambda_\lin \subseteq \lambda^G_\lin$ by the definition of complete labellings 
we have $a \in \lambda^G_\lout$.
Finally for each argument $a \in \lambda_\lundec$ the attack $(\lambda_{\lin} \cup \{a\}, a)$ is the only attack towards $a$ in $F_\lab$. 
Thus, by the definition of complete labellings, we have that $a$ is neither labelled $\lin$ nor $\lout$ in $F_\lab$ and therefore $a \in \lambda^G_\lundec$.
We obtain that $\lambda^G = \lambda$ and thus $\grd_\Lab(F_\lab) = \lab$.

\subsection*{Proof of Theorem~\ref{thm: real SETAF and SFADF}}

$\Sigma_{\SETADF}^{\sigma}\subseteq \Sigma_{SFADF}^{\sigma}$ follows from Lemma~\ref{lem: SETADF is a SFADF}.
For showing $\Sigma_{\SETADF}^\adm\subsetneq\Sigma_{\SFADF}^\adm$, let $\mathbb{V}=\{\{a\mapsto\tvu, b\mapsto\tvu\}, \{a\mapsto\tvu, b\mapsto\tvf\}, \{a\mapsto\tvt, b\mapsto\tvf\}\}$ be an interpretation-set. A witness of $\adm$-realizability of $\mathbb{V}$  in SFADFs is $D=(\{a, b\}, \{\varphi_{a}=\neg a\lor\neg b, \varphi_b=\bot\})$. However, $\mathbb{V}$ is not realizable by any SETADF for admissible interpretations (cf.~Proposition~\ref{prop:sig_adm}).
To show $\Sigma_{\SFADF}^\sigma\not\subseteq \Sigma_{\SETADF}^\sigma $, for $\sigma\in\{ \stb, \model, \com, \pref, \grd\}$, let $\mathbb{V}=\{\{a\mapsto\tvf\}\}$. The interpretation  $\mathbb{V}$ is $\sigma$-realizable in SFADFs for $\sigma\in\{ \stb, \model, \com, \pref, \grd\}$, and a witness of $\sigma$-realizability of $\mathbb{V}$ in SFADFs is $D=(\{a\}, \{\varphi_{a}=\bot \})$. However, $\mathbb{V}$ cannot be realized by any SETADF for semantics
$\sigma\in\{ \adm, \stb, 
\pref, \grd\}$
(cf. Propositions~\ref{prop:sig_stb}--\ref{prop:sig_grd}). 
The result for $\sigma=\mod$ follows from Proposition~\ref{prop:mod=stb} and
for $\sigma=\comp$ by $|\mathbb{V}|=1$ (i.e.\ complete and grounded semantics have to coincide). Further,  $\cf(D)$ is not $\cf$-realizable with any SETADF. 

\begin{lemma}\label{lemma: K is not adm-rea in SETAF}
Given an   interpretation-set  $\mathbb{V}\in \Delta_\sigma$,
for $\sigma\in\{\adm, \stb, \model,$ $ \com, \pref, \grd\}$. 
Let $v\in \mathbb{V}$ be a non-trivial interpretation in which $v(a)=\tvf/\tvu$, for each argument $a$. In all SFADFs that realize $\mathbb{V}$ under $\sigma$, the acceptance conditions of all arguments assigned to $\tvf$ by $v$ are equal to $\bot$. 
\end{lemma}
\begin{proof}
Let $D$ be a SFADF that realizes $\mathbb{V}$ under $\sigma$, for $\sigma\in\{\adm, \stb, \model, \com,$ $ \pref, \grd\}$. Let $v\in \mathbb{V}$ be an non-trivial interpretation that assigns all arguments either to $\tvf$ or $\tvu$.  Towards a contradiction,   assume that there exists  an argument $a$ which is assigned to $\tvf$ by $v$, and $\varphi_{a}\not=\bot$ in $D$. 
First we show that $\mathbb{V}$ cannot be $\adm$-realizable in SFADFs.  Since $a$ is assigned to $\tvf$ in $v$ the acceptance condition of $a$ cannot be equal to $\top$. By Lemma~\ref{lemma: SFADF in CNF}, the acceptance condition of $a$ is in CNF and having only negative literals. Since all $b\in \parents{a}$ are either assigned to $\tvf$ or $\tvu$ by $v$, $\varphi_{a}^v$ cannot be unsatisfiable. That is, $v(a)\not\leq_i\Gamma_D(v)(a)$. Therefore, $v$ is not an admissible interpretation of $D$. Thus, any $\mathbb{V}$ that contains $v$ is not $\adm$-realizable in SFADF. To complete the proof it remains to see that for each of the remaining semantics, 
each $\sigma$-interpretation is also admissible.
\end{proof}

\subsection*{Proof of Theorem \ref{thm: delta= sfadf setadf}}
To show that 
$ \Delta_\sigma =
\{ \mathbb{V} \in \Sigma_{\SFADF}^{\sigma} \mid \exists v\in \mathbb{V} \text{ s.t. } \forall a: v(a)\in\{ \tvf,\tvu\} \land \exists a: v(a)= \tvf\}$, let $\mathbb{V}$ be an arbitrary interpretation-set of $\Delta_\sigma$. 
By the definition of $\Delta_\sigma$, $\mathbb{V}\in \Sigma_{\SFADF}^{\sigma}$ and $\mathbb{V}\not\in \Sigma_{\SETADF}^{\sigma}$. It remains to show that there exists $v\in \mathbb{V}$ that assigns at least an argument to $\tvf$ but none of the arguments to $\tvt$.
Towards a contradiction, assume that there exists no such interpretation
and let $D=(S, L, C$ be an arbitrary $\SFADF$ with $\sigma(\SFADF)=\mathbb{V}$.
Notice that by Lemma~\ref{lemma: SFADF in CNF} all acceptance conditions of $D$ that are not equal to $\bot$
can be transformed to be in SETADF form. 
Thus we can focus on the arguments with acceptance condition $\bot$.
As, under the above assumption, each $v\in \mathbb{V}$ that assigns an argument to $\tvf$
also assigns an argument $b$ to $\tvt$ it is easy to verify that we can replace 
$\bot$ acceptance conditions by $\bigwedge_{s\in S} \neg s$ without changing the semantics.
That is, we can transform $D$ to an equivalent SETADF and thus 
$\mathbb{V} \in \Sigma_{\SETADF}$. This is a contradiction by the definition of $\Delta_\sigma$ and 
we obtain that there exists $v\in \mathbb{V}$ that assigns all arguments to either $\tvf$ or $\tvu$. 

On the other hand, let $\mathbb{V}$ be an interpretation-set that is $\sigma$-realizable in SFADF such that there exists $v\in \mathbb{V}$ that assigns at least one argument to $\tvf$ and none of the arguments to $\tvt$. We show that $\mathbb{V}\not\in\Sigma_{SETADF}^{\sigma}$. By Lemma~\ref{lemma: K is not adm-rea in SETAF}, 
in any $\SFADF$  with $\sigma(\SFADF)=\mathbb{V}$ the acceptance conditions of all arguments assigned to $\tvf$ by $v$ are equal to $\bot$. Therefore, $D$ is not $\sigma$-realizable in any SETADF.
That is, $\mathbb{V}\in \Delta_\sigma$. 

\subsection*{Proof of Proposition~\ref{prop: V in  K, for prf, stb, mod, V=1}}

Consider $\mathbb{V}\in \Delta_\sigma$, for $\sigma\in\{\stb, \model, \prf\}$ and 
let $v\in \mathbb{V}$ be an interpretation that assigns all arguments to either $\tvf$ or $\tvu$ (since $\mathbb{V}\in \Delta_\sigma$, such a $v$ exists). By Lemma~\ref{lemma: K is not adm-rea in SETAF}, the acceptance condition of all arguments that are assigned to $\tvf$ by $v$ is equal to $\bot$ in all SFADFs that realize $\mathbb{V}$ under $\sigma\in\{\stb, \model, \prf\}$. Let $D=(S, L, C)$ be a witness  of  $\sigma$-realizibility of $\mathbb{V}$ in SFADFs, under $\sigma\in\{\stb, \model, \prf\}$. 

First, if all arguments are assigned to $\tvf$ in $v$, the acceptance conditions of all arguments are $\bot$ in  SFADF $D$ and $|\sigma(D)|= 1$. 
Now assume that  $v$ assigns some  arguments to $\tvu$. Thus, $V$ cannot be $\model$ or $\stb$-realized in any ADF.   It remains to consider $\pref$ semantics. 
Let $B = \{s \in S \mid v(b)=\tvu\}$. For each $s\in S\setminus B$, by Lemma~\ref{lemma: K is not adm-rea in SETAF},  $\varphi_s=\bot$ in $D$. Therefore, in all $v'\in \mathbb{V}$,  $v'(s)=\tvf$ for $s\in S\setminus B$.  For each $v'\not =v$ in $\mathbb{V}$  there exists  at least $b\in B$ such that $v'(b)\not=\tvu$, therefore, $v< v'$.
By the definition of preferred interpretations $v$ cannot be a preferred interpretation.
Thus, $|\pref(D)|= 1$ and therefore, the assumption $|\mathbb{V}|=1$.
Summarizing the two cases we have that interpretation set $\mathbb{V}\in \Delta_\sigma$,  
for $\sigma\in\{\stb, \model, \prf\}$ consist of only one interpretation.

\nop{\section{Proofs of Section~\ref{sec: expressiveness of SFADF}}

In order to compare SETAFs with SFADFs, we can rely on SETADFs (recall Theorem~\ref{thm:setaf_setadf}). In particular, we will compare the signatures
$\Sigma_{SETADF}^{{\sigma}}$ and
$\Sigma_{SFADF}^{{\sigma}}$, cf.\ Definition~\ref{def:sig}.
We start with the observation that each SETADF can be rewritten as an equivalent SETADF  that is also a SFADF.\footnote{
As discussed in~\cite{Polberg17}, in general, SETAFs translate to bipolar ADFs
that contain attacking and redundant links. 
Thus, when we consider SETAFs without redundant attacks we obtain a SFADF.}

\begin{lemma}\label{lem: SETADF is a SFADF}
For each SETADF  $D=(S, L, C)$  there is an equivalent SETADF $D'=(S, L', C')$
that is also a SFADF, i.e.\
for each $s \in S$, $\varphi_s \in C$, $\varphi'_s \in C'$ we have 
$\varphi_s \equiv \varphi'_s$. 
\end{lemma}
\begin{proof}
Given a SETADF $D$, by Definition~\ref{def: SETADF}, each acceptance condition is a CNF over 
negative literals and thus does not have any support link which is not redundant. 
We can thus obtain $L'$ by removing the redundant links from $L$
and $C'$ by, in each acceptance condition, deleting the clauses that are super-sets of other clauses.
\end{proof}		

We next recall a  characterisation of the acceptance conditions of SFADF that can be rewritten as collective attacks.

\begin{lemma}\label{lemma: SFADF in CNF}
Let $D=(S, L, C)$ be a SFADF. If $s\in S$ has at least one incoming link
then the acceptance condition 
$\varphi_{s}$ can be written in 
CNF containing only negative literals.  
\end{lemma}
\begin{proof}
Since the acceptance condition of each argument in an ADF is given by a propositional formula, it can be transformed to CNF. It remains to show that each of the resulting formulas in CNF can be transformed into a CNF that consists of only negative literals.
Let $\varphi_{s}$ be the acceptance condition of an argument $s$ with an incoming link $(t,s)$
in CNF 
that contains $t$ as positive literal. 
If $t$ doesn't not appear in any model of the formula $\varphi_{s}$ we can safely delete the literal $t$ from each clause of the CNF to obtain an equivalent CNF without $t$.
Otherwise let $v$ be a model of $\varphi_{s}$ with $v(t)= \tvt$ then, as $(t,s)$ is attacking, 
we have that $\update{v}{t}{\tvf}$ is a model of $\varphi_{s}$. Again we can safely delete the literal $t$ from each clause of the CNF to obtain an equivalent CNF without $t$.
That is, we can iteratively remove positive literals from the CNF to obtain a CNF with only negative literals.
\end{proof}

It remains to consider those arguments in an SFADF with no incoming links. These
arguments allow for only two acceptance conditions $\top$ and $\bot$. 
While condition $\top$ is unproblematic (it refers to an initial argument in a 
SETAF), an argument with unsatisfiable acceptance condition cannot be modeled in a SETADF.
One can use this fact to show that there are some differences in the expressiveness between SETADFs and SFADFs.

\begin{theorem}\label{thm: real SETAF and SFADF}
$\Sigma_{\SETADF}^{\sigma}\subsetneq \Sigma_{SFADF}^{\sigma}$, for $\sigma\in\{\adm, \stb, \model, \com, \pref, \grd\}$.
\end{theorem}
\begin{proof} 
$\Sigma_{\SETADF}^{\sigma}\subseteq \Sigma_{SFADF}^{\sigma}$ follows from Lemma~\ref{lem: SETADF is a SFADF}.
For showing $\Sigma_{\SETADF}^\adm\subsetneq\Sigma_{\SFADF}^\adm$, let $\mathbb{V}=\{\{a\mapsto\tvu, b\mapsto\tvu\}, \{a\mapsto\tvu, b\mapsto\tvf\}, \{a\mapsto\tvt, b\mapsto\tvf\}\}$ be an interpretation-set. A witness of $\adm$-realizability of $\mathbb{V}$  in SFADFs is $D=(\{a, b\}, \{\varphi_{a}=\neg a\lor\neg b, \varphi_b=\bot\})$. However, $\mathbb{V}$ is not realizable by any SETADF for admissible (cf.~Proposition~\ref{prop:sig_adm}).
To show $\Sigma_{\SFADF}^\sigma\not\subseteq \Sigma_{\SETADF}^\sigma $, for $\sigma\in\{ \stb, \model, \com, \pref, \grd\}$, let $\mathbb{V}=\{\{a\mapsto\tvf\}\}$. The interpretation  $\mathbb{V}$ is $\sigma$-realizable in SFADFs for $\sigma\in\{ \stb, \model, \com, \pref, \grd\}$, and a witness of $\sigma$-realizability of $\mathbb{V}$ in SFADFs is $D=(\{a\}, \{\varphi_{a}=\bot \})$. However, $\mathbb{V}$ cannot be realized by any SETADF for semantics
$\sigma\in\{ \adm, \stb, 
\pref, \grd\}$
(cf. Propositions~\ref{prop:sig_stb}--\ref{prop:sig_grd}). 
The result for $\sigma=\mod$ follows from Proposition~\ref{prop:mod=stb} and
for $\sigma=\comp$ by $|\mathbb{V}|=1$ (i.e.\ complete and grounded semantics have to coincide).
\end{proof}

The interpretation-sets $\mathbb{V}$  used in the proof of Theorem~\ref{thm: real SETAF and SFADF} to show that SFADFs are more expressive than SETADFs, for $\sigma\in\{ \adm, \stb, \model, \com, \pref, \grd\}$  are very special interpretation-sets, which are only $\sigma$-realizable by a SFADF containing an argument with unsatisfiable acceptance condition.

In~\cite{Wallner19}, it is shown that the unsatisfiable condition ($\varphi_a=\bot$) has no direct representation in SETAFs and in SETADFs, as well. 
However, there are SFADFs with an unsatisfiable acceptance condition that have an equivalent SETADF,
i.e.\ a SETADF that has the same interpretations.
For instance, the interpretation-set $\mathbb{V}=\{\{a\mapsto\tvf, b\mapsto\tvt\}\}$ can be $\sigma$-realized by SFADF $D=(\{a,b\},\{ \varphi_{a}=\bot, \varphi_b= \top\})$ under $\sigma\in\{ \stb, \model, \com, \pref, \grd\}$. The SFADF $D$ is a witness of $\sigma$-realizability of $\mathbb{V}$ in SFADF and consists of unsatisfiable condition, however, we cannot conclude that $\mathbb{V}$ is not $\sigma$-realizable in SETADFs.
Actually, for the SETADF $D' = (\{a,b\},\{ \varphi_{a}=\neg b, \varphi_b= \top\})$ we have $\mathbb{V}=\sigma(D')$ and thus that $\mathbb{V} \in \Sigma_{\SETADF}^{\sigma}$ for $\sigma\in\{ \stb, \model, \com, \pref, \grd\}$. 

In the remainder of the section we investigate the exact difference in the signatures of SFADFs and SETADFs. 
We already have seen that SFADFs are strictly more expressive than SETADFs, and
as each SFADF without unsatisfiable acceptance condition can be translated into a SETADF, the
interpretations in $\Sigma_{\SFADF}^\sigma \setminus \Sigma_{\SETADF}^\sigma$ must be based on 
unsatisfiable acceptance conditions.
Such acceptance conditions allow for interpretations that
assign an argument to $\tvf$ without assigning an argument to $\tvt$.
We will denote the set of interpretation-sets containing such an interpretation by $\Delta_\sigma$.

\begin{definition}\label{def: K} Let $\sigma$ be a semantics of ADFs.  $\Delta_\sigma$ is a subset of $\Sigma_{\mathcal{\SFADF}}^{\sigma}$ such that:
$$
\Delta_\sigma=\{\sigma(D)\ |\ D\in \SFADF, \exists v\in \sigma(D) \text{ s.t.\ } \forall a \ v(a)\in\{ \tvf,\tvu\} \land \exists a\  v(a)= \tvf\}.
$$
Moreover, we let $\bar{\Delta}_\sigma= \Sigma_{SFADF}^\sigma\setminus\Delta_\sigma$.  
\end{definition} 

We will first show that $\Delta_\sigma$ characterises the difference between SFADFs and SETADFs
and then further investigate the sets $\Delta_\sigma$ for the different semantics. 
We first show 
To start with, we observe that each SFADF realizing an interpretation-set of $\Delta_\sigma$ has an argument
with an unsatisfiable acceptance conditions (and thus is not a SETADF).

\begin{lemma}\label{lemma: K is not adm-rea in SETAF}
Given an   interpretation-set  $\mathbb{V}\in \Delta_\sigma$,
for $\sigma\in\{\adm, \stb, \model,$ $ \com, \pref, \grd\}$. 
Let $v\in \mathbb{V}$ be a non-trivial interpretation in which $v(a)=\tvf/\tvu$, for each argument $a$. In all SFADFs that realize $\mathbb{V}$ under $\sigma$, the acceptance conditions of all arguments assigned to $\tvf$ by $v$ are equal to $\bot$. 
\end{lemma}
\begin{proof}
Let $D$ be a SFADF that realizes $\mathbb{V}$ under $\sigma$, for $\sigma\in\{\adm, \stb, \model, \com,$ $ \pref, \grd\}$. Let $v\in \mathbb{V}$ be an non-trivial interpretation that assigns all arguments either to $\tvf$ or $\tvu$.  Towards a contradiction,   assume that there exists  an argument $a$ which is assigned to $\tvf$ by $v$, and $\varphi_{a}\not=\bot$ in $D$. 
First we show that $\mathbb{V}$ cannot be $\adm$-realizable in SFADFs.  Since $a$ is assigned to $\tvf$ in $v$ the acceptance condition of $a$ cannot be equal to $\top$. By Lemma~\ref{lemma: SFADF in CNF}, the acceptance condition of $a$ is in CNF and having only negative literals. Since all $b\in \parents{a}$ are either assigned to $\tvf$ or $\tvu$ by $v$, $\varphi_{a}^v$ cannot be unsatisfiable. That is, $v(a)\not\leq_i\Gamma_D(v)(a)$. Therefore, $v$ is not an admissible interpretation of $D$. Thus, any $\mathbb{V}$ that contains $v$ is not $\adm$-realizable in SFADF. To complete the proof it remains to see that for each of the remaining semantics, 
each $\sigma$-interpretation is also admissible.
\end{proof}

From Lemma~\ref{lemma: K is not adm-rea in SETAF}, and the fact  that unsatisfiable conditions do not have a direct analogue in SETAFs, via~\cite{Wallner19}, we have the following theorem.
\begin{theorem}\label{thm: K is not real in SETAF}
Given an interpretation-set $\mathbb{V}\in \Delta_\sigma$. The interpretation-set $\mathbb{V}$ is not $\sigma$-realizable in SETADFs, for $\sigma\in\{\adm, \stb, \prf, \model, \com, \grd \}$. 
\end{theorem}

By the above we have that all interpretation-sets in $\Delta_\sigma$ are not realizable with SETADFs.
It remains to show that all other interpretation-sets in $\Sigma_{SFADF}^\sigma$
can be realized with SETADFs, i.e.\ we have to show that $\bar{\Delta}_\sigma= \Sigma_{\SETADF}^\sigma$. For admissible semantics, the following lemma immediatly yields 
this result. For the remaining semantics, some additional consideration is required.

\begin{lemma}\label{lem: V not in K, acc is not unsat}
Given an interpretation-set $\mathbb{V}\in\bar{\Delta}_\adm$. 
Each SFADF that realizes $\mathbb{V}$ has no unsatisfiable acceptance condition.
\end{lemma}
\begin{proof}
Given  an arbitrary interpretation-set $\mathbb{V}\in\bar{\Delta}_\adm$ 
Suppose to the contrary that there exist  a SFADF $D=(S, L, C)$ such that $\adm(D)=\mathbb{V}$ and there exists $s\in S$ such that $\varphi_{s}=\bot$. Then, an interpretation $v$ that assigns $s$ to $\tvf$ and all other arguments of $S$ to $\tvu$ is an admissible interpretation of $D$; 
a contradiction with the assumption that $\mathbb{V}\not\in \Delta_\adm$. 
\end{proof}

\begin{proposition}\label{prop: real in SETAF by unsat}
$\Sigma_{SFADF}^\sigma = \Sigma_{\SETADF}^\sigma \cup {\Delta}_\sigma$, for $\sigma\in\{\adm, \stb, \grd, \prf, \model, \com\}$.
\end{proposition}

\begin{proof}
By Lemma~\ref{lem: V not in K, acc is not unsat} and Lemma~\ref{lemma: SFADF in CNF}
each $\mathbb{V}\in\bar{\Delta}_\adm$ can be realized as SETADF.

It remains to show that each $\mathbb{V} \in \bar{\Delta}_\sigma$ is $\sigma$-realizable in SETADFs, for $\sigma\in\{ \stb, \grd, \prf, \model, \com\}$.
Consider a witness $D=(S, L, C)$ of $\sigma$-realizability of  $\mathbb{V}$ in SFADFs. 
If there is no argument with acceptance condition $\bot$, then $\mathbb{V}$ is $\sigma$-realizable in SETADFs by Lemma~\ref{lemma: SFADF in CNF}. 
Assume there are arguments $s_1, \dots, s_\ell$ with acceptance condition $\bot$ and thus 		
$s_i$ is denied by any $v_i\in \mathbb{V}$.
Notice that as $\mathbb{V} \in \bar{\Delta}_\sigma$, we then have that each $v_i\in \mathbb{V}$ assigns at least one argument to $\tvt$.
For each $v_i \in \mathbb{V}$ let $b_i$ be an argument such that $v_i(b_i)=\tvt$.
We construct a SETADF $F_D=(S, L', C')$ such that $C'$ is a collection of $\varphi_{a}'$ 
with 
\[
\varphi'_{a} =\begin{cases}
\varphi_{a} & \qquad \text{ if }\varphi_{a}\not=\bot\\
\neg a \land \bigwedge_{v_i \in \mathbb{V}} \neg b_i & \qquad \text{ otherwise. }
\end{cases}
\]

\nop{	\[
\varphi'_{a} =\begin{cases}
\neg a \land \bigwedge_{v_i \in \mathbb{V}} \neg b_i & \qquad a\in S'\\
\varphi_{a}& \qquad \text{ otherwise. }
\end{cases}
\]}
It is now easy to verify that $\mathbb{V}=\sigma(F_D)$ and as, by construction $F_D$,
has no argument with acceptance condition $\bot$, by Lemma~\ref{lemma: SFADF in CNF},
$\mathbb{V}$ is $\sigma$-realizable in SETADFs.
\end{proof}

In the proof of Proposition~\ref{prop: real in SETAF by unsat}, it is shown constructively how an interpretation-set that  is $\sigma$-realizable in SFADFs for $\sigma\in\{\stb, \grd, \prf, \model, \com\}$ and where each element of the interpretation accepts at least one argument, can be realized by a SETADF.	

\begin{example}
Let $\mathbb{V}=\{\{a\mapsto \tvf, b\mapsto \tvt, c\mapsto\tvt, d\mapsto\tvf\}, \{a\mapsto \tvf, b\mapsto \tvt, c\mapsto\tvf, d\mapsto\tvt\}\}$ be an interpretation-set. A witness of $\sigma$-realizability of $\mathbb{V}$ in SFADFs, for $\sigma\in\{\stb, \prf, \model\}$ can be $D=(\{a,b, c, d\}, \{\varphi_{a}= \bot, \varphi_b= \neg a, \varphi_c= \neg d, \varphi_d= \neg c\})$. By the construction in the proof of Proposition~\ref{prop: real in SETAF by unsat}, a SETADF $F_D$ constructed based on SFADF $D$ is, $F_D=(\{a,b, c, d\}, \{\varphi_{a}= \neg a \land \neg b, \varphi_b= \neg a, \varphi_c= \neg d, \varphi_d= \neg c\})$.   
\end{example}

Theorem~\ref{thm: exp of SFADF and SETAF} summarizes our results and shows that $\Delta_\sigma$ is the only  set of interpretation-sets that cannot be realized by any SETADF, for $\sigma\in\{\adm, \stb, \model, \com, \pref, \grd\}$.
\begin{theorem}\label{thm: exp of SFADF and SETAF}
$\Sigma_{\SETADF}^{\sigma} = \bar{\Delta}_\sigma $, for $\sigma\in\{\adm, \stb, \model, \com, \pref, \grd\}$. 
\end{theorem}

Given that the difference in $\Sigma_{\SETADF}^{\sigma}$ and $\Sigma_{\SFADF}^{\sigma}$ is captured by the 
sets $\Delta_\sigma$ we further investigate the properties of these sets.
Interestingly, for $\sigma\in\{\stb, \model, \prf\}$ the set $\Delta_\sigma$ only contains
interpretation-sets with a single interpretation.

\begin{lemma}\label{lem: V in  K, for prf, stb, mod, V=1}
For $\mathbb{V} \in \Delta_\sigma$ and $\sigma\in\{\stb, \model, \prf\}$  
we have	$|\mathbb{V}|=1$.
For $\sigma\in\{\stb, \model\}$ the unique $v \in \mathbb{V}$ assigns all arguments to $\tvf$.
\end{lemma}
\begin{proof}
Towards a contradiction assume that there exists $\mathbb{V}\in \Delta_\sigma$, for $\sigma\in\{\stb, \model, \prf\}$, such that $|\mathbb{V}|>1$. Let $v\in \mathbb{V}$ be an interpretation that assign all arguments to either $\tvf$ or $\tvu$ (since $\mathbb{V}\in \Delta_\sigma$, such a $v$ exists). By Lemma~\ref{lemma: K is not adm-rea in SETAF}, the acceptance condition of all arguments that are assigned to $\tvf$ by $v$ is equal to $\bot$ in all SFADFs that realize $\mathbb{V}$ under $\sigma\in\{\stb, \model, \prf\}$. Let $D=(S, L, C)$ be a witness  of  $\sigma$-realizibility of $\mathbb{V}$ in SFADFs, under $\sigma\in\{\stb, \model, \prf\}$. If all arguments are assigned to $\tvf$ in $v$, the acceptance conditions of all arguments are $\bot$ in  SFADF $D$. Thus, $|\sigma(D)|= 1$ for $\sigma\in\{\stb, \model, \prf\}$. This is a contradiction to the assumption that $|\mathbb{V}|>1$.

Assume that  $v$ assigns some  arguments to $\tvu$. Thus, $V$ cannot be $\model$ or $\stb$-realized in any ADF.   It remains to show that the interpretation-set $\mathbb{V}$ in question  is not $\pref$-realizable. Let $B\subset S$ such that  $v(b)=\tvu$ for $b\in B$. For each $s\in S\setminus B$, by Lemma~\ref{lemma: K is not adm-rea in SETAF},  $\varphi_s=\bot$ in $D$. Therefore, in all $v'\in \mathbb{V}$,  $v'(s)=\tvf$ for $s\in S\setminus B$.  For each $v'\not =v$ in $\mathbb{V}$  there exists  at least $b\in B$ such that $v'(b)\not=\tvu$, therefore, $v< v'$.
By the definition of preferred interpretations $v$ cannot be a preferred interpretation.
Therefore, the assumption $|\mathbb{V}|>1$ is not correct. 
Thus, if $\mathbb{V}\in \Delta_\sigma$,  
for $\sigma\in\{\stb, \model, \prf\}$, then $\mathbb{V}$ consist of only one interpretation.
\end{proof}

In other words each interpretation-set which is $\sigma$-realizable in SFADFs and contains at least 
two interpretation can be realized in SETADFs, for $\sigma\in\{\stb, \prf, \model\}$.
However, this is not a sufficient condition for admissible and complete semantics,  shown in  Example~\ref{exp: com realizable in SETAFs}.
\begin{example}\label{exp: com realizable in SETAFs}
Let $\mathbb{V}=\{\{a\mapsto \tvf, b\mapsto \tvu, c\mapsto \tvu\}, \{a\mapsto \tvf, b\mapsto \tvt, c\mapsto \tvf\}, \{a\mapsto \tvf, b\mapsto \tvf, c\mapsto \tvt\}\}$. A witness of $\com$-realizability of $\mathbb{V}$ in SFADFs can be $D=(\{a, b, c\}, \{\varphi_a= \bot, \varphi_b= \neg c, \varphi_c=\neg b \})$ and $\mathbb{V}'=\grd(D)=\{\{a\mapsto \tvf, b\mapsto \tvu, c\mapsto \tvu\}\}$. However, there is no SETADF that realizes 
$\mathbb{V}'$ under $\grd$ (cf.\ 
Proposition~\ref{prop:sig_grd}); as follows from our final result below there cannot be SETADF that
realizes
$\mathbb{V}$ under $\com$
either.
\end{example}

\begin{theorem}
Let $\mathbb{V} \in \Sigma_{\SFADF}^{\com}$ and $g=\bigsqcap_{v\in \mathbb{V}} v$, then
$\mathbb{V} \in \Sigma_{\SETADF}^{\com}$ if and only if $g$ is $\grd$-realizable in SETADFs. 
\end{theorem}
\begin{proof}
Let $D=(S, L, C)$ be a witness of realizability of $\mathbb{V}$ in SFADFs under complete semantics. Assume that  $\mathbb{V}$ is $\com$-realizable in SETADFs by $D'=(S,L', C')$  then it is clear that $g=\grd(D)=\grd(D')$ and thus $g$ is $\grd$-realizable in SETADFs.  
To show the if part of the theorem,  assume that $g$ is $\grd$-realizable in SETADFs. 
By Theorem~\ref{thm: K is not real in SETAF}, either (a) there exists $s\in S$ which is assigned to $\tvt$ in $g$ or (b) all $s \in S$ are assigned to $\tvu$. In case (a) this $s$ is assigned to $\tvt$ by any $v\in \mathbb{V}$. By the method presented in  the proof of Proposition~\ref{prop: real in SETAF by unsat} this $\mathbb{V}$ is $\com$-realizable in SETAFs. 
In case (b) we have that in $D$ there is no argument with acceptance condition $\bot$ (or $\top$) and 
thus by Lemma~\ref{lemma: SFADF in CNF} $D$ is equivalent to a SETADF.

\end{proof}}

\end{document}